\documentclass{article}

\usepackage{microtype}
\usepackage{graphicx}
\usepackage{booktabs} % for professional tables
\usepackage{moresize}

\usepackage{hyperref}
\usepackage{url}
\usepackage{times}
\usepackage{helvet}
\usepackage{courier}
\usepackage{natbib} 
\usepackage{caption}
\usepackage{algorithm}
\usepackage{algorithmic}
\usepackage{comment}
\usepackage[utf8]{inputenc} % allow utf-8 input
\usepackage[T1]{fontenc}    % use 8-bit T1 fonts
\usepackage{booktabs}       % professional-quality tables
\usepackage{multirow}
\usepackage{wrapfig}
\usepackage{array}

\usepackage{amsthm}
\usepackage{amssymb}
\usepackage{amsmath}
\usepackage{amsfonts}       % blackboard math symbols

\usepackage{mathtools}

\usepackage{dsfont}
\usepackage{nicefrac}       % compact symbols for 1/2, etc.
\usepackage{microtype}      % microtypography
\usepackage[table,xcdraw]{xcolor}         % colors
\usepackage{newfloat}
\usepackage{siunitx}

\usepackage{xr}
\usepackage{adjustbox}
\usepackage{subfigure}
\usepackage{subcaption}
\usepackage{enumitem}
\usepackage{listings}

\usepackage{textcomp}
\usepackage{soul}
\usepackage{tablefootnote}

\newtheorem{theorem}{Theorem}[section]
\newtheorem{lemma}[theorem]{Lemma}
\newtheorem{corollary}[theorem]{Corollary}
\newtheorem{assumption}[theorem]{Assumption}

\usepackage{hyperref}

\usepackage[accepted]{icml2026}

\usepackage{amsmath}
\usepackage{amssymb}
\usepackage{mathtools}
\usepackage{amsthm}

\usepackage[capitalize,noabbrev]{cleveref}

\usepackage[textsize=tiny]{todonotes}

\icmltitlerunning{FedCGNM and FedHOO to Tackle Class Imbalance in Federated Learning}

\begin{document}

\twocolumn[
  \icmltitle{Class-Grouped Normalized Momentum and Faster Hyperparameter Exploration to Tackle Class Imbalance in Federated Learning}

  % It is OKAY to include author information, even for blind submissions: the
  % style file will automatically remove it for you unless you've provided
  % the [accepted] option to the icml2026 package.

  % List of affiliations: The first argument should be a (short) identifier you
  % will use later to specify author affiliations Academic affiliations
  % should list Department, University, City, Region, Country Industry
  % affiliations should list Company, City, Region, Country

  % You can specify symbols, otherwise they are numbered in order. Ideally, you
  % should not use this facility. Affiliations will be numbered in order of
  % appearance and this is the preferred way.
  \icmlsetsymbol{equal}{*}

  \begin{icmlauthorlist}
    \icmlauthor{Haemin Park}{nu}
    \icmlauthor{Diego Klabjan}{nu}
    \icmlauthor{Martin W. Braun}{intel}
    \icmlauthor{Xiuqi Li}{intel}
    \icmlauthor{Balakrishnan Ananthanarayanan}{intel}
  \end{icmlauthorlist}
  
  \icmlaffiliation{nu}{Department of Industrial Engineering \& Management Sciences, Northwestern University, Evanston, IL, USA}
  \icmlaffiliation{intel}{Intel Corporation, Chandler, AZ, USA}

  \icmlcorrespondingauthor{Haemin Park}{haemin.park1@northwestern.edu}
  % \icmlcorrespondingauthor{Diego Klabjan}{d-klabjan@northwestern.edu}

  % You may provide any keywords that you find helpful for describing your
  % paper; these are used to populate the "keywords" metadata in the PDF but
  % will not be shown in the document
  \icmlkeywords{Federated Learning, Class Imbalance, Class Grouping, Oversampling, Resampling, Hyperparameter Search, Machine Learning, ICML}

  \vskip 0.3in
]

% this must go after the closing bracket ] following \twocolumn[ ...

% This command actually creates the footnote in the first column listing the
% affiliations and the copyright notice. The command takes one argument, which
% is text to display at the start of the footnote. The \icmlEqualContribution
% command is standard text for equal contribution. Remove it (just {}) if you
% do not need this facility.

% Use ONE of the following lines. DO NOT remove the command.
% If you have no special notice, KEEP empty braces:
\printAffiliationsAndNotice{}  % no special notice (required even if empty)
% Or, if applicable, use the standard equal contribution text:
% \printAffiliationsAndNotice{\icmlEqualContribution}

\begin{abstract}
    Class imbalance poses a critical challenge in federated learning (FL), where underrepresented classes suffer from poor predictive performance yet cannot be addressed by standard centralized techniques due to privacy and heterogeneity constraints. We propose FedCGNM (Federated Class-Grouped Normalized Momentum), a client-side optimizer in FL that partitions classes into a small number of groups based on minimum within-group variance, maintains a momentum per group, normalizes each group momentum to unit length, and uses the summation of the normalized group momentums as an update direction. This design both equalizes gradient magnitude across majority and minority groups and mitigates the noise inherent in rare-class gradients. We further provide a theoretical convergence analysis explicitly accounting for time-varying resampling-rates. Additionally, to efficiently optimize these rates in small-client regimes, we introduce FedHOO, an X-armed-bandit (XAB) based algorithm that exploits federated parallelism that evaluates many combinations of two candidate rates per client at linear cost. Empirical evaluation on four public long-tailed benchmarks and a proprietary chip-defect dataset demonstrates that FedCGNM consistently outperforms baselines, with FedHOO yielding further gains in small-scale federations.
\end{abstract}

%%%%% %%%%% %%%%% %%%%% %%%%%
%%%%%  1. Introduction  %%%%%
%%%%% %%%%% %%%%% %%%%% %%%%%
\section{Introduction}

Class imbalance remains a major challenge in federated learning (FL) despite prior works \citep{zhang2023survey}. When the global label distribution aggregated over all clients is long-tailed, minority classes are underrepresented in training, which degrades their predictive accuracy. Prior works in centralized learning mitigate imbalance through loss reweighting, advanced sampling, or generative augmentation, but these techniques are difficult to deploy in FL because privacy constraints prevent data exchange and synthetic generators are either infeasible or produce unrealistic samples for sparse regimes or domain-specific tasks such as defect detection. For instance, when working with a chip-defect dataset, one of our primary evaluation use case in this paper, synthetic defect images fail to capture the true geometric details of actual defects. To address these challenges, we focus on tackling class imbalance at the optimization level. Unlike data-level augmentations, an optimization-based approach is more generally applicable and remains effective even when data-level tricks are restricted by privacy or the characteristics of datasets.

Another line of work, Per-Class Normalization (PCN), normalizes each class-specific gradient to unit length, preventing any class from dominating an update \citep{francazi2023theoretical}. PCN effectively decreases the loss for all classes and, because it operates directly on gradients, integrates readily with other techniques. However, PCN introduces two critical limitations in multi-class classification: it is heavily affected by \textit{directional noise} (since minority class gradients often misalign with the true descent direction), and the sum of many unit vectors produces a \textit{scaling mismatch} that destabilizes convergence when the number of classes increases. PCN struggles even for a moderate number of classes.

To address these issues, our primary contribution is Federated Class-Grouped Normalized Momentum (FedCGNM). Instead of normalizing all $C$ class-specific gradients, FedCGNM partitions classes into a small number of groups (e.g., majority vs.\ minority) and applies unit-norm normalization to a momentum vector per group. By reducing the number of normalized vectors from $C$ to just a few, FedCGNM mitigates the scaling-mismatch problem while still giving every group equal magnitude. Furthermore, incorporating momentum attenuates the directional noise from minority classes. An additional benefit concerns client alignment, a primary cause of performance degradation in FL \citep{dandi2022implicit}. Heterogeneous clients often produce gradients whose magnitudes per class differ widely, amplifying misalignment. Because FedCGNM forces each group update to have the same unit norm, the aggregated directions across clients become more aligned.

Furthermore, during local training, each client applies resampling to reduce directional noise, so selecting the appropriate sampling-rates is crucial. This choice is nontrivial in FL settings. sampling-rates must be determined jointly across clients, which induces a combinatorial search space, and sampling schedules that fluctuate excessively can destabilize optimization. Our theoretical analysis indicates that stable convergence requires controlling the cumulative variation of sampling-rates across rounds. For these reasons, practical approaches adopt a fixed resampling policy, using a common rate across clients. To find sampling-rates efficiently, we introduce FedHOO as an auxiliary, plug-in algorithm designed specifically for small-scale federations. Based on the X-armed bandit framework, FedHOO exploits the inherent parallelism in FL. In every communication round, FedHOO requests each client to train with only two candidate rates, yet by linearly combining the returned updates the server can infer the validation performance metric for all $2^K$ rate combinations, where $K$ is the number of clients. The method therefore identifies effective sampling-rates early in training, while avoiding an exhaustive sweep of the hyperparameter space.

Across four public benchmarks, our methods consistently outperforms traditional reweighting and sampling baselines, with gains up to 29\% over FedAvg combined with resampling. In a large-scale industrial chip defect dataset, our method achieves a 16\% improvement over the best baseline. Our main contributions are as follows.

\begin{enumerate}
    \item \textbf{FedCGNM optimizer and variance-aware grouping rule.} 
        We introduce \emph{FedCGNM}, the first client‐side optimizer that groups classes, applies unit-norm momentum per group to balance majority and minority influence while reducing noise and scale mismatch. We frame class partitioning as minimization of within‐group variance. This rule produces the optimal split found by exhaustive search, yielding a principled yet lightweight grouping strategy.

    \item \textbf{Convergence analysis.}
        We incorporate the sampling-rate schedule into the convergence analysis, representing a novel effort combining three components: FL, group normalization, and dynamic sampling-rates. We demonstrate that the sampling-rate schedule impacts convergence.

    \item \textbf{FedHOO as an auxiliary sampling-rate tuner.} 
        While a grid search for sampling-rates is common, more refined strategies are advantageous when the number of clients is low. The proposed X-armed-bandit-based exploration scheme is the first optimization based algorithm to determine combinatorial local sampling-rates in FL. The method performs rapid, privacy-preserving search trading off exploration and exploitation.

    \item \textbf{Strong performance.}  
        We validate our framework across multiple public benchmarks and a real-world industrial semiconductor chip-defect dataset. Across all settings, it consistently shows improvement over strong baselines.
\end{enumerate}

%%%%% %%%%% %%%%% %%%%% %%%%%
%%%%% 2. Related Works  %%%%%
%%%%% %%%%% %%%%% %%%%% %%%%%
\section{Related Works}\label{sec:related_works}

Class imbalance poses a significant challenge in supervised learning, where limited data from minority classes leads to biased models and poor performance on those classes \citep{chen2024survey, johnson2019survey}. Common training-level solutions include reweighting, which adjusts learning based on class frequency, optimization-level methods, and resampling, which alters the class distribution in training data.

\paragraph{Reweighting and Optimization Methods} Optimization-perspective methods generally adjust the learning process by modifying the loss function or the gradient update rule.  Most reweighting methods adjust each sample’s contribution within the loss function to counteract class imbalance, such as weighted cross-entropy \citep{aurelio2019learning}, focal loss \citep{lin2017focal}, and class-balanced loss \citep{cui2019class}. Beyond loss‐level re‐weighting, PCN \citep{francazi2023theoretical} rescales each class‐specific gradient to unit norm, equalizing per‐class influence during optimization. \citet{he2024gradient} introduces a technique to adjust the weight of gradient dynamically in a class-incremental learning scenario. They consider reweighting in class-level which works poorly as the number of classes increases, and only consider balance of the gradient magnitude. Different from them, we tackle class imbalance by addressing gradient scaling and directional noise simultaneously.

% In particular, weighted cross‐entropy \citep{aurelio2019learning} assigns higher loss weights to minority‐class samples, and focal loss \citep{lin2017focal} further down‐weights well‐classified majority instances to concentrate learning on harder, underrepresented cases. Class‐balanced loss \citep{cui2019class} computes weights based on the effective number of samples per class, thereby reflecting diminishing returns of additional samples. 
% FedGR \citep{guo2023fedgr} introduces an imbalanced softmax function accompanied by a gravitational regularizer to pull minority‐class representations toward balanced decision boundaries.

Several methods have tailored re‐weighting to FL setting. CLIMB \citep{shen2022an} addresses the class imbalance in FL by enforcing the similarity between local empirical losses, while Fed-GraB \citep{xiao2023fed} introduces a self-adjusting gradient balancer that dynamically reweights gradients based on the class distribution inferred from classifier weights. FL Ratio Loss \citep{wang2021addressing} builds on centralized Ratio Loss by estimating global class proportions via secure aggregation and adjusting local losses accordingly. FedNoRo \citep{wu2023fednoro} leverages knowledge distillation and distance‐aware aggregation to align client models, and incorporates a logical adjustment mechanism to address both data heterogeneity and class imbalance. Unlike these methods that focus on loss functions or models, we tackle class imbalance at the gradient level, pairing with a simple resampling technique.

\paragraph{Resampling and Data Synthesis} Traditional resampling methods balance class balance by removing majority samples (under-sampling) or replicating minority ones (over-sampling) \citep{carvalho2025resampling}. More recent techniques like SMOTE \citep{chawla2002smote}, GAMO \citep{mullick2019generative}, and I-GAN \citep{pan2024improved} generate synthetic minority data and show strong empirical performance. However, when minority samples are extremely scarce \citep{chen2024tackling} or synthetic data risks being unrealistic or mislabeled \citep{alkhawaldeh2023challenges}, such methods become less viable. Thus, this work confines itself to conventional under- and over-sampling without synthetic generation since we focus on those situations.

Choosing how much to re-sample remains an open problem: systematic investigations in centralized deep learning reveal that the optimal under- or over-sampling-rate depends jointly on the dataset size and the severity of class skew \citep{buda2018systematic}. Curriculum-based schemes, such as Dynamic Curriculum Learning \citep{wang2019dynamic}, further highlight the need to adapt sampling ratios over the course of training rather than fixing them a priori. In the federated setting, \citet{dusing2024leveraging} cast client-side resampling as a tunable policy, optimized to minimize the global loss while respecting privacy constraints. FAST \citep{wang2023fast} advances this idea by viewing each sampling ratio as an arm in a multi-armed-bandit framework, enabling dynamic exploration during training. However, it treats local sampling-rates independently, overlooking the combinatorial nature of FL. Our method adopts this adaptive philosophy, particularly suited to small-scale federations, while addressing the combinatorial optimization challenge.

%%%%% %%%%% %%%%% %%%%% %%%%%
%%%%%  3. Methodology   %%%%%
%%%%% %%%%% %%%%% %%%%% %%%%%
\section{Methodology}\label{sec:methodology}

Consider a federated learning system with $K$ clients for a $C$-class classification task. To cope with class imbalance, we incorporate a resampling strategy, which is a common strategy for tackling class imbalance. Let $\mathcal{D}_k$ denote the original data distribution of client $k$. We define $\mathcal{D}_k(r_k)$ as the data distribution of client $k$ after resampling with a sampling-rate $r_k \geq 0$. If $q_k^{(c)} \ne 0$ is the fraction of samples in class $c$ for client $k$, then under sampling-rate $r_k$ client $k$ resamples class $c$ by a factor $\left( \frac{\max_s{q_k^{(s)}}}{q_k^{(c)}} \right)^{r_k}$. Consequently, the fraction of class $c$ in a mini-batch changes according to this resampled distribution.

% Specifically, let $q_k^{(c)}$ be the fractions of samples in class $c$ for client $k$. Under the sampling-rate $r_k$, we sample class $c$ with $q_k^{(c)} \ne 0$ such that it has $\left( \frac{\max_s{q_k^{(s)}}}{q_k^{(c)}} \right)^{r_k}$ multiple samples during training. Consequently, the fraction of class $c$ in a mini-batch changes according to this resampled distribution.}

We define the local objective function under resampling as $f_k(x;r_k) = \mathbb{E}_{\xi \sim \mathcal{D}_k(r_k)} [\ell(x;\xi)]$, where $\ell(x;\xi)$ is the sample loss for $\xi$. Given  sampling-rates $r = (r_1,\dots, r_K)$, we define global objective as $f(x;r)=\sum_{k=1}^K p_k f_k(x;r_k)$ where $p_k$ is the weight of client $k$. Furthermore, we assume a resampling-rate strategy determines the rates $r^{(t)} \in \mathbb{R}^K$ before round $t$ starts. If $\mathcal{F}_t$ is the filtration containing all randomness up to round $t-1$, we assume that $r^{(t)}$ is $\mathcal{F}_t$-measurable.

%%%%%%%%%%%%%%%%%%%%%%%%%%%%%%%%%%%
%%%%%%%%%%%%%%%%%%%%%%%%%%%%%%%%%%%
\subsection{Federated Class-Grouped-Normalized-Momentum}\label{subsec:fedcgnm}

Per-Class Normalization (PCN) addresses gradient imbalance by normalizing each class-specific gradient and has shown promising performance in imbalanced binary classification. However, it faces a limitation in multi-class settings (see Appendix~\ref{app:pcn_limitations} for more details). The next-to-be-proposed FedCGNM is designed for multi-class problems and aims to overcome the limitations of PCN.

FedCGNM merges classes into a small number $H \ll C$ of groups (typically majority and minority) and maintains a momentum per group. At communication round $t$, the server broadcasts the global model $x^{(t)}$ and local sampling-rates $r_k^{(t)}$ based on a sampling strategy (see Section \ref{subsec:resampling_strategy}), and each client constructs a partition $\{\mathcal G_{k,h}^{(t)}\}_{h=1}^{H}$ of the classes (see Section \ref{subsec:grouping} discerning the grouping rule). If we denote $f_k^{(t)}(x):=f_k(x;r_k^{(t)})$ as the local objective of the client $k$ at round $t$, then it decomposes as $f_k^{(t)}=\sum_{h=1}^{H}f_{k,h}^{(t)}$, with $f_{k,h}^{(t)}$ being the sum of the loss functions under the resampled distribution over the samples in $\mathcal G_{k,h}^{(t)}$.

During local training, client $k$ updates, for each group $h$ and step $i$, the momentum
\begin{equation}
m_{k,h}^{(t,i)}=\beta m_{k,h}^{(t,i-1)}+(1-\beta)\,g_{k,h}^{(t,i)}, \quad m_{k,h}^{(t,0)}=0,    
\end{equation}\label{eqn:per_group_momentum}
where $g_{k,h}^{(t,i)}(x) =\nabla f_{k,h}(x;r_k^{(t)};\xi_{k,h}^{(t,i)})$ is the stochastic gradient computed on samples in mini-batch $\xi_k^{(t,i)}$ drawn solely from group $\mathcal{G}_{k,h}^{(t)}$, and $\beta\in[0,1)$ is the momentum factor. The per-step update direction is obtained by normalizing each momentum with a stabilizing constant $\delta > 0$ and summing across the $H$ groups as 
\begin{equation}
x_{k}^{(t,i)} = x_{k}^{(t,i-1)} - \eta\sum_{h=1}^{H} \frac{m_{k,h}^{(t,i)}}{\|m_{k,h}^{(t,i)}\| + \delta}
\end{equation}
with learning rate $\eta>0$. The term $\delta$ ensures numerical stability and provides a lower bound on the denominator, which is crucial for theoretical analysis and to avoid numerical instability during training, especially when the gradient norm is small \citep{yang2024adaptive}. After $E$ iterations the client returns $x_{k}^{(t,E)}$ to the server, which aggregates by $x^{(t+1)}=\sum_{k}p_k x_{k}^{(t,E)}$. Operating on a handful of group momenta stabilizes the step norm, suppresses directional noise, and remains computationally efficient even when the number of classes is large.
%%%%%%%%%%%%%%%%%%%%%%%%%%%%

\begin{algorithm}[t]
\caption{FedCGNM with Resampling Strategy}
\label{alg:fedcgnm}
\begin{algorithmic}[1]
\REQUIRE global rounds $T$, local steps $E$, step size $\eta$, momentum factor $\beta$, client weights $\{p_k\}_{k=1}^{K}$
\STATE initialize global model $x^{(0)}$
\FOR{$t = 0,\dots,T-1$}
    \STATE Server determines sampling-rates $\{r_k^{(t)}\}_{k=1}^{K}$ based on resampling strategy (Sec.\,\ref{subsec:resampling_strategy}).
    \STATE Server broadcasts $(x^{(t)},r_k^{(t)})$ to clients.
    \FOR{each client $k$ in parallel}
        \STATE Resample each class $c$ to have $\left( \frac{\max_s{q_k^{(s)}}}{q_k^{(c)}} \right)^{r_k^{(t)}}$ samples if $q_k^{(c)}\ne 0$. subsequent steps use the resampled data.
        \STATE Construct groups $\{\mathcal G_{k,h}^{(t)}\}_{h=1}^{H}$ of classes via the grouping rule (Sec.\,\ref{subsec:grouping}).
        \STATE $x_{k}^{(t,0)}\gets x^{(t)}$;\quad $m_{k,h}^{(t,0)}\gets 0$, $\forall h$
        \FOR{$i = 1$ to $E$}
            \STATE Compute stochastic gradient $g_{k,h}^{(t, i)}$
            \STATE $m_{k,h}^{(t,i)} \gets \beta m_{k,h}^{(t,i-1)} + (1-\beta)\,g_{k,h}^{(t,i)}$, $\forall h$
            \STATE $x_{k}^{(t,i)} \gets x_{k}^{(t,i-1)} - \eta \displaystyle\sum_{h=1}^{H} \frac{m_{k,h}^{(t,i)}}{\|m_{k,h}^{(t,i)}\| {+ \delta}}$
        \ENDFOR
        \STATE Upload $x_{k}^{(t,E)}$ to server
    \ENDFOR
    \STATE $x^{(t+1)} \gets \displaystyle\sum_{k} p_k\,x_{k}^{(t,E)}$
\ENDFOR
\STATE \textbf{return} final model $x^{(T)}$
\end{algorithmic}
\end{algorithm}

%%%%%%%%%%%%%%%%%%%%%%%%%%%%%%%%%%%
%%%%%%%%%%%%%%%%%%%%%%%%%%%%%%%%%%%

\subsection{Grouping of Classes}\label{subsec:grouping}

We formulate the problem of partitioning $\{1,\ldots,C\}$ into $H$ disjoint groups as one-dimensional variance reduction on the class proportions. For the sake of notation, we omit the dependency on client $k$ and iteration $t$ in the subsequent quantities. Let $q_c$ be the (resampled) proportion of class $c$, with $\sum_{c=1}^C q_c=1$, and we assume $q_1 \ge \cdots \ge q_C$. We treat $\{q_c\}_{c=1}^C$ as points on the real line and select $H-1$ thresholds to form contiguous groups in which proportions are as similar as possible.

For a partition $G=\{\mathcal G_h\}_{h=1}^H$, define the group mass $S_h=\sum_{c\in\mathcal G_h} q_c$, group mean mass $\mu_h=S_h/|\mathcal G_h|$, and the within-group distribution $w_{c\mid h}=q_c/S_h$. In a mini-batch of size $B$, let $N_h\sim \mathrm{Binomial}(B,S_h)$ be the sample counts of group $h$ and $N_{h,c}$ be the sample counts of class $c$ in group $h$ based on the distribution. Define the empirical share vector $\hat{\boldsymbol w}_h=(\hat w_{c\mid h})_{c\in \mathcal G_h}$ with $\hat w_{c\mid h}=N_{h,c}/N_h$, and compare it with the uniform target $u_h=(1/|\mathcal G_h|,\ldots,1/|\mathcal G_h|) \in \mathbb{R}^{|\mathcal G_h|}$. We define group imbalance vector $\Delta_h=\tfrac{N_h}{B}(\hat{\boldsymbol w}_h-u_h)$. Taking expectation, we obtain
\[
\mathbb{E}\|\Delta_h\|^2
=\Big(S_h^2+\tfrac{S_h(1-S_h)}{B}\Big)\|\hat{\boldsymbol w}_h-u_h\|^2
+\tfrac{S_h}{B}\Big(1-\sum_{c\in\mathcal G_h} w_{c\mid h}^2\Big).
\]
The optimization problem for grouping is $\min_{\mathcal{G}_h} \mathbb{E} \| \Delta_h \|^2$. We solve this problem heuristically by finding the best threshold that yields groups whose class distributions have minimal within-group variance.

The dominant term of $\mathbb{E} \| \Delta_h \|^2$ is $S_h^2\|\boldsymbol w_h-u_h\|^2$ with $O(1/B)$ corrections. We next link this imbalance to the variance of raw proportions. Using $\|\boldsymbol w_h-u_h\|^2
=\sum_{c\in\mathcal G_h}\Big(\frac{q_c}{S_h}-\frac{1}{|\mathcal G_h|}\Big)^2=\frac{|\mathcal G_h|}{S_h^2}\,\sigma_h^2$ where $\sigma_h^2=\frac{1}{|\mathcal G_h|}\sum_{c\in\mathcal G_h}(q_c-\mu_h)^2
$, the dominant term becomes $S_h^2\|\boldsymbol w_h-u_h\|^2=|\mathcal G_h|\,\sigma_h^2$. Summing over groups and normalizing per class yields $\tfrac{1}{C}\sum_{h=1}^H |\mathcal G_h|\sigma_h^2=\sum_{h=1}^H \omega_h\sigma_h^2$ with $\omega_h=|\mathcal G_h|/C$, which is the within-group variance objective on the sorted $\{q_c\}_{c=1}^C$. Consequently, selecting $H-1$ thresholds by this strategy produces the partition that minimizes the class-balanced within-group dispersion and, by the argument above, asymptotically minimizes expected per-batch imbalance.

\begin{figure}[t]
    \centering
    \includegraphics[width=.7\linewidth]{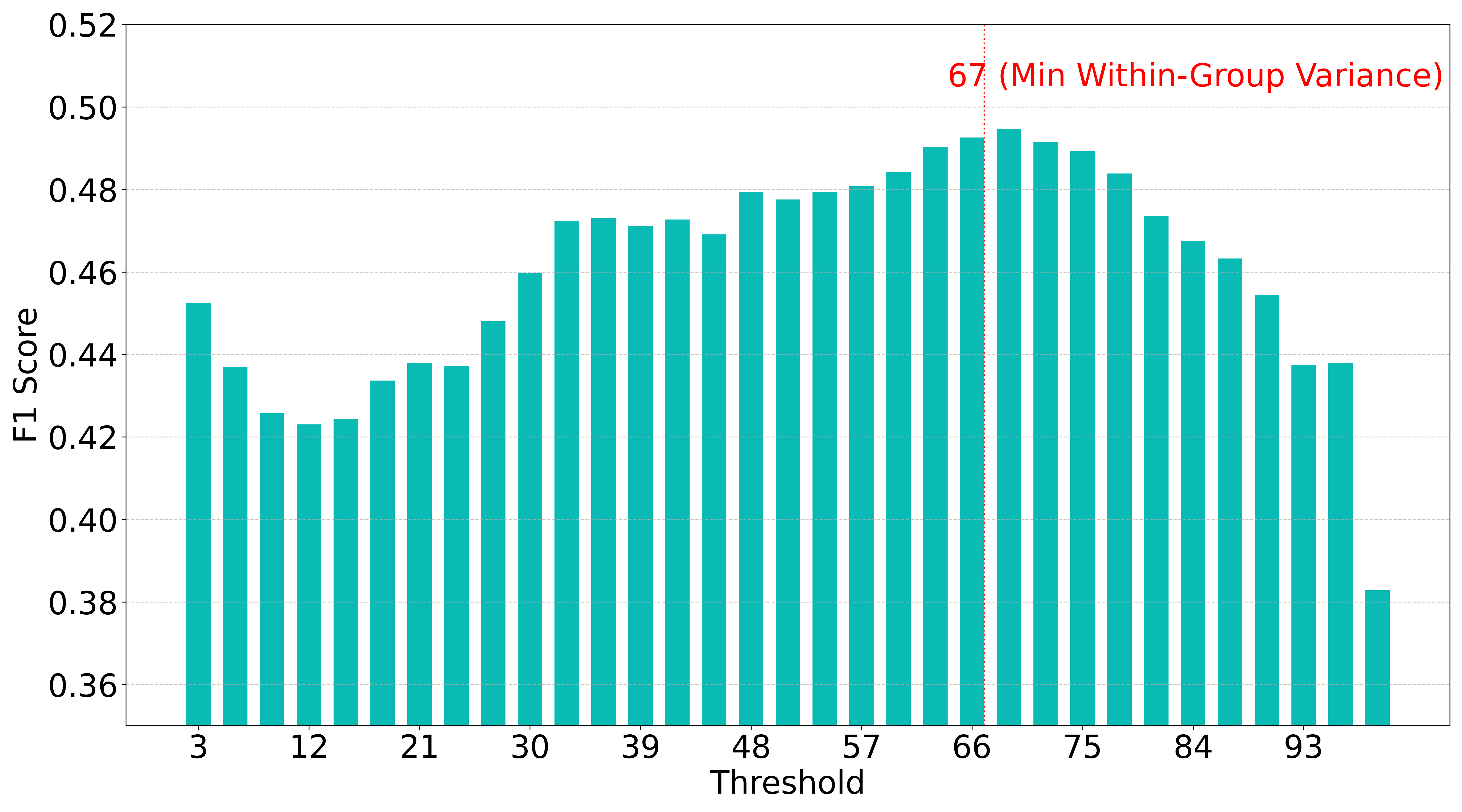}   
    \caption{Test accuracy on CIFAR-100-LT ($\xi=20, K=5$) with respect to the number of classes assigned to the minority group. The red line marks the threshold selected by our grouping rule.}
    \label{fig:group_threshold_sweep}
\end{figure}

Figure \ref{fig:group_threshold_sweep} reports the test accuracy obtained when we exhaustively vary the split threshold $\tau_g$, which assigns the $\tau_g$ rarest classes to the minority group and the remaining $C-\tau_g$ classes to the majority group. Accuracy rises to a clear maximum near $\tau_g=69$ and declines when either too few or too many classes are treated as minority. The red line marks the threshold chosen by our variance-based grouping rule, which, in this example, coincides with the empirical optimum. Additional experiments in Appendix~\ref{appx:grouping_validation} exhibit the same pattern, confirming that minimizing within-group variance serves as a reliable proxy for an exhaustive threshold search.

In summary, variance-aware grouping via minimizing the within-group variance on empirical class distribution provides a data-driven, lightweight mechanism that, when coupled with normalized momentum, substantially attenuates gradient noise while ensuring balanced directional contributions from majority and minority classes.

\subsection{Theoretical Convergence Analysis}\label{subsec:convergence}

We analyze the convergence of FedCGNM by explicitly incorporating a resampling-rate schedule $\{r^{(t)}\}$. By modeling the global objective $f^{(t)}(x) = f(x; r^{(t)})$ as a function of these time-varying rates, we can rigorously quantify how the choice of a sampling strategy impacts training dynamics. For the theoretical convergence analysis, we make the following assumptions.

\begin{assumption}[Smoothness]\label{assumption:smoothness}
    Each local loss function $f_k(\cdot; r)$ is $L$-smooth for any rate $r$, that is, for all $x, y$ and $r$, we have $\|\nabla f_k(x;r)-\nabla f_k(y;r)\| \le L\,\|x-y\|$.
    % \begin{equation}
    %     \|\nabla f_k(x;r)-\nabla f_k(y;r)\| \le L\,\|x-y\|.
    %     \tag{A.1}
    % \end{equation}
\end{assumption}

\begin{assumption}[Pathwise boundedness]\label{assumption:bounded_gradients}
    There exists $G > 0$ such that for any $t$, $k$, and $h$, we have $\bigl\|\nabla f_{k,h}^{(t)}(x^{(t)})\bigr\| \le G$.
    % \begin{equation}
    %     \bigl\|\nabla f_{k,h}^{(t)}(x^{(t)})\bigr\| \le G.
    %     \tag{A.2}
    % \end{equation}
\end{assumption}

\begin{assumption}[Unbiasedness and bounded variance]\label{assumption:unbiased_and_variance}
    There exists $\sigma^2>0$ such that for any $x$, $k$, and $r$,
    \begin{align}
        \mathbb{E}\bigl[\nabla f_k(x;r;\xi_k)\bigr]
        = \nabla f_k(x;r),
        % \tag{A.3}
        \\
        \mathbb{E}\bigl\|\nabla f_k(x;r;\xi_k) 
        - \nabla f_k(x;r)\bigr\|^2
        \le \sigma^2.
        % \tag{A.4}
    \end{align}
\end{assumption}

\begin{assumption}[Lipschitz continuity with respect to sampling-rates]\label{assumption:lipschitz_drift}
    There exists a constant $L_r \ge 0$ such that for any $x$ and any two sampling vectors $r, r'$, we have $|f(x; r) - f(x; r')| \le L_r \|r - r'\|$.
    % \begin{equation}
    %     \textcolor{red}{|f(x; r) - f(x; r')| \le L_r \|r - r'\|.} 
    %     \tag{A.5}
    % \end{equation}
\end{assumption}

\begin{assumption}\label{assumption:alignment}
Let $\alpha_i := 1-\beta^i$. 
There exist $\gamma\!\in\!(0,1]$ and $\kappa\!>\!0$ such that for all $x,k,h,t$, and $i$, either
    \begin{align*}
        \frac{\langle \nabla f^{(t)}_{k,h}\!\bigl(x^{(t)}\bigr),\,m_{k,h}^{(t,i)}\rangle}
        {\|\nabla f^{(t)}_{k,h}\!\bigl(x^{(t)}\bigr)\|\,\|m_{k,h}^{(t,i)}\|}
        \;&\le\; 1-\gamma
        \quad \text{or}
        \\
        \bigl|\,\|m_{k,h}^{(t,i)}\| - \|\alpha_i \nabla f^{(t)}_{k,h}\!\bigl(x^{(t)}\bigr)\|\,\bigr|
        \;&\ge\; \kappa .
        % \tag{A.6}
    \end{align*}
\end{assumption}

% \begin{assumption}\label{assumption:alignment}
% There exist $\gamma\!\in\!(0,1]$ and $\kappa\!>\!0$ such that for all $x$ and for any $k,h$, either
%     \begin{align*}
%         \displaystyle \frac{\langle \nabla f_{k,h}\!\bigl(x^{(t)}\bigr),\,m_{k,h}\rangle}{\|\nabla f_{k,h}\!\bigl(x^{(t)}\bigr)\|\,\|m_{k,h}\|} \;&\le\; 1-\gamma \quad
%         \text{or}
%         \\
%         \quad 
%         \displaystyle \bigl| \, \|m_{k,h}\| - \|\nabla f_{k,h}\!\bigl(x^{(t)}\bigr)\| \bigr| \;&\ge\; \kappa.
%         \tag{A.6}
%     \end{align*}
% \end{assumption}

Assumption \ref{assumption:alignment} asserts that, at each client–class update, the stochastic gradient cannot be both perfectly aligned \emph{and} equal in length to its exponentially averaged momentum. One of the two gaps is virtually guaranteed in practice, and the assumption is strictly weaker than the simultaneous angle-and-norm bounds adopted in earlier analyses of normalized momentum methods. Note that we include the factor $\alpha_i:=1-\beta^i$ in the norm-gap branch of Assumption~\ref{assumption:alignment} because $\|m_{k,h}^{(t,i)}\|$ is naturally on the scale of $\|\alpha_i \nabla f_{k,h}(x^{(t)})\|$ rather than $\|\nabla f_{k,h}(x^{(t)})\|$. % ($m_{k,h}^{(t,i)}=(1-\beta)\sum_{j=1}^{i}\beta^{i-j}g_{k,h}^{(t,j)}$ and $\sum_{j=1}^{i}(1-\beta)\beta^{i-j}=1-\beta^i=\alpha_i$)

The following theorem shows that FedCGNM attains the standard $\mathcal{O}(T^{-1/2})$ stationary-point rate under smoothness and bounded-variance assumptions.  A complete proof is given in Appendix~\ref{appx:proof_convergence}.

\begin{theorem}\label{thm:fedcgnm-conv}
    Let Assumptions \ref{assumption:smoothness}--\ref{assumption:alignment} hold and let \( \beta = 1-c\eta \) with a constant \(c>0\) satisfying \(c\eta<1\). Let $\{x^{(t)}\}_{t=1}^{T}$ be the iterates produced by FedCGNM with sampling-rates $\{r^{(t)}\}$, and let each client perform \(E\ge 1\) local steps. Let $\Delta_0 = \mathbb{E}[f(x^{(0)}; r^{(0)})] - f_\text{inf}$, with a finite lower bound \(f_\text{inf}\) of $f(\cdot;\cdot)$. Then we have:
    \begin{equation}\label{eq:fedcgnm-main-bound}
    \begin{split}
      \frac{1}{T}\sum_{t=0}^{T-1}
      \mathbb{E}\!\bigl\|\nabla f(x^{(t)} ; r^{(t)})\bigr\|^{2}
      \;\le\;
      \frac{\Delta_0}{\eta D_0 E\,T}
      \;+\;
      D_{1}\eta
      \;+\;
      D_{2}\,\eta^{2}
      \\
      \;+\;
      \frac{L_r}{\eta D_0 E T}\sum_{t=0}^{T-1} \sum_k p_k \mathbb{E}\lvert {r}_k^{(t)} - {r}_k^{(t-1)}\rvert,
    \end{split}
    \end{equation}
    \noindent
    where $D_0, D_{1}$ and $D_{2}$ are constants defined in the proof. If we define $V_T = \sum_t \sum_k p_k \mathbb{E}\lvert {r}_k^{(t)} - {r}_k^{(t-1)}\rvert$, then choosing the step size $\eta = \mathcal{O}(T^{-1/2})$ yields
    \begin{equation*}
        \min_{t< T}\mathbb{E}[\|\nabla f^{(t)}(x^{(t)})\|^2] \leq \mathcal{O}(T^{-1/2}) + \mathcal{O}(V_T \cdot T^{-1/2}).
    \end{equation*}
\end{theorem}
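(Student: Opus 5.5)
The plan is a one-round descent inequality for the time-varying objective $f^{(t)}=f(\cdot;r^{(t)})$, then telescoping, with the change of objective between rounds handled by Assumption~\ref{assumption:lipschitz_drift}. Write the round update as $x^{(t+1)}-x^{(t)}=-\eta\sum_k p_k\sum_{i=1}^E\sum_h u_{k,h}^{(t,i)}$, where $u_{k,h}^{(t,i)}=m_{k,h}^{(t,i)}/(\|m_{k,h}^{(t,i)}\|+\delta)$. Each $\|u_{k,h}^{(t,i)}\|\le 1$, so $\|x^{(t+1)}-x^{(t)}\|\le \eta EH$ deterministically, and every local iterate satisfies $\|x_k^{(t,i)}-x^{(t)}\|\le \eta iH$.

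By $L$-smoothness (Assumption~\ref{assumption:smoothness}), which passes to $f^{(t)}$ as a convex combination of the $f_k$,
\[
f^{(t)}(x^{(t+1)})\le f^{(t)}(x^{(t)})-\eta\sum_k p_k\sum_{i,h}\bigl\langle \nabla f^{(t)}(x^{(t)}),u_{k,h}^{(t,i)}\bigr\rangle+\tfrac{L}{2}\eta^2E^2H^2 .
\]
To pass to $f^{(t+1)}$, I add $f^{(t+1)}(x^{(t+1)})-f^{(t)}(x^{(t+1)})\le L_r\|r^{(t+1)}-r^{(t)}\|$. After summing over $t$ and reindexing, this gives the drift sum $L_r\sum_t\sum_k p_k\mathbb{E}|r_k^{(t)}-r_k^{(t-1)}|$. (Bounding the $\ell_2$ norm by a weighted $\ell_1$ norm may cost a constant, absorbed into $L_r$ as the paper does.)

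The main work is a lower bound on the inner-product term. I split $\nabla f^{(t)}(x^{(t)})=\sum_k p_k\sum_h\nabla f_{k,h}^{(t)}(x^{(t)})$ and compare each $u_{k,h}^{(t,i)}$ with $\alpha_i\nabla f_{k,h}^{(t)}(x^{(t)})$. Unrolling the momentum gives $m_{k,h}^{(t,i)}=(1-\beta)\sum_{j\le i}\beta^{i-j}g_{k,h}^{(t,j)}$, hence
\[
m_{k,h}^{(t,i)}-\alpha_i\nabla f_{k,h}^{(t)}(x^{(t)})=\text{(martingale noise)}+\text{(drift from }x_k^{(t,j)}\neq x^{(t)}).
\]
By Assumption~\ref{assumption:unbiased_and_variance}, the noise has second moment at most $(1-\beta)\sigma^2/(1+\beta)\approx c\eta\sigma^2/2$, using $\beta=1-c\eta$. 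By smoothness, the drift is at most $L\eta EH$.

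I then use the standard normalized-momentum inequality $\langle a, m/(\|m\|+\delta)\rangle\ge \|a\|^2/(\|a\|+\delta)-2\|m-a\|$, or a version adapted to the $\delta$ denominator, with $a=\alpha_i\nabla f_{k,h}^{(t)}$. I expect the central obstacle here: turning the per-group quantities into $\|\nabla f^{(t)}(x^{(t)})\|^2$. Pathwise boundedness (Assumption~\ref{assumption:bounded_gradients}) gives $\|a\|^2/(\|a\|+\delta)\ge \|a\|^2/(G+\delta)$. Jensen's inequality across $k,h$ gives $\sum_k p_k\sum_h\|\nabla f_{k,h}\|^2\ge \|\nabla f\|^2/H$. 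Assumption~\ref{assumption:alignment} is used to rule out the degenerate case in which the cross-group inner products cancel the alignment: it guarantees either an angle gap or a norm gap of at least $\kappa$, which gives a uniform positive constant in the comparison. Together with $\alpha_i\ge\alpha_1=c\eta$, or more usefully averaging $\alpha_i$ over $i$, this defines $D_0$, of the form $\frac{\bar\alpha\,\min(\gamma,\kappa/(G+\delta))}{H(G+\delta)}$. If the $\alpha_i$ factor makes $D_0$ depend on $\eta$, I would bound $\frac1E\sum_i\alpha_i$ below by a constant for $E$ large relative to $1/(c\eta)$, or simply absorb it into the constants.

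To finish, take expectations using the $\mathcal F_t$-measurability of $r^{(t)}$, so the conditional expectations are legitimate. The noise contributes $\eta E\cdot O(\sqrt{c\eta}\,\sigma)$ per round; I would handle it via $2\|m-a\|\le \|m-a\|^2/\epsilon+\epsilon$, which yields $O(\eta^2)$ noise terms per round. The drift contributes $O(\eta^2E^2HL)$, and the quadratic term contributes $O(\eta^2E^2H^2L)$. Summing from $0$ to $T-1$, using $f^{(T)}\ge f_{\inf}$, and dividing by $\eta D_0ET$ yields the bound \eqref{eq:fedcgnm-main-bound}, with $D_1$ collecting the $O(\eta)$ terms after division and $D_2$ the $O(\eta^2)$ remainders. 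Setting $\eta=\Theta(T^{-1/2})$ makes the first and drift terms $O(T^{-1/2})$ and $O(V_TT^{-1/2})$, and $D_1\eta+D_2\eta^2=O(T^{-1/2})$. Bounding the minimum by the average then gives the final statement.
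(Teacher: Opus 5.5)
Your outer skeleton matches the paper's: the descent lemma, the rate drift handled through Assumption~\ref{assumption:lipschitz_drift}, telescoping, and bounding the minimum by the average. The step that fails is your lower bound on the inner-product term.

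\textbf{The cross terms are not controlled.} The per-group normalized-momentum inequality only controls the diagonal pairings $\langle \alpha_i\nabla f^{(t)}_{k,h}(x^{(t)}),u^{(t,i)}_{k,h}\rangle$. The descent term, however, is $\langle \nabla f^{(t)}(x^{(t)}),u^{(t,i)}_{k,h}\rangle$. After you split $\nabla f^{(t)}=\sum_{k'}p_{k'}\sum_{h'}\nabla f^{(t)}_{k',h'}$, you are left with cross terms $\langle \nabla f^{(t)}_{k',h'},u^{(t,i)}_{k,h}\rangle$ for $(k',h')\neq(k,h)$. These can be as large as $G$ and have either sign, so the diagonal bound plus Jensen does not bound the descent term.

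This is not a technicality, because normalization reweights the groups. Suppose every $u_{k,h}$ is perfectly aligned with $\nabla f_{k,h}$ and take $\delta\to 0$. If $\nabla f_{k,1}=-2\nabla f_{k,2}\neq 0$, then $u_{k,1}+u_{k,2}=0$ while $\nabla f_k\neq 0$.

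Assumption~\ref{assumption:alignment} cannot rule this out. It only relates each group's own gradient and momentum. It is also a \emph{non}-degeneracy condition, not an alignment guarantee: it is satisfied by $m_{k,h}=-\alpha_i\nabla f_{k,h}$. So no factor like $\min(\gamma,\kappa/(G+\delta))$ can appear in a lower bound on descent.

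\textbf{How the paper avoids this.} The paper does not argue directionally at all. It writes
$-\langle\nabla f^{(t)},\sum_k p_k\Delta_k^{(t)}\rangle=-A\|\nabla f^{(t)}\|^2-\langle\nabla f^{(t)},\sum_k p_k\Delta_k^{(t)}-A\nabla f^{(t)}\rangle$
with $A=\frac1E\sum_i\alpha_i$, and applies Young's inequality to the second term. It then uses the exact identity $\sum_k p_k\sum_h\alpha_i\nabla f^{(t)}_{k,h}=\alpha_i\nabla f^{(t)}$. The discrepancy therefore becomes a sum of vector errors $\alpha_i\nabla f^{(t)}_{k,h}(x^{(t)})-u^{(t,i)}_{k,h}$, and no cross terms arise.

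Assumption~\ref{assumption:alignment} enters only in Lemma~\ref{lemma:alignment}. There it lower-bounds the denominator $\|\alpha_i\nabla f_{k,h}-m_{k,h}\|^2$, by $\gamma(d^2+r^2)$ or by $\kappa^2$. This gives $\|\alpha_i\nabla f_{k,h}-u_{k,h}\|^2\le\rho\,\|\alpha_i\nabla f_{k,h}-m_{k,h}\|^2$. Lemma~\ref{lemma:momentum_error_alpha} then bounds the right-hand side by $2i^2L^2H^2\eta^2+4i(1-\beta)^2\sigma^2$.

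\textbf{Noise accounting.} Your noise estimate does not give $O(\eta^2)$ per round. With only the stationary bound $\mathbb{E}\|m-a\|^2\lesssim c\eta\sigma^2/2$, the best the splitting $2\|m-a\|\le\|m-a\|^2/\epsilon+\epsilon$ yields (optimizing over $\epsilon$) is $O(\sqrt{\eta})$. Each local step then costs $O(\eta^{3/2})$, which becomes $O(\sqrt{\eta})=O(T^{-1/4})$ after dividing by $\eta D_0E$. You need the finite-$i$ estimate $\sum_{j\le i}w_j^2=O(i(1-\beta)^2)=O(ic^2\eta^2)$ used in Lemma~\ref{lemma:momentum_error_alpha}, which is available because the momentum restarts at $0$ every round. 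You also need a squared-error (Young) formulation.

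\textbf{The $\bar\alpha$ factor.} Your worry about $\bar\alpha=\frac1E\sum_i\alpha_i$ is well founded, but neither of your remedies works. Since $\bar\alpha\le cE\eta$, demanding $E\gtrsim 1/(c\eta)\asymp\sqrt{T}$ changes the theorem. Absorbing $\bar\alpha$ into $D_0$ makes $\Delta_0/(\eta D_0ET)=\Omega(1/(\eta^2T))$, which does not vanish at $\eta\asymp T^{-1/2}$.

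The paper's proof has the same defect. It sets $D_0=A-\alpha/2$ with $A=\bar\alpha$, then takes $\alpha=2A-C$ for a ``constant'' $C$, which forces $C<2A=O(\eta)$. On this point you have found a weakness shared with the paper, not missed an idea it supplies.
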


The second term on the right-hand side of \eqref{eq:fedcgnm-main-bound} represents the path variation of the sampling-rates. For the algorithm to converge to a stationary point at the standard rate of $\mathcal{O}(T^{-1/2})$, the sampling strategy must ensure that $V_T /\sqrt{T} \to 0$.

\subsection{Resampling Strategy}\label{subsec:resampling_strategy}

The path variation term in Theorem \ref{thm:fedcgnm-conv} implies a fundamental trade-off where aggressive resampling can improve minority class learning, but an indefinite oscillation prevents the global objective from stabilizing. Therefore, a practical resampling strategy must eventually stabilize. 

While dynamic sampling-rate search has been explored in centralized curriculum learning \citep{wang2019dynamic} and in per-client sampling-rate search for FL \citep{wang2023fast}, it presents unique challenges in FL due to client heterogeneity and large search-space. If a search algorithm fails to identify sampling rates quickly enough to reduce oscillation during training, convergence may deteriorate. We note, however, that training stability does not strictly guarantee superior test performance. While a full exploration of the optimal balance between these two competing objectives is beyond the scope of this work, our convergence analysis establishes a critical requirement of $V_T$ for practitioners.

\begin{algorithm}[t]
\caption{FedHOO Resampling Strategy (High-Level)}
\label{alg:fedhoo_concept}
\begin{algorithmic}[1]
\FOR{each communication round $t$}
    \STATE \textbf{Server:} Sends two probe rates within the search interval to each client $k$.
    \STATE \textbf{Clients:} Compute two local updates in parallel using the probe rates and return them.
    \STATE \textbf{Server:} Synthesizes $2^K$ candidate global models by linearly combining the returned updates.
    \STATE \textbf{Server:} Validates candidates, selects the best model, updates the current model with it, and shrinks search interval using XAB-based exploration rule.
\ENDFOR
\end{algorithmic}
\end{algorithm}

For small federations where $K$ is small, we introduce Federated Hierarchical-Optimistic-Optimization (\textbf{FedHOO}) as a strategy designed to search effective rates rapidly. This approach leverages the observation that aggregation in FL is linear with respect to client updates, so the server can “mix and match” client updates computed under different local hyperparameter choices. We cast the joint sampling-rate search as an X-armed bandit over a continuous space, and the server follows a HOO-style optimistic rule to balance exploration and exploitation when choosing which rate intervals to probe each round. Here, we outline only the key idea of FedHOO and the full algorithmic details are provided in Appendix~\ref{appx:appendix_fedhoo_detail}. The following methodology is applicable to general hyperparameter tuning in FL, but we present it from the perspective of sampling-rates.

To search for joint sampling-rates in FL, the server maintains a plausible interval of sampling-rates for each client $k$. In standard FL, the server can evaluate only one joint hyperparameter configuration per round. In contrast, FedHOO enables a much broader exploration by instructing each client to train at two probe rates (e.g., one lower and one upper probe) inside the current interval. Each client performs two local training sessions starting from the same global model using the assigned probe rates and returns the two corresponding client updates, i.e., the model changes produced by local training. Since the server aggregates these updates via a weighted sum, it can construct and validate all $2^K$ models corresponding to every combination of the local rates, select the best performing model for the next starting global model, and shrink each client’s interval toward the chosen probe. Iterating this procedure yields rapid rate identification and stabilization. 

\paragraph{Federated validation.}
FedHOO does not require centralized validation data. After synthesizing the candidate global models, the server asks clients to evaluate them on their private validation splits. Clients return only scalar validation metrics, such as validation loss or accuracy, and the server aggregates these metrics to select the best candidate model. Thus, FedHOO adds validation overhead but does not require transferring raw validation data to the server. If a server-side hold-out set is available, it can be used as an alternative, but it is not required.
 
Given that FedHOO requires validating $2^K$ candidates per round, it is designed for small-scale regimes (e.g., $K \leq 5 \;\text{or}\; 6 $). For larger federations where the exponential validation cost becomes prohibitive, we revert to a simpler strategy such as a uniform global rate. Further, in practice, we run FedHOO only in the early rounds to quickly find stable client-specific rates, then fix them for the remaining rounds.  Empirically, Appendix~\ref{appx:resampling_rates_analysis} shows that FedHOO reaches stable resampling-rates within a small number of rounds, whereas a standard exploration strategy that evaluates only one configuration per round fails to stabilize within the same training rounds.

%%%%% %%%%% %%%%% %%%%% %%%%%
%%%%%  4. Experiments   %%%%%
%%%%% %%%%% %%%%% %%%%% %%%%%
\section{Experiments}\label{sec:experiment}

\subsection{Experiment Settings}\label{subsec:experiment_setting}

Our experiments evaluate FedCGNM and FedHOO on five classification benchmarks: two long-tailed image collections (CIFAR-10-LT and CIFAR-100-LT \citep{krizhevsky2009learning}), two tabular datasets (Adult Income \citep{becker1996adult} and UNSW-NB15 \citep{moustafa2015unsw}), and a semiconductor chip-defect dataset. We control the degree of class imbalance via the imbalance rate \(\xi\) where larger $\xi$ corresponds to more extreme imbalance. Details of dataset and imbalance settings are provided in Appendix~\ref{appx:appendix_experiment_detail}.

The semiconductor Chip-Defect-Detection (CDD) corpus consists of approximately 780,000 high-resolution (224 × 224 after pre-processing) images captured on select years after 2020 from five factories. The ratio of defect images is 1.7\%, with seven defect categories observed across hundreds of product types. Additional heterogeneity distributions are presented in Figure~\ref{fig:CDD_main_figure} and Figure~\ref{fig:factory_dist}. We split seventy percent of the dataset to training, fifteen percent to validation, and the remaining fifteen percent to test. We report the company’s weighted accuracy that balances defect recall and non-defect precision. Some exact counts are withheld in accordance with the non-disclosure agreement. For full details of the implementation, including the selection of hyperparameters, see Appendix~\ref{appx:appendix_experiment_detail}.

We compare FedCGNM against seven alternatives: (1) FedAvg \citep{mcmahan2017communication} with standard SGD, (2) FedAvg using class-weighted cross-entropy \citep{aurelio2019learning}, (3) FedAvg with Ratio Loss \citep{wang2021addressing}, (4) FedProx \citep{li2020federated}, (5) CLIMB \citep{shen2022an}, (6) FedGraB \citep{xiao2023fed}, and (7) FedCGN, which adapts Per-Class Normalization to a grouped setting and employs CGN for local optimization. We compare optimization-based imbalance methods because our approach also targets the optimization dynamics, and both our method and these baselines can be readily combined with complementary remedies such as resampling or knowledge distillation. All methods share identical backbones and hyper-parameter schedules, but see Appendix for details of models, initial learning rate, batch size, weight decay and learning-rate scheduler. We use $H=2$ for all grouping implementations.

\begin{table*}[t]
    \centering
    % \footnotesize
    % \scriptsize
    \tiny
    \setlength{\tabcolsep}{3pt}
    
    \caption{F1 scores on public benchmarks under two federation regimes. “C10” and “C100” abbreviate CIFAR-10 and CIFAR-100, and the trailing numbers denote the imbalance rates. The letter following each baseline denotes the resampling method, with “U” indicating a uniform global sampling-rate. Gray numbers represent the standard deviation across three independent runs.}
    \label{tab:public-results}
    \resizebox{.96\textwidth}{!}{
    \begin{tabular*}{0.9\textwidth}{l  c c c c c c c c}
    \toprule
    Algorithm & {C10-20} & {C10-100} & {C100-20} & {C100-100}
        & {Adult-3.17} & {Adult-10} & {Adult-20} & {UNSW-434} \\
    \midrule
    {\bfseries K = 5 + IID }& & & & & & & & \\
    FedAvg + U      & 0.8259 {\textcolor{gray}{${\pm}0.002$}} & 0.6177 {\textcolor{gray}{${\pm}0.003$}} & 0.4458 {\textcolor{gray}{${\pm}0.002$}} & 0.2466 {\textcolor{gray}{${\pm}0.004$}} & 0.8431 {\textcolor{gray}{${\pm}0.001$}} & 0.9038 {\textcolor{gray}{${\pm}0.001$}} & 0.9438 {\textcolor{gray}{${\pm}0.001$}} & 0.7705 {\textcolor{gray}{${\pm}0.001$}} \\
    FedProx + U     & 0.8256 {\textcolor{gray}{${\pm}0.003$}} & 0.6024 {\textcolor{gray}{${\pm}0.002$}} & 0.4483 {\textcolor{gray}{${\pm}0.002$}} & 0.2405 {\textcolor{gray}{${\pm}0.003$}} & 0.8444 {\textcolor{gray}{${\pm}0.001$}} & 0.9052 {\textcolor{gray}{${\pm}0.002$}} & 0.9410 {\textcolor{gray}{${\pm}0.002$}} & 0.7685 {\textcolor{gray}{${\pm}0.002$}} \\
    Weighted CE + U & 0.8186 {\textcolor{gray}{${\pm}0.006$}} & 0.5379 {\textcolor{gray}{${\pm}0.007$}} & 0.3610 {\textcolor{gray}{${\pm}0.006$}} & 0.2183 {\textcolor{gray}{${\pm}0.011$}} & 0.8394 {\textcolor{gray}{${\pm}0.002$}} & 0.9061 {\textcolor{gray}{${\pm}0.002$}} & 0.9359 {\textcolor{gray}{${\pm}0.002$}} & 0.7223 {\textcolor{gray}{${\pm}0.006$}} \\
    Ratio Loss + U  & 0.8274 {\textcolor{gray}{${\pm}0.007$}} & 0.6183 {\textcolor{gray}{${\pm}0.002$}} & 0.4402 {\textcolor{gray}{${\pm}0.004$}} & 0.2734 {\textcolor{gray}{${\pm}0.007$}} & 0.8428 {\textcolor{gray}{${\pm}0.002$}} & 0.9065 {\textcolor{gray}{${\pm}0.002$}} & 0.9408 {\textcolor{gray}{${\pm}0.002$}} & 0.7729 {\textcolor{gray}{${\pm}0.003$}} \\
    CLIMB + U       & 0.8518 {\textcolor{gray}{${\pm}0.005$}} & 0.6756 {\textcolor{gray}{${\pm}0.003$}} & 0.4439 {\textcolor{gray}{${\pm}0.003$}} & 0.2681 {\textcolor{gray}{${\pm}0.005$}} & 0.8270 {\textcolor{gray}{${\pm}0.002$}} & 0.9052 {\textcolor{gray}{${\pm}0.002$}} & 0.9411 {\textcolor{gray}{${\pm}0.001$}} & 0.7785 {\textcolor{gray}{${\pm}0.002$}} \\
    FedGraB + U     & 0.8454 {\textcolor{gray}{${\pm}0.004$}} & 0.6355 {\textcolor{gray}{${\pm}0.006$}} & 0.4321 {\textcolor{gray}{${\pm}0.004$}} & 0.2223 {\textcolor{gray}{${\pm}0.005$}} & 0.8251 {\textcolor{gray}{${\pm}0.003$}} & 0.8878 {\textcolor{gray}{${\pm}0.004$}} & 0.9279 {\textcolor{gray}{${\pm}0.002$}} & 0.7652 {\textcolor{gray}{${\pm}0.003$}} \\
    FedCGN + U      & 0.8335 {\textcolor{gray}{${\pm}0.003$}} & 0.7054 {\textcolor{gray}{${\pm}0.005$}} & 0.4925 {\textcolor{gray}{${\pm}0.004$}} & 0.3116 {\textcolor{gray}{${\pm}0.005$}} & 0.8415 {\textcolor{gray}{${\pm}0.002$}} & 0.9099 {\textcolor{gray}{${\pm}0.001$}} & 0.9414 {\textcolor{gray}{${\pm}0.002$}} & 0.7774 {\textcolor{gray}{${\pm}0.003$}} \\
    FedCGNM + U     & 0.8568 {\textcolor{gray}{${\pm}0.004$}} & 0.7432 {\textcolor{gray}{${\pm}0.002$}} & 0.4983 {\textcolor{gray}{${\pm}0.003$}} & 0.3165 {\textcolor{gray}{${\pm}0.004$}} & 0.8455 {\textcolor{gray}{${\pm}0.002$}} & 0.9136 {\textcolor{gray}{${\pm}0.001$}} & 0.9458 {\textcolor{gray}{${\pm}0.001$}} & 0.7804 {\textcolor{gray}{${\pm}0.002$}} \\
    FedCGNM + FedHOO & {\bfseries 0.8628} {\textcolor{gray}{${\pm}0.003$}} & {\bfseries 0.7485} {\textcolor{gray}{${\pm}0.004$}} & {\bfseries 0.5021} {\textcolor{gray}{${\pm}0.004$}} & {\bfseries 0.3183} {\textcolor{gray}{${\pm}0.006$}} & {\bfseries 0.8463} {\textcolor{gray}{${\pm}0.001$}} & {\bfseries 0.9151} {\textcolor{gray}{${\pm}0.002$}} & {\bfseries 0.9469} {\textcolor{gray}{${\pm}0.002$}} & {\bfseries 0.7825} {\textcolor{gray}{${\pm}0.001$}} \\
    \midrule
    \midrule
    {\bfseries K = 20 + IID}& & & & & & & & \\
    FedAvg + U          & 0.7128 {\textcolor{gray}{${\pm}0.004$}} & 0.4697 {\textcolor{gray}{${\pm}0.003$}} & 0.3544 {\textcolor{gray}{${\pm}0.002$}} & 0.1932 {\textcolor{gray}{${\pm}0.003$}} & 0.8448 {\textcolor{gray}{${\pm}0.001$}} & 0.9125 {\textcolor{gray}{${\pm}0.001$}} & 0.9428 {\textcolor{gray}{${\pm}0.001$}} & 0.7483 {\textcolor{gray}{${\pm}0.001$}} \\
    FedProx + U     & 0.7015 {\textcolor{gray}{${\pm}0.003$}} & 0.4601 {\textcolor{gray}{${\pm}0.002$}} & 0.3458 {\textcolor{gray}{${\pm}0.001$}} & 0.1884 {\textcolor{gray}{${\pm}0.002$}} & 0.8425 {\textcolor{gray}{${\pm}0.001$}} & 0.9096 {\textcolor{gray}{${\pm}0.001$}} & 0.9416 {\textcolor{gray}{${\pm}0.002$}} & 0.7472 {\textcolor{gray}{${\pm}0.002$}} \\
    Weighted CE + U & 0.6625 {\textcolor{gray}{${\pm}0.003$}} & 0.4426 {\textcolor{gray}{${\pm}0.004$}} & 0.3213 {\textcolor{gray}{${\pm}0.001$}} & 0.1807 {\textcolor{gray}{${\pm}0.002$}} & 0.8397 {\textcolor{gray}{${\pm}0.002$}} & 0.9081 {\textcolor{gray}{${\pm}0.002$}} & 0.9353 {\textcolor{gray}{${\pm}0.001$}} & 0.7143 {\textcolor{gray}{${\pm}0.005$}} \\
    Ratio Loss + U  & 0.7159 {\textcolor{gray}{${\pm}0.002$}} & 0.4563 {\textcolor{gray}{${\pm}0.004$}} & 0.3583 {\textcolor{gray}{${\pm}0.002$}} & 0.1950 {\textcolor{gray}{${\pm}0.003$}} & {\bfseries 0.8455} {\textcolor{gray}{${\pm}0.002$}} & 0.9128 {\textcolor{gray}{${\pm}0.001$}} & 0.9441 {\textcolor{gray}{${\pm}0.002$}} & 0.7427 {\textcolor{gray}{${\pm}0.004$}} \\
    CLIMB + U       & 0.7663 {\textcolor{gray}{${\pm}0.003$}} & 0.4987 {\textcolor{gray}{${\pm}0.003$}} & 0.3645 {\textcolor{gray}{${\pm}0.003$}} & 0.2091 {\textcolor{gray}{${\pm}0.002$}} & 0.8243 {\textcolor{gray}{${\pm}0.001$}} & 0.9001 {\textcolor{gray}{${\pm}0.002$}} & 0.9292 {\textcolor{gray}{${\pm}0.002$}} & 0.7581 {\textcolor{gray}{${\pm}0.002$}} \\
    FedGraB + U     & 0.7846 {\textcolor{gray}{${\pm}0.003$}} & 0.5051 {\textcolor{gray}{${\pm}0.002$}} & 0.3521 {\textcolor{gray}{${\pm}0.003$}} & 0.1657 {\textcolor{gray}{${\pm}0.004$}} & 0.8069 {\textcolor{gray}{${\pm}0.003$}} & 0.8852 {\textcolor{gray}{${\pm}0.003$}} & 0.9314 {\textcolor{gray}{${\pm}0.002$}} & 0.7527 {\textcolor{gray}{${\pm}0.003$}} \\
    FedCGN + U      & 0.7915 {\textcolor{gray}{${\pm}0.003$}} & 0.5210 {\textcolor{gray}{${\pm}0.004$}} & 0.3827 {\textcolor{gray}{${\pm}0.001$}} & 0.2138 {\textcolor{gray}{${\pm}0.003$}} & 0.8434 {\textcolor{gray}{${\pm}0.001$}} & 0.9099 {\textcolor{gray}{${\pm}0.002$}} & 0.9458 {\textcolor{gray}{${\pm}0.001$}} & 0.7729 {\textcolor{gray}{${\pm}0.002$}} \\
    FedCGNM + U     & {\bfseries 0.8010} {\textcolor{gray}{${\pm}0.002$}} & {\bfseries 0.5294} {\textcolor{gray}{${\pm}0.003$}} & {\bfseries 0.4051} {\textcolor{gray}{${\pm}0.002$}} & {\bfseries 0.2257} {\textcolor{gray}{${\pm}0.002$}} & 0.8452 {\textcolor{gray}{${\pm}0.001$}} & {\bfseries 0.9145} {\textcolor{gray}{${\pm}0.002$}} & {\bfseries 0.9476} {\textcolor{gray}{${\pm}0.001$}} & {\bfseries 0.7775} {\textcolor{gray}{${\pm}0.002$}} \\
    \bottomrule    
    \end{tabular*}}
\end{table*}

\begin{table*}[h]
\centering
\caption{Performance comparison under the Non-IID Scenario for $K=5$ and $K=20$.}
\label{tab:non_iid_results}
\tiny
\setlength{\tabcolsep}{3pt}
\resizebox{.96\textwidth}{!}{
\begin{tabular*}{0.9\textwidth}{l c c c c c c c c}
    \toprule
    Algorithm & {C10-20} & {C10-100} & {C100-20} & {C100-100} & {Adult-3.17} & {Adult-10} & {Adult-20} & {UNSW-434} \\
    \midrule
    {\bfseries K = 5 + Non-IID} & & & & & & & & \\
    FedAvg + U      & 0.7845 {\ssmall\textcolor{gray}{${\pm}0.001$}} & 0.5372 {\ssmall\textcolor{gray}{${\pm}0.003$}} & 0.4150 {\ssmall\textcolor{gray}{${\pm}0.002$}} & 0.2109 {\ssmall\textcolor{gray}{${\pm}0.003$}} & 0.8369 {\ssmall\textcolor{gray}{${\pm}0.001$}} & 0.8979 {\ssmall\textcolor{gray}{${\pm}0.001$}} & 0.9413 {\ssmall\textcolor{gray}{${\pm}0.002$}} & 0.7513 {\ssmall\textcolor{gray}{${\pm}0.001$}} \\
    FedProx + U     & 0.7826 {\ssmall\textcolor{gray}{${\pm}0.001$}} & 0.5401 {\ssmall\textcolor{gray}{${\pm}0.003$}} & 0.4089 {\ssmall\textcolor{gray}{${\pm}0.004$}} & 0.2085 {\ssmall\textcolor{gray}{${\pm}0.004$}} & 0.8371 {\ssmall\textcolor{gray}{${\pm}0.001$}} & 0.9001 {\ssmall\textcolor{gray}{${\pm}0.001$}} & 0.9411 {\ssmall\textcolor{gray}{${\pm}0.002$}} & 0.7559 {\ssmall\textcolor{gray}{${\pm}0.002$}} \\
    Weighted CE + U & 0.7622 {\ssmall\textcolor{gray}{${\pm}0.004$}} & 0.4961 {\ssmall\textcolor{gray}{${\pm}0.006$}} & 0.3480 {\ssmall\textcolor{gray}{${\pm}0.014$}} & 0.1932 {\ssmall\textcolor{gray}{${\pm}0.007$}} & 0.8334 {\ssmall\textcolor{gray}{${\pm}0.003$}} & 0.8968 {\ssmall\textcolor{gray}{${\pm}0.003$}} & 0.9382 {\ssmall\textcolor{gray}{${\pm}0.002$}} & 0.7372 {\ssmall\textcolor{gray}{${\pm}0.002$}} \\
    Ratio Loss + U  & 0.7904 {\ssmall\textcolor{gray}{${\pm}0.003$}} & 0.5335 {\ssmall\textcolor{gray}{${\pm}0.003$}} & 0.4093 {\ssmall\textcolor{gray}{${\pm}0.004$}} & 0.1997 {\ssmall\textcolor{gray}{${\pm}0.006$}} & 0.8388 {\ssmall\textcolor{gray}{${\pm}0.002$}} & 0.8965 {\ssmall\textcolor{gray}{${\pm}0.003$}} & 0.9389 {\ssmall\textcolor{gray}{${\pm}0.003$}} & 0.7509 {\ssmall\textcolor{gray}{${\pm}0.001$}} \\
    CLIMB + U       & 0.8233 {\ssmall\textcolor{gray}{${\pm}0.004$}} & 0.5761 {\ssmall\textcolor{gray}{${\pm}0.004$}} & 0.3852 {\ssmall\textcolor{gray}{${\pm}0.006$}} & 0.2258 {\ssmall\textcolor{gray}{${\pm}0.003$}} & 0.8265 {\ssmall\textcolor{gray}{${\pm}0.003$}} & 0.9017 {\ssmall\textcolor{gray}{${\pm}0.002$}} & 0.9408 {\ssmall\textcolor{gray}{${\pm}0.002$}} & 0.7509 {\ssmall\textcolor{gray}{${\pm}0.003$}} \\
    FedGraB + U     & 0.8166 {\ssmall\textcolor{gray}{${\pm}0.006$}} & 0.5825 {\ssmall\textcolor{gray}{${\pm}0.003$}} & 0.3733 {\ssmall\textcolor{gray}{${\pm}0.008$}} & 0.2211 {\ssmall\textcolor{gray}{${\pm}0.006$}} & 0.8243 {\ssmall\textcolor{gray}{${\pm}0.003$}} & 0.8914 {\ssmall\textcolor{gray}{${\pm}0.003$}} & 0.9291 {\ssmall\textcolor{gray}{${\pm}0.004$}} & 0.7358 {\ssmall\textcolor{gray}{${\pm}0.005$}} \\
    FedCGN + U      & 0.8223 {\ssmall\textcolor{gray}{${\pm}0.002$}} & 0.5832 {\ssmall\textcolor{gray}{${\pm}0.003$}} & 0.4210 {\ssmall\textcolor{gray}{${\pm}0.002$}} & 0.2462 {\ssmall\textcolor{gray}{${\pm}0.005$}} & 0.8421 {\ssmall\textcolor{gray}{${\pm}0.002$}} & 0.9054 {\ssmall\textcolor{gray}{${\pm}0.001$}} & 0.9412 {\ssmall\textcolor{gray}{${\pm}0.002$}} & 0.7637 {\ssmall\textcolor{gray}{${\pm}0.002$}} \\
    FedCGNM + U     & 0.8316 {\ssmall\textcolor{gray}{${\pm}0.001$}} & 0.5896 {\ssmall\textcolor{gray}{${\pm}0.004$}} & 0.4351 {\ssmall\textcolor{gray}{${\pm}0.003$}} & 0.2536 {\ssmall\textcolor{gray}{${\pm}0.005$}} & 0.8460 {\ssmall\textcolor{gray}{${\pm}0.002$}} & 0.9083 {\ssmall\textcolor{gray}{${\pm}0.001$}} & 0.9471 {\ssmall\textcolor{gray}{${\pm}0.001$}} & \textbf{0.7778} {\ssmall\textcolor{gray}{${\pm}0.001$}} \\
    FedCGNM + FedHOO & \textbf{0.8381} {\ssmall\textcolor{gray}{${\pm}0.002$}} & \textbf{0.5945} {\ssmall\textcolor{gray}{${\pm}0.003$}} & \textbf{0.4427} {\ssmall\textcolor{gray}{${\pm}0.005$}} & \textbf{0.2581} {\ssmall\textcolor{gray}{${\pm}0.004$}} & \textbf{0.8473} {\ssmall\textcolor{gray}{${\pm}0.001$}} & \textbf{0.9094} {\ssmall\textcolor{gray}{${\pm}0.002$}} & \textbf{0.9473} {\ssmall\textcolor{gray}{${\pm}0.001$}} & \textbf{0.7778} {\ssmall\textcolor{gray}{${\pm}0.002$}} \\
    \midrule
    \midrule
    {\bfseries K = 20 + Non-IID} & & & & & & & & \\
    FedAvg + U      & 0.6942 {\ssmall\textcolor{gray}{${\pm}0.001$}} & 0.3821 {\ssmall\textcolor{gray}{${\pm}0.004$}} & 0.3347 {\ssmall\textcolor{gray}{${\pm}0.002$}} & 0.1713 {\ssmall\textcolor{gray}{${\pm}0.002$}} & 0.8357 {\ssmall\textcolor{gray}{${\pm}0.001$}} & 0.9023 {\ssmall\textcolor{gray}{${\pm}0.002$}} & 0.9402 {\ssmall\textcolor{gray}{${\pm}0.001$}} & 0.7406 {\ssmall\textcolor{gray}{${\pm}0.002$}} \\
    FedProx + U     & 0.6958 {\ssmall\textcolor{gray}{${\pm}0.002$}} & 0.3971 {\ssmall\textcolor{gray}{${\pm}0.003$}} & 0.3350 {\ssmall\textcolor{gray}{${\pm}0.003$}} & 0.1686 {\ssmall\textcolor{gray}{${\pm}0.002$}} & 0.8368 {\ssmall\textcolor{gray}{${\pm}0.001$}} & 0.9015 {\ssmall\textcolor{gray}{${\pm}0.001$}} & 0.9389 {\ssmall\textcolor{gray}{${\pm}0.001$}} & 0.7358 {\ssmall\textcolor{gray}{${\pm}0.002$}} \\
    Weighted CE + U & 0.5768 {\ssmall\textcolor{gray}{${\pm}0.006$}} & 0.3971 {\ssmall\textcolor{gray}{${\pm}0.005$}} & 0.2903 {\ssmall\textcolor{gray}{${\pm}0.007$}} & 0.1638 {\ssmall\textcolor{gray}{${\pm}0.004$}} & 0.8335 {\ssmall\textcolor{gray}{${\pm}0.002$}} & 0.8980 {\ssmall\textcolor{gray}{${\pm}0.001$}} & 0.9382 {\ssmall\textcolor{gray}{${\pm}0.001$}} & 0.7184 {\ssmall\textcolor{gray}{${\pm}0.004$}} \\
    Ratio Loss + U  & 0.6596 {\ssmall\textcolor{gray}{${\pm}0.004$}} & 0.3913 {\ssmall\textcolor{gray}{${\pm}0.005$}} & 0.3418 {\ssmall\textcolor{gray}{${\pm}0.003$}} & 0.1641 {\ssmall\textcolor{gray}{${\pm}0.004$}} & 0.8363 {\ssmall\textcolor{gray}{${\pm}0.002$}} & 0.9030 {\ssmall\textcolor{gray}{${\pm}0.001$}} & 0.9421 {\ssmall\textcolor{gray}{${\pm}0.001$}} & 0.7254 {\ssmall\textcolor{gray}{${\pm}0.003$}} \\
    CLIMB + U       & 0.7263 {\ssmall\textcolor{gray}{${\pm}0.003$}} & 0.4305 {\ssmall\textcolor{gray}{${\pm}0.005$}} & 0.3216 {\ssmall\textcolor{gray}{${\pm}0.005$}} & 0.1828 {\ssmall\textcolor{gray}{${\pm}0.003$}} & 0.8233 {\ssmall\textcolor{gray}{${\pm}0.001$}} & 0.8956 {\ssmall\textcolor{gray}{${\pm}0.002$}} & 0.9315 {\ssmall\textcolor{gray}{${\pm}0.002$}} & 0.7216 {\ssmall\textcolor{gray}{${\pm}0.004$}} \\
    FedGraB + U     & 0.7391 {\ssmall\textcolor{gray}{${\pm}0.003$}} & 0.4425 {\ssmall\textcolor{gray}{${\pm}0.006$}} & 0.3133 {\ssmall\textcolor{gray}{${\pm}0.007$}} & 0.1634 {\ssmall\textcolor{gray}{${\pm}0.005$}} & 0.8106 {\ssmall\textcolor{gray}{${\pm}0.004$}} & 0.8897 {\ssmall\textcolor{gray}{${\pm}0.003$}} & 0.9309 {\ssmall\textcolor{gray}{${\pm}0.002$}} & 0.7020 {\ssmall\textcolor{gray}{${\pm}0.006$}} \\
    FedCGN + U      & 0.7361 {\ssmall\textcolor{gray}{${\pm}0.002$}} & 0.4512 {\ssmall\textcolor{gray}{${\pm}0.007$}} & 0.3390 {\ssmall\textcolor{gray}{${\pm}0.003$}} & 0.1785 {\ssmall\textcolor{gray}{${\pm}0.003$}} & 0.8389 {\ssmall\textcolor{gray}{${\pm}0.002$}} & 0.9079 {\ssmall\textcolor{gray}{${\pm}0.002$}} & 0.9425 {\ssmall\textcolor{gray}{${\pm}0.001$}} & 0.7519 {\ssmall\textcolor{gray}{${\pm}0.002$}} \\
    FedCGNM + U     & \textbf{0.7437} {\ssmall\textcolor{gray}{${\pm}0.004$}} & \textbf{0.4669} {\ssmall\textcolor{gray}{${\pm}0.004$}} & \textbf{0.3463} {\ssmall\textcolor{gray}{${\pm}0.003$}} & \textbf{0.1854} {\ssmall\textcolor{gray}{${\pm}0.003$}} & \textbf{0.8425} {\ssmall\textcolor{gray}{${\pm}0.002$}} & \textbf{0.9095} {\ssmall\textcolor{gray}{${\pm}0.002$}} & \textbf{0.9442} {\ssmall\textcolor{gray}{${\pm}0.001$}} & \textbf{0.7708} {\ssmall\textcolor{gray}{${\pm}0.001$}} \\
    \bottomrule
\end{tabular*}}
\end{table*}

Two federation regimes are considered. In the small-scale scenario includes five clients with full participation. In the moderate-scale scenario, twenty clients are available but only half are sampled each round.  Public datasets are partitioned IID/Non-IID across clients, where we generate disjoint non-IID client data using a Dirichlet distribution, Dir($\psi$), with a concentration parameter $\psi$ set to 0.5, as described in \cite{hsu2019measuring}. The chip corpus is divided according to natural factory boundaries, resulting in highly non-IID client data.

\subsection{Primary Real-World Evaluation: Chip-Defect Detection (CDD)}\label{subsec:chip_dataset}

The semiconductor CDD task provides the realistic evaluation in our study since this dataset reflects actual production condition, with only 1.7\% of samples exhibiting defects. Figure~\ref{fig:CDD_main_figure} shows that the standard FedAvg baseline stalls at around 73, while other FL baselines provide only marginal improvements. FedCGN pushes performance into 85, and FedCGNM adds another improvement. Using FedHOO as resampling strategy consistently improves training. This demonstrates that variance-aware grouping, per-group momentum, and efficient rate exploration are not only effective on public benchmarks but also critical for real industrial deployments.

% % %   %   % % %
%   % % %   %   %
%   %   % % %   %
%   %   %   % % %
% % % % % % %   %
\begin{figure}[t]
    \centering
    \begin{tabular}{@{}cc@{}}
         \includegraphics[width=0.19\textwidth]{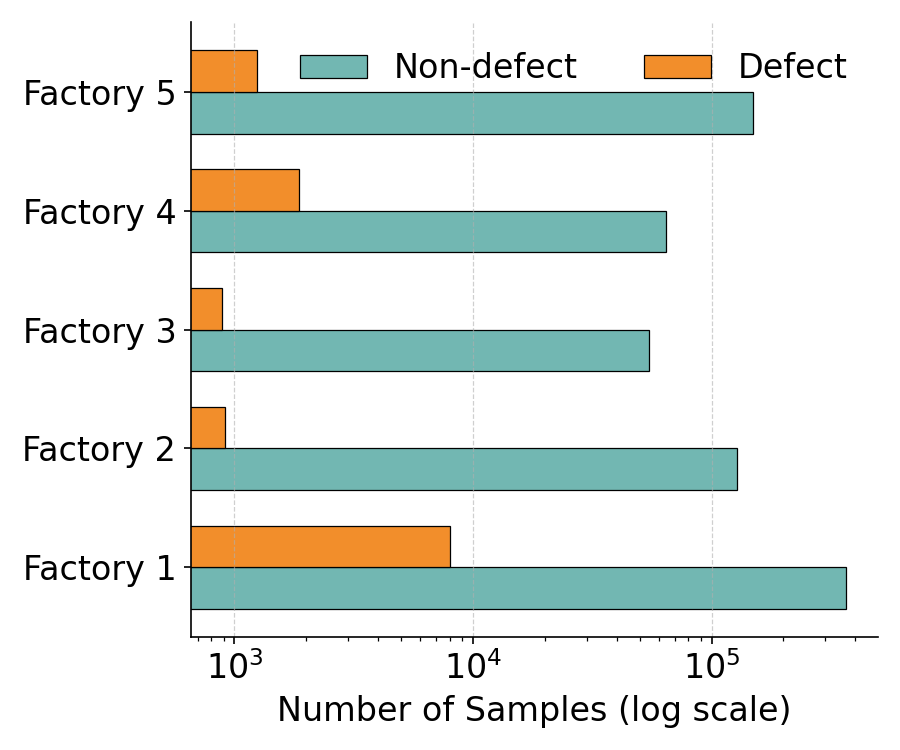} & 
         \includegraphics[width=0.253\textwidth]{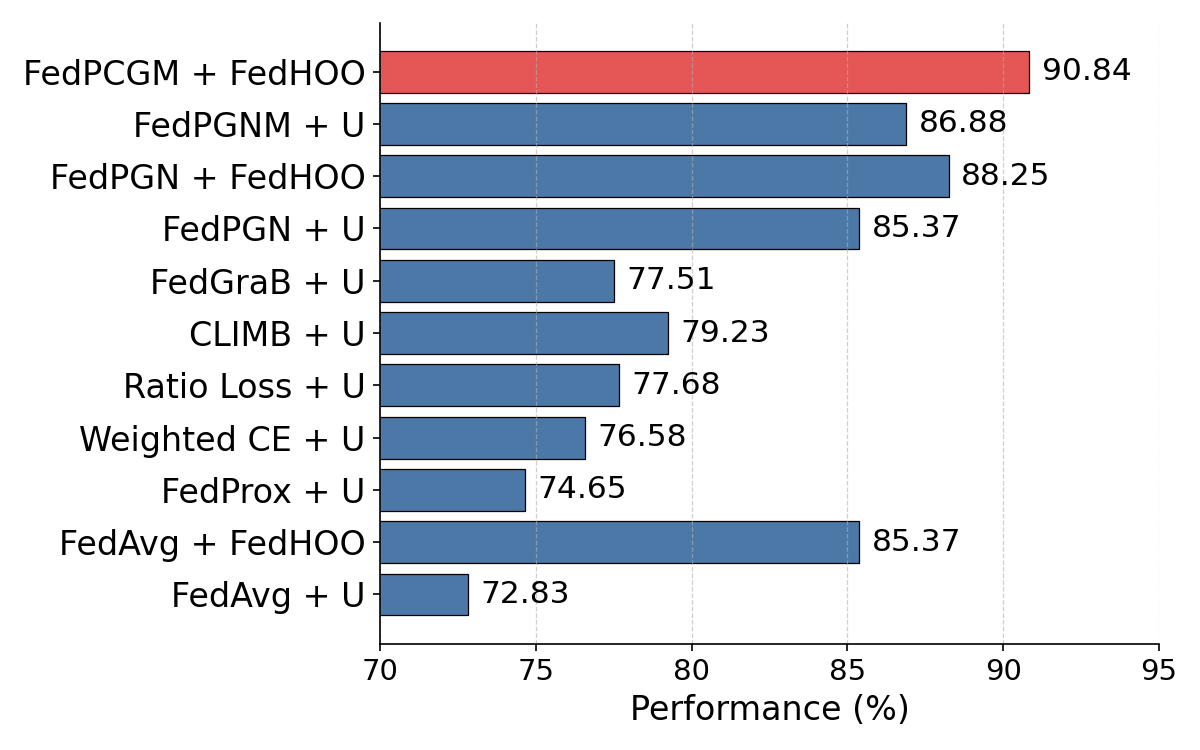}
         \\
         (a) & (b)
    \end{tabular}
    \caption{Analysis of the Chip-Defect Detection (CDD) dataset. (a) The data distribution showing heterogeneity across factories. (b) The comparative test performance of various algorithms on the CDD dataset, with the metric scaled from 0 to 100.}
    \label{fig:CDD_main_figure}
\end{figure}

\subsection{Evaluation on Public Datasets}\label{subsec:public_dataset}

Table~\ref{tab:public-results} summarizes F1 under two federation regimes. Across both federation regimes, FedCGNM outperforms all baselines by several points on average in all setting. In the small-client setting ($K=5$), coupling FedCGNM with FedHOO yields the best accuracy in all scenarios. Appendix~\ref{appx:resampling_rates_analysis} further isolates the effect of FedHOO by comparing it against other local exploration strategies . FedHOO consistently improves multiple FL baselines, supporting that its benefit comes from joint rate selection. Even without FedHOO, FedCGNM alone consistently beats competing optimizers. The only exception occurs under mild skew on Adult Income, where Ratio Loss briefly matches FedCGNM, suggesting that simple loss reweighting can suffice when imbalance is mild.

As the number of clients increases from five to twenty, the accuracy decreases for all methods due to the greater heterogeneity. However, FedCGNM exhibits noticeably milder degradation compared to FedCGN, which is the best baseline: on average, FedCGN loses about 14–37\% performance, while FedCGNM drops by only 7–32\%. This indicates that group momentum not only boosts performance in small federations but also makes optimization more robust to client scaling.

% (2) Effect of Number of Groups ==> combine with effect of grouping
\subsection{Sensitivity Analysis}\label{subsec:sensitivity_analysis}

\paragraph{Sensitivity to Imbalance Severity} 
On the multi-class image benchmarks (CIFAR-10 and CIFAR-100), all methods show declining F1 scores as the class distribution becomes more imbalanced, but FedCGNM’s drop is noticeably milder. By contrast, on the Adult Income task, FedCGNM offers only a slight improvement, reflecting the relative ease of a binary prediction problem.

\begin{figure}[h]
\centering

% --- LEFT: TABLE ---
\begin{minipage}[h]{0.48\textwidth}
\centering
\scriptsize
\captionof{table}{F1 score for FedCGNM with different group counts $H$.}
\begin{tabular}{l c c c c}
    \toprule
    Dataset & $K$ & $H=2$ & $H=3$ & $H=4$ \\
    \midrule
    \multirow{2}{*}{CF10-LT20} 
      & 5  & 0.8565 & 0.8400 & 0.8341 \\
      & 20 & 0.8010 & 0.7916 & 0.7148 \\
    \midrule
    \multirow{2}{*}{CF10-LT100} 
      & 5  & 0.7432 & 0.7328 & 0.7171 \\
      & 20 & 0.5294 & 0.5185 & 0.4957 \\
    \midrule
    \multirow{2}{*}{CF100-LT20} 
      & 5  & 0.4983 & 0.4941 & 0.4485 \\
      & 20 & 0.4051 & 0.3916 & 0.3547 \\
    \midrule
    \multirow{2}{*}{CF100-LT100} 
      & 5  & 0.3165 & 0.3121 & 0.2721 \\
      & 20 & 0.2257 & 0.2253 & 0.2007 \\
    \bottomrule
\end{tabular}
\label{tab:comparison_by_num_groups}
\end{minipage}
\hfill

% --- RIGHT: FIGURE ---
\begin{minipage}[h]{0.48\textwidth}
\vspace{0pt} % <-- forces top alignment
\centering
\includegraphics[width=.75\linewidth]{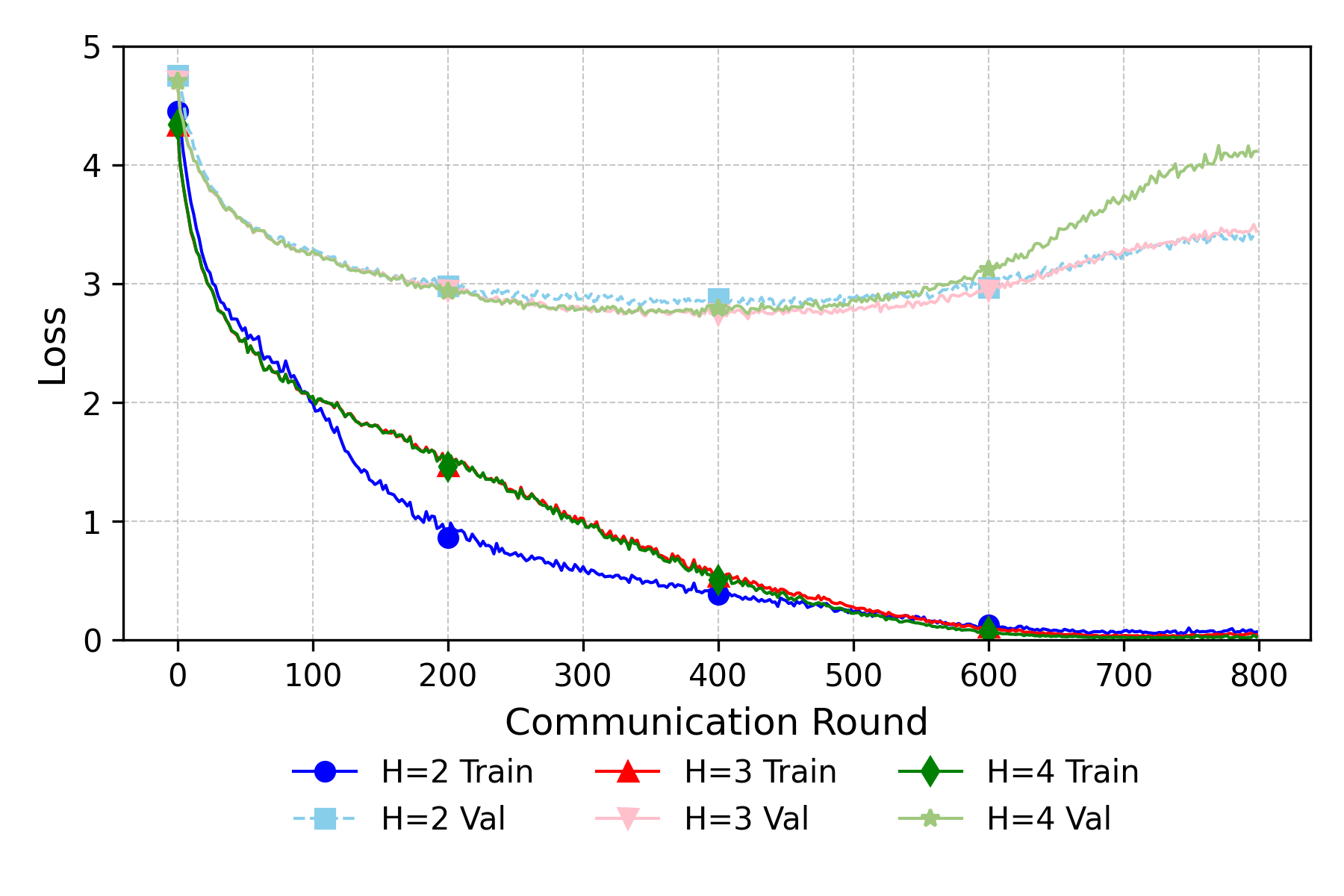}
\captionof{figure}{Training and validation loss on CIFAR-100-LT ($\xi=20$, $K=20$) for FedCGNM with different group counts \(H\).}
\label{fig:loss_curve_by_H}
\end{minipage}

\end{figure}

\paragraph{Effect of Grouping and Number of Groups}
We examine the impact of the number of groups using training and validation losses for $H=2,3,4$ in Figure~\ref{fig:loss_curve_by_H} and Table~\ref{tab:comparison_by_num_groups}. For additional results, see Appendix~\ref{appx:more_groups}). With two groups, validation loss remains lowest and most stable, whereas three or four groups lead to earlier increases in validation loss despite continued training loss reduction, indicating accelerated overfitting. On CIFAR-100, $H=3$ occasionally yields slightly better accuracy, but the gap is marginal, so two groups offer the best trade-off between balanced class influence and generalization. Beyond the number of groups, our grouping strategy based on minimizing within-group variance provides clear benefits compared to naive half-splits: on CIFAR-10-LT20 with $K=5$, our rule achieves an F1 score of 0.8565 versus 0.8382, and on CIFAR-100-LT20 with $K=5$, it improves from 0.4339 to 0.4983. These gains confirm that variance-aware grouping better balances class contributions and yields superior generalization across datasets.

\begin{table}[t]
    \centering
    \scriptsize
    \centering
    \caption{Accuracy (\%) across different $\beta$ values.}
    
    \begin{tabular}{l >
    {\centering\arraybackslash}p{0.20cm} >{\centering\arraybackslash}p{0.42cm} >{\centering\arraybackslash}p{0.42cm} >{\centering\arraybackslash}p{0.42cm} >{\centering\arraybackslash}p{0.42cm} >{\centering\arraybackslash}p{0.42cm} >{\centering\arraybackslash}p{0.42cm}}
    \toprule
    Dataset & K & 0 & 0.1 & 0.3 & 0.5 & 0.7 & 0.9 \\
    \midrule
    C10-20   & 5  & 83.35 & 83.72 & 84.21 & \textbf{85.65} & 84.27 & 83.93 \\
             & 20 & 79.15 & 79.55 & 79.55 & \textbf{80.10} & 79.50 & 79.54 \\
    \midrule
    C10-100  & 5  & 70.54 & 71.11 & 71.11 & \textbf{74.32} & 71.11 & 71.11 \\
             & 20 & 52.10 & 52.34 & 52.35 & \textbf{52.94} & 52.65 & 52.68 \\
    \midrule
    C100-20  & 5  & 49.25 & 49.33 & 49.35 & \textbf{49.83} & 49.46 & 49.76 \\
             & 20 & 38.27 & 39.16 & 39.52 & \textbf{40.51} & 39.84 & 40.24 \\
    \midrule
    C100-100 & 5  & 31.16 & 31.22 & 31.08 & \textbf{31.65} & 30.95 & 30.03 \\
             & 20 & 21.38 & 21.92 & 22.22 & \textbf{22.57} & 22.56 & 22.48 \\
    \bottomrule
    \end{tabular}
    \label{tab:comparison_by_beta}
\end{table}

\paragraph{Sensitivity to Group Momentum Factor}

We perform an ablation study on the group momentum factor $\beta$ across multiple datasets and client settings. The results in Table~\ref{tab:comparison_by_beta} show that very small ($\beta=0$, which is FedCGN) or very large ($\beta=0.9$) values generally underperform, while moderate values yield the strongest results. For example, on CIFAR-10-LT20 with $K=5$, accuracy improves from 83.35 at $\beta=0$ to a peak of 85.68 at $\beta=0.5$, and on CIFAR-100-LT20 with $K=20$, performance rises from 38.27 to 40.51 at the same setting. Similar trends appear consistently across datasets, indicating that moderate momentum factors strike the right balance between stability and adaptivity, thereby offering the best overall performance.

Additional experiments (Appendix~\ref{appx:additional_results}) confirm that FedHOO accelerates sampling-rate search compared to standard HOO, and FedCGNM maintains its advantage under large-client federations. These results demonstrate the robustness and generality of our approach across diverse and challenging federated learning conditions.

\section{Conclusion}
We introduced FedCGNM, a client‐side optimizer that balances majority and minority class influence by grouping labels and applying group-wise normalized momentum, and FedHOO, a federated exploration strategy that exploits FL parallelism to efficiently tune client sampling rates without exhaustive search (in small-$K$ regimes). We develop a convergence analysis that accounts for dynamic sampling-rate schedules. Experiments on multiple public benchmarks and a large-scale industrial chip-defect dataset demonstrate consistent improvements over competing baselines. Together, these components form a practical framework for mitigating global class imbalance in privacy‐preserving federated settings.

%
%
%
%
%
% Acknowledgements should only appear in the accepted version.
\section*{Acknowledgements}

The authors express sincere gratitude to Intel Corporation for their generous financial support, which was instrumental in completion of this research. Their commitment to advancing innovation has significantly contributed to the success of this project.

\section*{Impact Statement}

This paper presents work whose goal is to advance the field of Machine Learning. There are many potential societal consequences of our work, none which we feel must be specifically highlighted here.

% In the unusual situation where you want a paper to appear in the
% references without citing it in the main text, use \nocite
% \nocite{langley00}

\bibliography{reference}
\bibliographystyle{icml2026}

%%%%%%%%%%%%%%%%%%%%%%%%%%%%%%%%%%%%%%%%%%%%%%%%%%%%%%%%%%%%%%%%%%%%%%%%%%%%%%%
%%%%%%%%%%%%%%%%%%%%%%%%%%%%%%%%%%%%%%%%%%%%%%%%%%%%%%%%%%%%%%%%%%%%%%%%%%%%%%%
% APPENDIX
%%%%%%%%%%%%%%%%%%%%%%%%%%%%%%%%%%%%%%%%%%%%%%%%%%%%%%%%%%%%%%%%%%%%%%%%%%%%%%%
%%%%%%%%%%%%%%%%%%%%%%%%%%%%%%%%%%%%%%%%%%%%%%%%%%%%%%%%%%%%%%%%%%%%%%%%%%%%%%%
\newpage
\appendix
\onecolumn

\section{Proof}\label{appx:proof_convergence}
\vspace{1em}

\subsection{Supplementary Lemmas}\label{appx_sub:supp_lemmas}

% ===== [Lemma1] lemma:alignment =====
\begin{lemma}\label{lemma:alignment}
Let Assumptions~\ref{assumption:bounded_gradients} and~\ref{assumption:alignment} hold.
For every client $k$, group index $h$, global round $t$, and local step $i$,
\begin{equation}\label{eq:FedCGNM-pointwise-fixed}
  \Bigl\|
    \alpha_i \nabla f^{(t)}_{k,h}(x^{(t)}) -
    \frac{m_{k,h}^{(t,i)}}{\|m_{k,h}^{(t,i)}\| + \delta}
  \Bigr\|^{2}
  \;\le\;
  \rho\,
  \Bigl\|
    \alpha_i \nabla f^{(t)}_{k,h}(x^{(t)}) - m_{k,h}^{(t,i)}
  \Bigr\|^{2},
  \qquad
  \rho :=
  \max\!\Bigl\{
      \tfrac{2}{\gamma}\max\{1,\delta^{-2}\},\;
      \tfrac{(G+1)^2}{\kappa^{2}}
  \Bigr\}.
\end{equation}
\end{lemma}
% Proof for lemma:alignment
\begin{proof}
Fix $(k,h,t,i)$ and set 
$a:=\alpha_i \nabla f^{(t)}_{k,h}(x^{(t)})$, $b:=m_{k,h}^{(t,i)}$, 
$u:=b/(\|b\|+\delta)$, $d:=\|a\|$, $r:=\|b\|$,
and $\theta:=\angle(a,b)$. Note that $d=\|\alpha_i\nabla f^{(t)}_{k,h}(x^{(t)})\|\le \alpha_i G \le G$ by Assumption~\ref{assumption:bounded_gradients}.
Also define $s:=\|u\|=r/(r+\delta)\le 1$.

Define
\[
R(d,r,\theta)
\;:=\;
\frac{\|a-u\|^{2}}{\|a-b\|^{2}}
=
\frac{d^{2}+s^{2}-2ds\cos\theta}{d^{2}+r^{2}-2dr\cos\theta}.
\]
We will find an upper bound of $R(d,r,\theta)$ in the two mutually exclusive cases of Assumption~\ref{assumption:alignment}.

\begin{enumerate}[label=(\roman*)]
\item \emph{Angle gap:} $1-\cos\theta\geq \gamma$ implies
\[
\|a-b\|^{2}
=d^2+r^2-2dr\cos\theta
\ge d^2+r^2-2dr(1-\gamma)
=(d-r)^2+2\gamma dr
=\gamma(d^2+r^2)+(1-\gamma)(d-r)^2
\ge \gamma(d^2+r^2).
\]
For the numerator, by $\|x-y\|^2\le 2\|x\|^2+2\|y\|^2$ and $s=r/(r+\delta)\le r/\delta$,
\[
\|a-u\|^2 \le 2d^2+2s^2 \le 2d^2+\frac{2r^2}{\delta^2}
\le 2\max\{1,\delta^{-2}\}(d^2+r^2).
\]
Therefore
\[
R(d,r,\theta)\le \frac{2}{\gamma}\max\{1,\delta^{-2}\}.
\]

\item \emph{Norm gap:} $\bigl|\,r-d\,\bigr|\ge\kappa$ yields
\[
\|a-b\|^{2}\ge (r-d)^{2}\ge \kappa^{2}.
\]
Also $\|a-u\|\le \|a\|+\|u\|\le d+1\le G+1$, so
\[
R(d,r,\theta)\le \frac{(G+1)^2}{\kappa^2}.
\]
\end{enumerate}
Hence $R(d,r,\theta)\le\rho$ with $\rho$ given in~\eqref{eq:FedCGNM-pointwise-fixed}; substituting $R$ completes the proof.
\end{proof}

% ===== [Cor2] cor:FedCGNM-expected =====
\begin{corollary}\label{cor:FedCGNM-expected}
Under Assumption~\ref{assumption:bounded_gradients} and \ref{assumption:alignment}, for every client $k$, class~$h$, round~$t$, and local step $i$,

\begin{equation}\label{eq:FedCGNM-expected}
  \mathbb{E}
  \Bigl\| \alpha_i \nabla f^{(t)}_{k,h}(x^{(t)}) - \tfrac{m_{k,h}^{(t,i)}}{\|m_{k,h}^{(t,i)}\|+\delta} \Bigr\|^{2}
  \;\le\;
  \rho\, \mathbb{E}
  \Bigl\| \alpha_i \nabla f^{(t)}_{k,h}(x^{(t)}) - m_{k,h}^{(t,i)} \Bigr\|^{2},
  \qquad\forall k,h,t,i.
\end{equation}
\end{corollary}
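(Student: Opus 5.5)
The plan is to obtain Corollary~\ref{cor:FedCGNM-expected} by taking expectations in the pointwise bound of Lemma~\ref{lemma:alignment}, using monotonicity of expectation. Fix $(k,h,t,i)$. Lemma~\ref{lemma:alignment} holds for every realization of the randomness (mini-batches $\xi_{k,h}^{(t,j)}$ for $j\le i$, and everything in $\mathcal{F}_t$ that determines $x^{(t)}$ and $r^{(t)}$). So the random variables
\[
X:=\Bigl\| \alpha_i \nabla f^{(t)}_{k,h}(x^{(t)}) - \tfrac{m_{k,h}^{(t,i)}}{\|m_{k,h}^{(t,i)}\|+\delta} \Bigr\|^{2},
\qquad
Y:=\Bigl\| \alpha_i \nabla f^{(t)}_{k,h}(x^{(t)}) - m_{k,h}^{(t,i)} \Bigr\|^{2}
\]
satisfy $X\le \rho Y$ almost surely.

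The key point to check is that $\rho$ is deterministic. It depends only on $\gamma,\kappa,\delta,G$, which are constants from Assumptions~\ref{assumption:bounded_gradients} and~\ref{assumption:alignment}. Those assumptions are stated uniformly, "for all $x,k,h,t,i$" and pathwise, so the same $\rho$ applies on every sample path; in particular it does not depend on the random $m_{k,h}^{(t,i)}$ or on $x^{(t)}$. Hence $\rho Y-X\ge 0$ almost surely, and taking expectations gives $\mathbb{E}X\le\rho\,\mathbb{E}Y$ by linearity and monotonicity.

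Integrability needs a brief check. $X$ is bounded, since $\|a\|\le G$ and the normalized vector has norm at most $1$, so $X\le (G+1)^2$ and $\mathbb{E}X$ is finite. $Y$ is nonnegative, so $\mathbb{E}Y$ is well defined in $[0,\infty]$; if it is infinite the inequality is trivial. Otherwise it is finite, and in fact it is, because $m$ is a convex combination of stochastic gradients with bounded variance under Assumption~\ref{assumption:unbiased_and_variance}.

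The only subtle step, and the one I would state explicitly, is that Assumption~\ref{assumption:alignment} holds almost surely, pathwise, rather than merely in expectation. Without that, the pointwise lemma could not be integrated. Given the assumption as stated, the corollary follows directly.
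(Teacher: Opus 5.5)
Your argument is correct and is the same as the paper's. The bound of Lemma~\ref{lemma:alignment} holds on every realization with a deterministic constant $\rho$, so taking expectations of both sides gives the corollary. Your integrability check is a harmless addition, but the aside that $\mathbb{E}\bigl\|\alpha_i \nabla f^{(t)}_{k,h}(x^{(t)}) - m_{k,h}^{(t,i)}\bigr\|^2<\infty$ via Assumption~\ref{assumption:unbiased_and_variance} is unnecessary, since an infinite right-hand side makes the inequality trivial, and it uses an assumption outside the corollary's hypotheses.
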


\begin{proof}
Because inequality~\eqref{eq:FedCGNM-pointwise-fixed} is valid pointwise, it holds for every outcome of the algorithm’s randomness. Taking expectations on both sides preserves the inequality, yielding \eqref{eq:FedCGNM-expected}.
\end{proof}

%---------------------------------------
%  Lemma :  Mean-squared distance between the reference gradient and the momentum buffer after i local steps
%---------------------------------------
\begin{lemma}\label{lemma:momentum_error_alpha}
Let $\{x^{(t,s)}_k\}_{s=0}^{E}$ and $m_{k,h}^{(t,i)}$ be the local iterate and momentum of client $k$ in communication round $t$, produced by FedCGNM. Under Assumption~\ref{assumption:smoothness}-\ref{assumption:unbiased_and_variance}, for any group $h\in \{1,\ldots, H\}$, and  local step $i\in\{1,\dots,E\}$,

\begin{equation}
    \mathbb{E}\Bigl\|\alpha_i \nabla f_{k,h}^{(t)}(x_k^{(t,0)})-m_{k,h}^{(t,i)}\Bigr\|^2
    \;\le\;
    2 i^2 L^2 H^2 \eta^2
    \;+\;
    4i(1-\beta)^2\sigma^2.
    \label{eq:momentum_error_alpha}
\end{equation}

\end{lemma}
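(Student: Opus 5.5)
The plan is to unroll the momentum recursion and split the error into a deterministic drift part and a zero-mean noise part. Since $m_{k,h}^{(t,0)}=0$, unrolling gives
\[
m_{k,h}^{(t,i)}=(1-\beta)\sum_{j=1}^{i}\beta^{i-j}g_{k,h}^{(t,j)},
\qquad
\alpha_i=(1-\beta)\sum_{j=1}^{i}\beta^{i-j}.
\]
Write $g_{k,h}^{(t,j)}=\nabla f_{k,h}^{(t)}(x_k^{(t,j-1)})+\zeta_j$ with $\zeta_j$ the stochastic noise, evaluated at the iterate where the gradient is taken. Then
\[
\alpha_i\nabla f_{k,h}^{(t)}(x_k^{(t,0)})-m_{k,h}^{(t,i)}
=\underbrace{(1-\beta)\sum_{j=1}^{i}\beta^{i-j}\bigl(\nabla f_{k,h}^{(t)}(x_k^{(t,0)})-\nabla f_{k,h}^{(t)}(x_k^{(t,j-1)})\bigr)}_{A}
\;-\;\underbrace{(1-\beta)\sum_{j=1}^{i}\beta^{i-j}\zeta_j}_{B}.
\]
Using $\|A-B\|^2\le 2\|A\|^2+2\|B\|^2$ handles the two pieces separately.

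\emph{Drift term.} Each local step has norm at most $\eta H$, because every normalized group momentum $m/(\|m\|+\delta)$ has norm below $1$ and there are $H$ of them. Hence $\|x_k^{(t,j-1)}-x_k^{(t,0)}\|\le (j-1)\eta H\le i\eta H$. By $L$-smoothness (Assumption~\ref{assumption:smoothness}, applied to the group component $f_{k,h}$), each gradient difference is bounded by $L i\eta H$. The weights satisfy $(1-\beta)\sum_j\beta^{i-j}=\alpha_i\le 1$, so $\|A\|\le iLH\eta$ deterministically, and $2\|A\|^2\le 2i^2L^2H^2\eta^2$. This is the first term of the bound.

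\emph{Noise term.} By Assumption~\ref{assumption:unbiased_and_variance}, conditioned on the past each $\zeta_j$ has mean zero and second moment at most $\sigma^2$. The $\zeta_j$ therefore form a martingale difference sequence, and the cross terms vanish in expectation after conditioning on the filtration up to step $j$. This gives
\[
\mathbb{E}\|B\|^2=(1-\beta)^2\sum_j\beta^{2(i-j)}\mathbb{E}\|\zeta_j\|^2\le i(1-\beta)^2\sigma^2.
\]
A cruder bound suffices if needed: Cauchy--Schwarz gives $\|B\|^2\le i(1-\beta)^2\sum_j\|\zeta_j\|^2$. Multiplying the orthogonal-sum bound by $2$ yields $2i(1-\beta)^2\sigma^2$, which is within the stated $4i(1-\beta)^2\sigma^2$.

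\emph{Main obstacle.} The delicate point is applying the variance assumption to group-level stochastic gradients $\nabla f_{k,h}$. Assumption~\ref{assumption:unbiased_and_variance} is stated for $f_k$, so I would interpret it as holding per group; the slack factor $4$ likely absorbs an extra splitting of the noise. A second point is that $\zeta_j$ is evaluated at the random iterate $x_k^{(t,j-1)}$. I would handle this by conditioning on $\mathcal{F}$ at step $j-1$, where the iterate is measurable, before invoking unbiasedness.
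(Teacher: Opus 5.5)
Your proposal is correct and follows essentially the same route as the paper's proof. You unroll the momentum so the weights sum to $\alpha_i$, bound the drift via the per-step bound $\eta H$ and $L$-smoothness, and kill the noise cross terms by conditioning; your bound $\mathbb{E}\|B\|^2\le i(1-\beta)^2\sigma^2$ is even a factor two tighter than the paper's $2i(1-\beta)^2\sigma^2$. Delete the ``cruder'' Cauchy--Schwarz fallback, though: it only gives $\mathbb{E}\|B\|^2\le i^2(1-\beta)^2\sigma^2$, which is quadratic in $i$ and exceeds $4i(1-\beta)^2\sigma^2$ once $i\ge 3$, so the martingale orthogonality argument is genuinely needed.
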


\begin{proof}
Suppress fixed indices $(k,h,t)$ and define
\[
g_1:=\nabla f_{k,h}^{(t)}(x_{k}^{(t,0)}),\qquad
g_s:=\nabla f_{k,h}^{(t)}(x_k^{(t,s-1)}),\qquad
m_s:=m_{k,h}^{(t,s)},\qquad
g_s^{\rm st}:=g_{k,h}^{(t,s)}(x_k^{(t,s-1)}).
\]
Unrolling the momentum recursion yields
\[
m_i=(1-\beta)\sum_{j=1}^i \beta^{i-j} g_j^{\rm st}.
\]
With $w_j:=(1-\beta)\beta^{i-j}$, we have $\sum_{j=1}^i w_j = 1-\beta^i=\alpha_i$, hence
\[
\alpha_i g_1-m_i=\sum_{j=1}^i w_j\,(g_1-g_j^{\rm st})
=\underbrace{\sum_{j=1}^i w_j\,(g_1-g_j)}_{=:A}
+\underbrace{\sum_{j=1}^i w_j\,(g_j-g_j^{\rm st})}_{=:B}.
\]
Therefore, $\|A+B\|^2\le 2\|A\|^2+2\|B\|^2$ gives
\begin{equation}\label{eq:split_AB}
\mathbb{E}\|\alpha_i g_1-m_i\|^2 \le 2\mathbb{E}\|A\|^2+2\mathbb{E}\|B\|^2.
\end{equation}

\paragraph{(1) Drift term $A$.}
By Jensen inequality ($w_j \geq 0$ and $\sum_{j=1}^i w_j \leq 1$), we have
\[
\mathbb{E}\|A\|^2=\mathbb{E} \Bigl\|\sum_{j=1}^i w_j(g_1-g_j)\Bigr\|^2
\le \sum_{j=1}^i w_j \mathbb{E} \|g_1-g_j\|^2.
\]
Since $\|x^{(s)}-x^{(s-1)}\|\le \eta H$, we have
$\|x^{(j-1)}-x^{(0)}\|\le (j-1)\eta H$, and by $L$-smoothness,
\[
\|g_1-g_j\|\le L\|x^{(0)}-x^{(j-1)}\|\le L(j-1)\eta H,
\quad\Rightarrow\quad
\|g_1-g_j\|^2\le L^2H^2(j-1)^2\eta^2.
\]
Thus
\[
\mathbb{E}\|A\|^2
\le \sum_{j=1}^i w_j L^2H^2(j-1)^2\eta^2
\le i^2L^2H^2\eta^2 \sum_{j=1}^i w_j
\le i^2L^2H^2\eta^2.
\]

\paragraph{(2) Noise term $B$.}
Let $\varepsilon_j:=g_j^{\rm st}-g_j$. Then $\mathbb{E}[\varepsilon_j\mid\mathcal F_{j-1}]=0$ and
$\mathbb{E}\|\varepsilon_j\|^2\le \sigma^2$.
We have $B=-\sum_{j=1}^i w_j\varepsilon_j$ and, for $j<\ell$,
\[
\mathbb{E}\langle \varepsilon_j,\varepsilon_\ell\rangle
=\mathbb{E}\Bigl[\mathbb{E}\bigl[\langle \varepsilon_j,\varepsilon_\ell\rangle\mid \mathcal F_{\ell-1}\bigr]\Bigr]
=\mathbb{E}\bigl[\langle \varepsilon_j,\mathbb{E}[\varepsilon_\ell\mid\mathcal F_{\ell-1}]\rangle\bigr]=0,
\]
since $\varepsilon_j$ is $\mathcal F_{\ell-1}$-measurable.
Hence cross terms vanish and
\[
\mathbb{E}\|B\|^2
=\sum_{j=1}^i w_j^2\,\mathbb{E}\|\varepsilon_j\|^2
\le \sigma^2 \sum_{j=1}^i w_j^2.
\]
Now
\[
\sum_{j=1}^i w_j^2
=(1-\beta)^2\sum_{r=0}^{i-1}\beta^{2r}
=(1-\beta)^2\frac{1-\beta^{2i}}{1-\beta^2}
=\frac{1-\beta}{1+\beta}(1-\beta^{2i})
\le 2i(1-\beta)^2.
\]
Therefore $\mathbb{E}\|B\|^2 \le 2i(1-\beta)^2\sigma^2$.

\paragraph{Combine.}
Substitute the two bounds into \eqref{eq:split_AB} to obtain
\[
\mathbb{E}\|\alpha_i g_1-m_i\|^2
\le 2 i^2 L^2H^2\eta^2 + 4i(1-\beta)^2\sigma^2,
\]
which is \eqref{eq:momentum_error_alpha}. 
\end{proof}

% ===== Lemma: one-step objective difference under resampling drift =====
\begin{lemma}\label{lem:one_step_resampling_drift}
Let $f^{(t)}(x)=f(x;r^{(t)})=\sum_{k=1}^K p_k f_k(x;r_k^{(t)})$ with $p_k\ge 0$ and $\sum_k p_k=1$. Then for any iterates $x^{(t)}$ and $x^{(t+1)}$,
\begin{equation}\label{eq:one_step_resampling_drift}
f^{(t+1)}(x^{(t+1)})-f^{(t)}(x^{(t)})
\;\le\;
\Bigl(f^{(t)}(x^{(t+1)})-f^{(t)}(x^{(t)})\Bigr)
\;+\;
L_r\sum_{k=1}^K p_k\,\bigl|r_k^{(t+1)}-r_k^{(t)}\bigr|.
\end{equation}
\end{lemma}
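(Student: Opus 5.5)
The plan is to insert the intermediate quantity $f^{(t)}(x^{(t+1)})$, which separates the change of iterate from the change of sampling-rate:
\[
f^{(t+1)}(x^{(t+1)})-f^{(t)}(x^{(t)})
=\Bigl(f^{(t)}(x^{(t+1)})-f^{(t)}(x^{(t)})\Bigr)
+\Bigl(f^{(t+1)}(x^{(t+1)})-f^{(t)}(x^{(t+1)})\Bigr).
\]
The first bracket already appears on the right-hand side of \eqref{eq:one_step_resampling_drift} and is left untouched. The whole task is therefore to bound the second bracket, in which the model is frozen at $x:=x^{(t+1)}$ and only the rate vector moves from $r^{(t)}$ to $r^{(t+1)}$. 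No smoothness and no property of the FedCGNM iterates is needed. The identity holds pathwise for any pair of iterates, so the bound also holds after taking expectations later.

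For the second bracket, I use the definition $f(x;r)=\sum_k p_k f_k(x;r_k)$. Since each $f_k$ depends only on its own coordinate $r_k$, the difference splits client by client:
\[
f^{(t+1)}(x)-f^{(t)}(x)=\sum_{k=1}^K p_k\bigl(f_k(x;r_k^{(t+1)})-f_k(x;r_k^{(t)})\bigr).
\]
Bounding each summand in absolute value by $L_r|r_k^{(t+1)}-r_k^{(t)}|$ then yields exactly $L_r\sum_k p_k|r_k^{(t+1)}-r_k^{(t)}|$, which completes the proof.

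The main obstacle is justifying that per-client bound from Assumption~\ref{assumption:lipschitz_drift}. As written, the assumption controls only the aggregate $f$, and only in the norm $\|r-r'\|$.

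I would first try to derive it by changing one coordinate at a time. Let $r'$ agree with $r^{(t)}$ except in coordinate $k$, where it takes the value $r_k^{(t+1)}$. Then
\[
f(x;r')-f(x;r^{(t)})=p_k\bigl(f_k(x;r_k^{(t+1)})-f_k(x;r_k^{(t)})\bigr),
\]
and the assumption bounds this by $L_r|r_k^{(t+1)}-r_k^{(t)}|$. Telescoping over $k$ gives the unweighted bound $L_r\sum_k|r_k^{(t+1)}-r_k^{(t)}|$, which lacks the factor $p_k$.

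To obtain the weighted form stated in the lemma, I would therefore adopt the client-wise reading of Assumption~\ref{assumption:lipschitz_drift}: each $f_k(x;\cdot)$ is $L_r$-Lipschitz in its own scalar rate. This reading implies the aggregate assumption, since $\sum_k p_k|r_k-r_k'|\le\max_k|r_k-r_k'|\le\|r-r'\|$. I would state this interpretation explicitly. If the assumption has to be taken literally, the fallback is the telescoped unweighted bound with $\sum_k$ in place of $\sum_k p_k$, which changes only the constant in front of $V_T$ in Theorem~\ref{thm:fedcgnm-conv}.
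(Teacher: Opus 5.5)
Your proposal is correct and is essentially the paper's proof. The paper also bounds $|f^{(t+1)}(x)-f^{(t)}(x)|\le\sum_k p_k|f_k(x;r_k^{(t+1)})-f_k(x;r_k^{(t)})|\le L_r\sum_k p_k|r_k^{(t+1)}-r_k^{(t)}|$ for fixed $x$, applies it at $x=x^{(t+1)}$, and subtracts $f^{(t)}(x^{(t)})$.

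Your caveat about the assumption is well founded. The paper just cites Assumption~\ref{assumption:lipschitz_drift} for the per-client step, so it silently uses the client-wise Lipschitz reading you make explicit. The aggregate assumption as literally stated cannot give the $p_k$-weighted bound: take $f_2\equiv 0$ and $f_1(x;r_1)=(L_r/p_1)r_1$, which satisfies the aggregate assumption but violates the weighted bound whenever $p_1<1$. For your fallback, applying the assumption directly gives $L_r\|r^{(t+1)}-r^{(t)}\|$, which already improves on your telescoped $\ell_1$ bound without any telescoping.
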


\begin{proof}
First, for any fixed $x$,
\begin{align*}
\bigl|f^{(t+1)}(x)-f^{(t)}(x)\bigr|
&=\left|\sum_{k=1}^K p_k\bigl(f_k^{(t)}(x;r_k^{(t+1)})-f_k^{(t)}(x;r_k^{(t)})\bigr)\right| \\
&\le \sum_{k=1}^K p_k\,\bigl|f_k^{(t)}(x;r_k^{(t+1)})-f_k^{(t)}(x;r_k^{(t)})\bigr|
\;\le\; L_r\sum_{k=1}^K p_k\,|r_k^{(t+1)}-r_k^{(t)}|,
\end{align*}
where we used $p_k\ge 0$ and Assumption~\ref{assumption:lipschitz_drift}. In particular, applying this bound at $x=x^{(t+1)}$ gives
\[
f^{(t+1)}(x^{(t+1)}) \le f^{(t)}(x^{(t+1)}) + L_r\sum_{k=1}^K p_k\,|r_k^{(t+1)}-r_k^{(t)}|.
\]
Subtracting $f^{(t)}(x^{(t)})$ from both sides yields \eqref{eq:one_step_resampling_drift}.
\end{proof}

\subsection{Proof of Theorem~\ref{thm:fedcgnm-conv}}\label{appx_sub:proof_main}
\begin{proof}    
    For each client $k$ and communication round $t$, the local update can be expressed as
    
    \begin{equation}\label{eq:proof_1}
        x_k^{(t,E)} - x_k^{(t,0)} = -\eta \sum_{i=1}^E \sum_{h=1}^H\frac{m_{k,h}^{(t,i)}}{\| m_{k,h}^{(t,i)}\|} = -\eta E\,\Delta_k^{(t)},
    \end{equation}
    
    where we define $\Delta_k^{(t)} = \frac{1}{E}\sum_{i=1}^E  \sum_{h=1}^H \frac{m_{k,h}^{(t,i)}}{\| m_{k,h}^{(t,i)}\|}$ as the average local update of client $k$ in the communication round $t$.
    
    By $L$-smoothness assumption,
    
    \begin{equation}\label{eq:proof_main_1}
        \mathbb{E}\left[f^{(t)}(x^{(t+1)})\right] \leq \mathbb{E}\left[f^{(t)}(x^{(t)})\right] + \mathbb{E} \left[ \langle \nabla f^{(t)}(x^{(t)}), x^{(t+1)} - x^{(t)} \rangle \right] + \frac{L}{2} \mathbb{E} \| x^{(t+1)} - x^{(t)} \|^2.
    \end{equation}

    For the third term, by using Jensen's Inequality, we have
    
    \begin{equation}\label{eq:proof_2}
    \begin{split}
        \mathbb{E} \| x^{(t+1)} - x^{(t)} \|^2 
        &=\mathbb{E} \left\Vert \sum_k p_k \cdot \eta \sum_{i=1}^E \sum_{h=1}^H \frac{m_{k,h}^{(t,i)}}{\| m_{k,h}^{(t,i)}\|} \right\Vert^2 \\
        &\leq \eta^2 \sum_k p_k \mathbb{E} \left\Vert \sum_{i=1}^E \sum_{h=1}^H \frac{m_{k,h}^{(t,i)}}{\| m_{k,h}^{(t,i)}\|} \right\Vert^2
        \\
        &\leq \eta^2 \sum_k p_k E^2H^2 \\
        &\leq \eta^2 E^2 H^2.
    \end{split}
    \end{equation}

    By plugging (\ref{eq:proof_1}) and (\ref{eq:proof_2}) into (\ref{eq:proof_main_1}), we have

    \begin{equation}
        \mathbb{E}\left[f^{(t)}(x^{(t+1)})\right] - \mathbb{E}\left[f^{(t)}(x^{(t)})\right]
        \leq  -\eta E\,\mathbb{E} \left[ \langle \nabla f^{(t)}(x^{(t)}), \sum_{k} p_k\Delta_k^{(t)} \rangle \right] + \frac{\eta^2 E^2H^2L}{2}.
    \end{equation}\label{eq:proof_main_2}

    We define $A = \sum_{i=1}^E \frac{\alpha_i}{E} = 1 - \frac{\beta (1-\beta^E)}{E(1-\beta)}$ as the sum of $\alpha_i/E$. For some $\alpha>0$, we apply Young's Inequality $\langle a,b\rangle \leq \frac{\alpha}{2} \|a\|^2 + \frac{1}{2\alpha} \|b\|^2$, then we have

    \begin{equation}\label{eq:proof_main_3}
        \begin{split}
            \mathbb{E}\left[f^{(t)}(x^{(t+1)})\right] - \mathbb{E}\left[f^{(t)}(x^{(t)})\right]        
            &\leq
            -\eta E\,\mathbb{E} \left[ \langle  \nabla f^{(t)}(x^{(t)}), \sum_{k} p_k\Delta_k^{(t)} \rangle \right] + \frac{\eta^2 E^2H^2L}{2} \\
            &\leq
            -\eta E\,\mathbb{E} \left[ \langle  \nabla f^{(t)}(x^{(t)}), \sum_{k} p_k\Delta_k^{(t)} - A \nabla f^{(t)}(x^{(t)}) + A\nabla f^{(t)}(x^{(t)})\rangle \right] + \frac{\eta^2 E^2H^2L}{2} \\
            &\leq
            -\eta AE\, \mathbb{E}\|\nabla f^{(t)}(x^{(t)})\|^2 - \eta E\, \mathbb{E} \left[ \langle \nabla f^{(t)}(x^{(t)}), \sum_{k} p_k\Delta_k^{(t)} - A \nabla f^{(t)}(x^{(t)}) \rangle  \right] + \frac{\eta^2 E^2H^2L}{2} \\
            &\leq
            -\eta AE\, \mathbb{E}\|\nabla f^{(t)}(x^{(t)})\|^2 + \frac{\alpha\eta E}{2} \mathbb{E} \| \nabla f^{(t)}(x^{(t)})\|^2 \\ &\quad+\frac{\eta E}{2\alpha} \mathbb{E} \left\Vert  A \nabla f^{(t)}(x^{(t)}) - \sum_{k} p_k\Delta_k^{(t)} \right\Vert  + \frac{\eta^2 E^2H^2L}{2} \\
            &= -\eta E(A-\frac{\alpha}{2})\, \mathbb{E}\|\nabla f^{(t)}(x^{(t)})\|^2 +\frac{\eta E}{2\alpha} \mathbb{E} \left\Vert  A \nabla f^{(t)}(x^{(t)}) - \sum_{k} p_k\Delta_k^{(t)} \right\Vert + \frac{\eta^2 E^2H^2L}{2}.
        \end{split}
    \end{equation}

    Rewrite the discrepancy in~\eqref{eq:proof_main_3} as

    \begin{equation}
    \begin{split}
    A\,\nabla f^{(t)}(x^{(t)})-\!\sum_{k}p_{k}\Delta_{k}^{(t)} 
    &=\sum_{k=1}^K p_k (A\nabla f_k^{(t)}(x^{(t)}) - \Delta_k^{(t)})
    \\
    &=\sum_{k=1}^K p_k \left( \frac{1}{E}\sum_{i=1}^E \alpha_i \nabla f_k^{(t)}(x^{(t)}) - \frac{1}{E}\sum_{i=1}^E \sum_{h=1}^H \tfrac{m_{k,h}^{(t,i)}}{\|m_{k,h}^{(t,i)} +\delta \|} \right)
    \\
    &=
    \frac{1}{E}\sum_{k=1}^K p_{k}\sum_{i=1}^{E}\sum_{h=1}^{H} \Bigl( \alpha_i \nabla f^{(t)}_{k,h}(x^{(t)})- \tfrac{m_{k,h}^{(t,i)}}{\|m_{k,h}^{(t,i)}+\delta\|} \Bigr)
    \end{split}
    \end{equation}\label{eq:proof_discrepancy_1}

    Applying Cauchy–Schwarz and then Corollary \ref{cor:FedCGNM-expected},

    \begin{equation}\label{eq:proof_discrepancy_2}
    \begin{split}
    \mathbb{E}\!
    \Bigl\|
    A\,\nabla f^{(t)}(x^{(t)})-\!\sum_{k}p_{k}\Delta_{k}^{(t)}
    \Bigr\|^{2}
    &\le
    \frac{H}{E}
    \sum_{k}p_{k}\sum_{i}\sum_{h}
    \mathbb{E}
    \Bigl\|
    \alpha_i \nabla f_{k,h}^{(t)}(x^{(t)})-
    \tfrac{m_{k,h}^{(t,i)}}{\|m_{k,h}^{(t,i)}+\delta\|}
    \Bigr\|^{2}                                          \\[2pt]
    &\le
    \frac{\rho H}{E}
    \sum_{k}p_{k}\sum_{i}\sum_{h}
    \mathbb{E}
    \Bigl\|
    \alpha_i \nabla f^{(t)}_{k,h}(x^{(t)})-m_{k,h}^{(t,i)}
    \Bigr\|^{2}.
    \end{split}
    \end{equation}

    Combining the descent bound \eqref{eq:proof_main_3}, discrepency bound \eqref{eq:proof_discrepancy_2} yields, for every global round \(t\),

    \begin{equation}\label{eq:descent_per_round_1}
    \begin{split}
    \mathbb{E}\!\left[f^{(t)}(x^{(t+1)})\right]-\mathbb{E}\!\left[f^{(t)}(x^{(t)})\right]
    &\le
    - \eta E(A-\frac{\alpha}{2})
       \mathbb{E}\bigl\|\nabla f^{(t)}(x^{(t)})\bigr\|^{2} +\frac{\eta^{2}E^{2}H^{2}L}{2}
    \\&\quad +
    \frac{\eta E}{2\alpha}\!\left(\frac{\rho H}{E} \sum_k p_k
                  \sum_{i=1}^{E}\sum_{h=1}^{H}
                  \mathbb{E}\!\bigl[
                            \|\alpha_i\nabla f^{(t)}_{k,h}(x^{(t)})-m_{k,h}^{(t,i)}\|^{2}
                          \bigr]\right).
    \end{split}
    \end{equation}

    We apply the moment‐difference bound of Lemma \ref{lemma:momentum_error_alpha}, then we have

    \begin{equation}\label{eq:descent_per_round_2}
    \begin{split}
    \mathbb{E}\!\left[f^{(t)}(x^{(t+1)})\right]-\mathbb{E}\!\left[f^{(t)}(x^{(t)})\right]
    &\le
    -\eta E(A-\frac{\alpha}{2}) \mathbb{E} \|\nabla f^{(t)}(x^{(t)})\| +\frac{2\rho(1-\beta)^2 \sigma^2 E^2H^2}{\alpha} \eta + \frac{ E^2 H^2 L}{2}\eta^2 + \frac{\rho E^3H^4L^2}{\alpha}     \eta^3 .
    \end{split}
    \end{equation}

    By taking expectations from Lemma~\ref{lem:one_step_resampling_drift}, we have 

    \begin{equation}\label{eq:descent_per_round_3}
        \mathbb{E}[f^{(t+1)}(x^{(t+1)})] - \mathbb{E}[f^{(t)}(x^{(t)})] 
        \leq 
        \mathbb{E}[f^{(t)}(x^{(t+1)})] - \mathbb{E}[f^{(t)}(x^{(t)})] + L_r \sum_k p_k \mathbb{E}|r_k^{(t+1)} - r_k^{(t)}|.
    \end{equation}

    We define $V_T:= \sum_{t=0}^{T-1}  \sum_k p_k \mathbb{E}|r_k^{(t+1)} - r_k^{(t)}|$. 
    Set $1-\beta =c\eta$ for some $c>0$ satisfying $c\eta < 1$. Applying \eqref{eq:descent_per_round_3}, re-arrainging \eqref{eq:descent_per_round_2} and summation over $T$ yields

    \begin{equation}\label{eq:proof_sum_gradients}
        \frac{1}{T} \sum_{t=0}^{T-1} \mathbb{E}\bigl\|\nabla f^{(t)}(x^{(t)})\bigr\|^{2}
        \leq \frac{\mathbb{E} [f^{(0)}(x^{(0)}) ] - \mathbb{E}[f^{(T)}(x^{(T)}]}{\eta D_0ET}
    + D_1\eta + D_2\eta^2 + \frac{L_r}{\eta D_0ET} \cdot V_T,
    \end{equation}

    where the deterministic coefficients are
    \begin{align*}
        D_{0}&:=A-\frac{\alpha}{2},\\
        D_{1}&:=\frac{EH^2L}{2\left(A-\frac{\alpha}{2}\right)},\\
        D_{2}&:=\frac{\rho EH^2}{\alpha\left( A-\frac{\alpha}{2}\right)}(2c^2\sigma^2 + EH^2L^2) .
    \end{align*}

    By the assumption that the objective function is bounded from below, we have $\mathbb{E}[f(x^{(T)})] \geq f_{\text{inf}}$. Consequently, $\mathbb{E}[f^{(0)}(x^{(0)})] - \mathbb{E}[f^{(T)}(x^{(T)})] \leq \mathbb{E}[f^{(0)}(x^{(0)})] - f_{\text{inf}}$. We choose $\alpha = 2A-C >0$ for some constant C. Substituting $\eta = \eta_0 T^{-1/2}$ into \eqref{eq:proof_sum_gradients}, we analyze each term on the right-hand side.

    \begin{align*}
        \frac{\mathbb{E} [f^{(0)}(x^{(0)}) ] - \mathbb{E} [f^{(T)}(x^{(T)}) ]}{\eta D_0ET} &\leq \frac{\mathbb{E} [f^{(0)}(x^{(0)}) ] - f_{\text{inf}}}{\eta CET} = \mathcal{O}(T^{-1/2}),
        \\
        D_1 \eta &= \frac{EH^2L}{2(A - \frac{\alpha}{2})} \cdot \eta = \frac{EH^2L}{2C} \cdot \eta = \mathcal{O}(T^{-1/2}), 
        \\
        D_2 \eta^2 &= \frac{\rho EH^2}{\alpha(A - \frac{\alpha}{2})}(2c^2\sigma^2 + EH^2L^2) \cdot \eta^2 
        \\
        &= \frac{\rho EH^2}{(2A-C)C}(2c^2\sigma^2 + EH^2L^2) \cdot \eta^2 = \mathcal{O}(T^{-1}),
        \\
        \frac{L_r}{\eta D_0ET} \cdot V_T &=  \frac{L_r}{\eta CET} \cdot V_T = \mathcal{O}(V_T \cdot T^{-1/2})
    \end{align*}

    Therefore, the RHS of \eqref{eq:proof_sum_gradients} is $\mathcal{O}(T^{-1/2}) + \mathcal{O}(V_T \cdot T^{-1/2})$. Taking the minimum over \(t=0,\dots,T-1\) on the left and observing that each term is nonnegative gives the same upper bound for
    \(\min_{t< T}\mathbb{E}[\|\nabla f^{(t)}(x^{(t)})\|^2]\). Hence,

    \begin{equation}\label{eq:proof_convergence_final}
        \min_{t< T}\mathbb{E}[\|\nabla f^{(t)}(x^{(t)})\|^2] \leq \mathcal{O}(T^{-1/2})  + \mathcal{O}(V_T \cdot T^{-1/2}),
    \end{equation}
    
    which completes the proof of Theorem \ref{thm:fedcgnm-conv}. 
    \qed
       
\end{proof}

% ============
% Section B.
% ============
\section{Details of FedHOO}\label{appx:appendix_fedhoo_detail}
This section formalizes FedHOO as a continuous hyperparameter search problem over client-wise resampling-rates. We first cast rate selection as an X-armed bandit on the domain \(\mathcal{X}=[r_{\min},r_{\max}]^{K}\), and briefly review how classical hierarchical optimistic optimization (HOO) navigates this space via a tree-based partition. We then introduce FedHOO, which retains HOO’s hierarchical structure but exploits the linear aggregation in federated learning to evaluate \(2^{K}\) joint rate configurations per round from only two local runs per client, enabling substantially faster exploration and stabilization in small-client regimes.

\subsection{FedHOO Algorithm}\label{appx:fedhoo_algorithm}
A possible way to find sampling-rates is to view the problem through the lens of the X-armed bandit (XAB) over the continuous domain $\mathcal{X} = [ r_{\min}, r_{\max}]^K$. An arm $r = (r_1,\ldots,r_K) \in \mathcal{X}$ specifies client rates, and pulling $r$ consists of executing one FL round where client $k$ trains with $r_k$, aggregating the local models, and evaluating the aggregated model. The reward $f(r)$ is the performance of the resulting aggregated model, and the learner must balance exploration and exploitation to identify a near-optimal $r\in\mathcal{X}$. Hierarchical Optimistic Optimization (HOO, \cite{bubeck2011x}) addresses XAB by creating a hierarchical, tree-based partition of the search space $\mathcal{X}$. The algorithm iteratively navigates this tree, selecting the most promising subregions to explore based on an optimistic estimate of their potential reward.

However, standard XAB methods (e.g., exhaustive search or HOO) converge too slowly under FL’s limited rounds and early sensitivity to hyperparameters. We introduce FedHOO, which exploits the parallelism of FL and is suitable for a small number of clients.  FedHOO retains HOO's tree of boxes and enables much faster exploration by letting clients perform local training with two rates and exploring all combinations of local rates for validations.

The space $\mathcal{X}=[r_{\min},r_{\max}]^{K}$ is organized as a $2^K$-ary tree. The root covers the entire search space, and each child $\nu$ halves its parent’s range, and the midpoint $c(\nu)$ of node $\nu$ ’s interval acts as its representative point. Each node $\nu$ in depth $h$ stores a box  $I(\nu) = \{[L_k(\nu), U_k(\nu)]\}_{k=1}^K$, an evaluation count $N(\nu)$, a running reward estimate $V(\nu)$, and an optimistic score $B(\nu)$. 

At round $t$, the server selects the leaf $\nu$ with largest $B(\nu)$. Let $\mathcal I(\nu)=\prod_{k}[L_{k},U_{k}]$ be its box and $h$ be its depth in the tree. For each client $k$, the server sends two rates $r_{k}^{L}=(3L_{k}+U_k)/4$, $r_{k}^{U}=(L_k+3U_{k})/4$, and the current model $x^{(t)}$. Client $k$ trains two local models (one per rate) and returns the deltas. By linearly combining these deltas, the server aggregates the models corresponding to \textbf{all} $2^K$ lower/upper choices, thereby evaluating the rewards of the $2^K$ child nodes in one round.

The optimistic score of $\nu$ is 
\begin{equation}
    B(\nu)\;=\;V(\nu)\;+ \; \tau\; \text{diam}(\nu)^h \; +\;\sqrt{\tfrac{\alpha\ln (t+1)}{N(\nu)}}\;,
\end{equation}
where \(\mathrm{diam}(\nu)\) is the diameter of $\mathcal{I}(\nu)$ and $\tau, \alpha>0$ are hyperparameters. The second term of $B(\nu)$ is borrowed from UCB bandits \citep{auer2002finite} exploration. In FedHOO, server also keep track of reward $V(\nu)$. Finally, the server expands $\nu$ into its $2^K$ child nodes corresponding to all combinations of two rates per client, and sets $x^{(t+1)}$ to the one with the highest reward among the $2^{K}$ candidates just evaluated.

Doubling each client’s local training time unlocks an exponential exploratory gain. Enumerating those combinations explicitly would be prohibitive, but FedHOO obtains the same information at linear cost by utilizing parallelism of FL, making it vastly more efficient than existing search methods. 

The strategy is especially advantageous in small federations, a configuration frequently encountered in industrial deployments. We therefore apply FedHOO when the number of  clients is low. When $K$ is large, we revert to a uniform global sampling-rate because significant exploration cannot be completed within a reasonable training budget and a single rate promotes update alignment, a consideration that becomes increasingly critical as $K$ grows. We present the pseudocode of FedHOO in Algorithm~\ref{alg:fedhoo}. We write $\mathrm{diam}(\nu) = \max_k \big(U_k(\nu)-L_k(\nu)\big)$ corresponding to the $\ell_\infty$ width of the interval and use $\odot$ as element-wise product.

\begin{algorithm}[ht]
\caption{FedHOO}
\label{alg:fedhoo}
\begin{algorithmic}[1]
\REQUIRE bounds $r_{\min},r_{\max}$, global rounds $T$, optimism constant $\alpha$
\STATE \textbf{initialize} $\nu_{\mathrm{root}}$ with interval $\mathcal I(\nu_{\mathrm{root}})=[r_{\min},r_{\max}]^{K}$; set $V(\nu_{\mathrm{root}})=0$, $N(\nu_{\mathrm{root}})=0$, $B(\nu_{\mathrm{root}})=+\infty$
\STATE \textbf{initialize} global model $x^{(0)}$

\FOR{$t=0,\dots,T-1$}
    \STATE $\nu\leftarrow\arg\max_{\text{leaf}} B(\nu)$; let $\mathcal I(\nu)=\prod_{k=1}^{K}[L_k,U_k]$
    \STATE \textbf{broadcast} $x^{(t)}$ and send $r_k^{L}=(3L_k+U_k)/4$, $r_k^{U}=(L_k+3U_k)/4$ to each client $k$
    \FOR{\textbf{each} client $k$ \textbf{in parallel}}
        \STATE train with rate $r_k^{L}$ starting from $x^{(t)}$ to obtain local update $\Delta_k^{L} = x^{(t)} - \bar{x}^{L}$; $\bar{x}^{L}$ is the resulting solution of the local training with $r_k^{L}$
        \STATE train with rate $r_k^{U}$ starting from $x^{(t)}$ to obtain local update  $\Delta_k^{U} = x^{(t)} - \bar{x}^{U}$; $\bar{x}^{U}$ is the resulting solution of the local training with $r_k^{U}$
        \STATE \textbf{return} $(\Delta_k^{L},\Delta_k^{U})$
    \ENDFOR
    \FOR{\textbf{each} $s\in\{0,1\}^{K}$}
        \STATE Define $\hat{r}^{(s)} = (1-s) \odot L + s \odot U$ and $\tilde{r}^{(s)} = s \odot L + (1-s) \odot U$
        % $r^{(s)}$ by $r_k^{(s)}=L_k$ if $s_k=0$, else $r_k^{(s)}=U_k$
        \STATE $\displaystyle\Delta^{(s)}=\sum_{k=1}^{K}p_k\!\bigl[(1-s_k)\Delta_k^{L}+s_k\Delta_k^{U}\bigr]$
        \STATE $x^{(s)}\gets x^{(t)}-\Delta^{(s)}$, \quad $R^{(s)}\gets\text{Validate}(x^{(s)})$
        \STATE create leaf $\nu_s$ with parent node $\nu$ with $I(\nu) = \Pi_k [\hat{r}^{(s)}_k, \tilde{r}_k^{(s)}]$
        \STATE set $V(\nu_s)\gets R^{(s)}$, $N(\nu_s)\gets 1$, $B(\nu_s)\gets V(\nu_s)$
    \ENDFOR
    %-- choose best corner and update model -----
    \STATE $s^{\star}\gets\arg\max_{s}R^{(s)}$, \quad $x^{(t+1)}\gets x^{(s^{\star})}$
    %-- average reward of all 2^K children ------
    \STATE $\bar R\gets 2^{-K}\sum_{s}R^{(s)}$
    %-- update the selected node and all its ancestors ----
    \FOR{\textbf{node} $\nu'$ on path from $\nu$ to root}
        \STATE $N(\nu')\leftarrow N(\nu')+2^{K}$
        \STATE $V(\nu')\leftarrow V(\nu')+\dfrac{2^{K}}{N(\nu')}\bigl(\bar R-V(\nu')\bigr)$
        \STATE $B(\nu')\leftarrow V(\nu')+ \tau\; \text{diam}(\nu)^h  + \sqrt{\alpha\ln(t+1)\,/\,N(\nu')}\, $
    \ENDFOR
\ENDFOR
\end{algorithmic}
\end{algorithm}

\subsection{Intuition with an Example}\label{appx:fedhoo_example}
To illustrate how FedHOO works, consider a simple setting with $K=2$ clients, and the search space is $[0,1]^2$. The root node of the search tree corresponds to the full box $[0,1]\times[0,1]$. If we split this root into four quadrants, the child nodes are 
\[
[0,0.5]\times[0,0.5], \quad [0,0.5]\times[0.5,1.0], \quad [0.5,1.0]\times[0,0.5], \quad [0.5,1.0]\times[0.5,1.0].
\]
Exploring further, the child $[0,0.5]\times[0,0.5]$ can itself be split into four sub-boxes such as $[0,0.25]\times[0,0.25]$ and $[0,0.25]\times[0.25,0.5]$, and so on. This recursive partitioning continues as the algorithm zooms in on promising regions.  

In the standard HOO algorithm, only \emph{one} node can be explored in each round. For example, at round 1, HOO would select just a single quadrant, say $[0,0.5]\times[0.5,1.0]$, and update its statistics based on a single evaluation. Over many rounds, this gradually builds information, but the search may proceed slowly because each evaluation provides feedback for only one region of the tree.  

By contrast, FedHOO leverages the federated setting to explore many regions at once using parallelism of FL. Because each client performs two local runs (with two different probe rates), the server can synthesize outcomes corresponding to all corner combinations of the current interval. In the two-client example, this means that in a \emph{single} round, FedHOO obtains rewards for all four quadrants simultaneously. For instance, at round 1, FedHOO evaluates the four children at depth $h=1$:  
\[
[0,0.5]\times[0,0.5], \quad [0,0.5]\times[0.5,1.0], \quad [0.5,1.0]\times[0,0.5], \quad [0.5,1.0]\times[0.5,1.0].
\]
This parallel evaluation dramatically accelerates the search. Rather than spending four rounds to cover these quadrants (as in HOO), FedHOO requires only one. As the depth increases, the same principle applies: four sub-intervals at depth $h=2$ can be evaluated together by reusing the two local runs per client.  

\subsection{Summary and Limitations}\label{appx:fedhoo_limitations}

The main advantage of FedHOO is that the federated setting allows the server to combine a small number of local runs into exponentially many synthetic evaluations. In the two-client example, two runs per client yield four corner evaluations in each round, and more generally $2^K$ corners can be evaluated from only two runs per client. This exponential coverage greatly accelerates the search compared to classical HOO, which can only evaluate a single node per round.  

FedHOO is not intended as a universal tuner for very large federations. Its main purpose is to exploit the small-client regime, which appears in industrial deployments such as our CDD benchmark. In this regime, the exponential candidate set is still manageable, and the ability to evaluate joint rate combinations is more valuable than independent local tuning. For large-scale FL, we recommend either using a uniform global rate or applying FedHOO at a coarser level, such as over client clusters rather than individual clients.

Another limitation of this approach is the overhead of validating $2^K$ candidate models at each round, which may become expensive for large $K$. In addition, extending the procedure to partial participation is not straightforward, since missing client updates can prevent consistent synthesis of all corners. These issues suggest that while FedHOO is powerful for moderate $K$ or cluster-level dimensions, further work is needed to make it scalable to very large federations.

\section{Detail of the Experiment Setting} \label{appx:appendix_experiment_detail}
\vspace{1em}
In this appendix, we provide full details of our experimental setup, including datasets, model architectures, training hyperparameters, and federated protocols.

\begin{figure}[p]
    \centering
    \includegraphics[width=.8\textwidth]{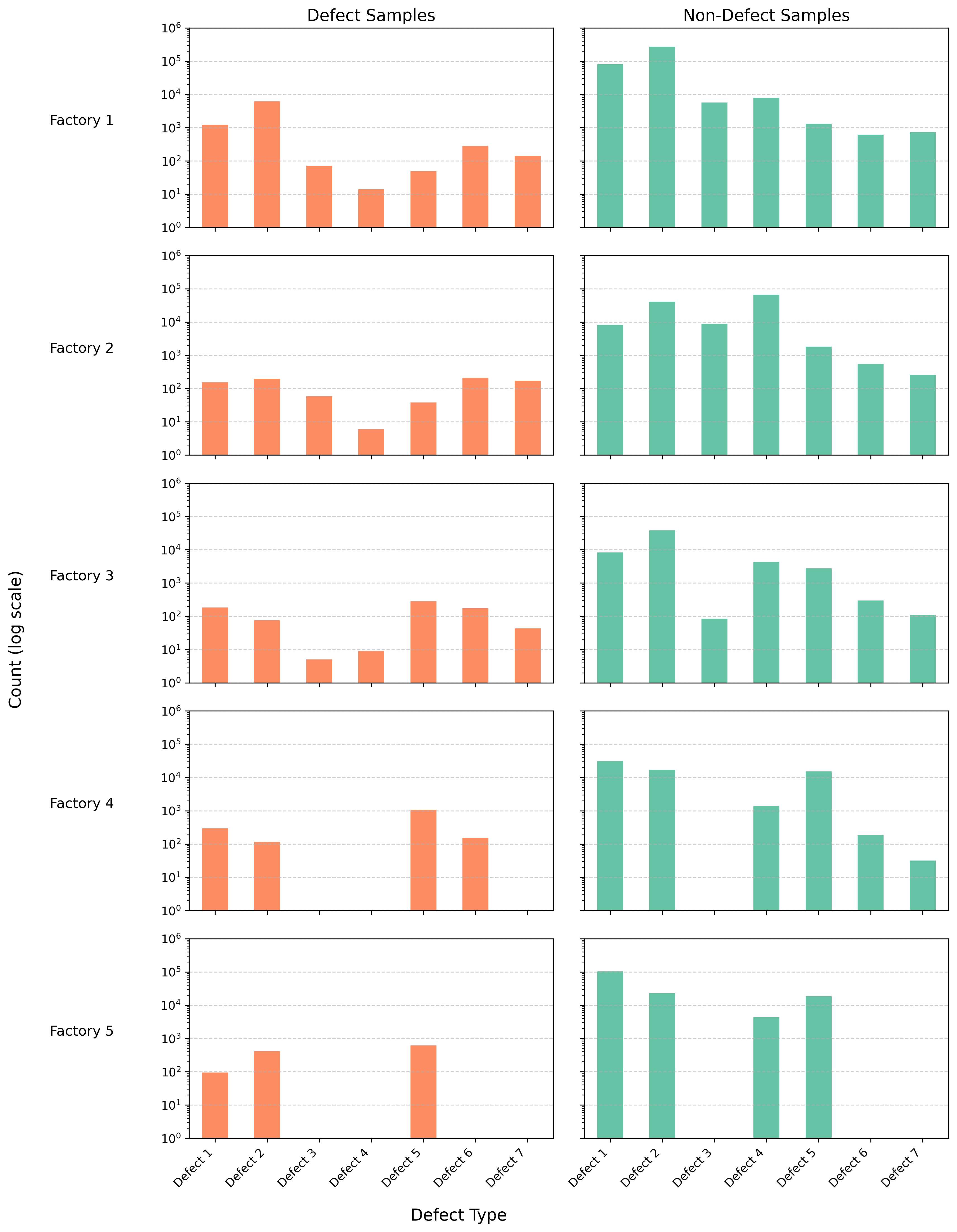}    
    \caption{Distribution of sample counts by factory and defect code. Each row represents one factory; the left panel displays defect counts and the right panel shows non‑defect counts, both on a logarithmic y‑axis.}
    \label{fig:factory_dist}
\end{figure}

\subsection{Datasets and Models}\label{appx:D1}

We evaluate on five benchmarks. CIFAR-10-LT and CIFAR-100-LT are long-tailed variants of CIFAR-10/100 with imbalance rates 20 and 100, constructed via the protocol of \citet{liu2019large}. Concretely, for imbalance rate \(\xi\), the number of training samples in class \(c\in\{0,\dots,C-1\}\) (sorted by frequency) is
\[
N_c \;=\; N_{\max}\,\xi^{-c/(C-1)},
\]
where \(N_{\max}\) is the majority-class size and \(C\) is the number of classes. If a dataset has balanced classes, we randomly sample from each class $c$ to have $N_c$ samples to make an imbalanced dataset. Test sets remain unaltered to test the performance of algorithms to produce balanced performance.

Adult Income \citep{becker1996adult} is a binary classification task with original imbalance ratio 3.17 and additional settings of 10 and 20 obtained by subsampling the minority class. UNSW-NB15 \citep{moustafa2015unsw} provides 175,341 network-flow records spanning ten classes with a natural imbalance of roughly 434. For Adult Income and UNSW datasets, we preserve the same class imbalance in the test splits as in their corresponding training sets to mirror real-world conditions. The proprietary Chip-Defect-Detection (CDD) corpus comprises of approximately 780k optical micrographs (224×224) from five semiconductor fabs, in which only 1.7\% of samples contain defects. CDD dataset is split by factory into 70\% train, 15\% validation, and 15\% test. To expose the class imbalance inherent in our dataset, Figure~\ref{fig:factory_dist} reports the number of defect and non‑defect samples recorded by each factory.

We train a ResNet18 \citep{he2016deep} model with group normalization from scratch on all public image benchmarks. For tabular tasks we employ a four layer fully connected network with hidden dimensions of 32, 16, 8, and 2 and ReLU activations on the Adult Income data and we adopt the CNN–LSTM architecture of \citet{pear2024enhanced}, consisting of stacked one-dimensional convolutional filters feeding into an LSTM to capture temporal correlations in network-flow features for UNSW-NB15. For CDD, we use ResNet-34. Table \ref{tab:model-architectures} summarizes each architecture.

\begin{table}[h]
  \centering
  \footnotesize
  \caption{Model specifications for each dataset used in our experiments.}
  \label{tab:model-architectures}
  \begin{tabular}{@{} l  l  l  p{5.5cm}  c  r @{}}
    \toprule
    \textbf{Dataset} & \textbf{Input} & \textbf{Backbone} 
      & \textbf{Principal Layers} \\
    \midrule
    CDD 
      & 224×224 RGB 
      & ResNet-34 
      & conv $7\times7$–GN–ReLU; 4× residual stages \\
    CIFAR-10-LT 
      & 32×32 RGB 
      & ResNet-18 
      & conv $3\times3$; 8× basic blocks \\
    CIFAR-100-LT 
      & 32×32 RGB 
      & ResNet-18 
      & conv $3\times3$; 8× basic blocks \\
    Adult Income 
      & 104-dim tabular 
      & 4-layer FFNN 
      & 32→16→8→2 with ReLU and GN \\
    UNSW-NB15 
      & 196-length sequence 
      & CNN–LSTM 
      & Conv1D[128,256,512] → LSTM → FC \\
    \bottomrule
  \end{tabular}
\end{table}

\subsection{Implementation and Training Details}\label{appx:D2}
All algorithms—FedAvg (SGD), FedProx, weighted cross-entropy, Ratio Loss, FedGraB, CLIMB, FedCGN, and FedCGNM—share the same training search strategy. We tune the initial learning rate over the set $\{0.4,0.2,0.1,0.05,0.01\}$ by grid search and then decay it with a cosine annealing curve that reaches $10^{-4}$ at the final round. Weight decay is selected from $\{10^{-4},10^{-3}\}$ and the batch size is selected from $\{32, 64, 128\}$. For FedCGNM, we set stabilizing constant $\delta=0.1$ across all datasets since the algorithm is robust under the choice of $\delta$ values. We also test the group momentum coefficient $\beta$ over $\{0.5,0.6,0.7,0.8,0.9\}$. Each communication round runs three local epochs when five clients participate and five local epochs when twenty clients participate on the image benchmarks; Adult Income and UNSW NB15 use a single local epoch per round because their training sets are comparatively small. All jobs are executed with PyTorch $2.4.1$ and CUDA $11.4$ on four NVIDIA TITAN Xp GPUs.

sampling-rates $r_k\in[0.4,0.8]$ are tuned via FedHOO when $K=5$, initializing a hierarchical tree over $[0.4,0.8]^K$ with optimism constant $\alpha=1.0$. In practice, since validation accuracy increases over rounds, we maintain each node’s reward as an exponential moving average of validation accuracies for fair comparison. In each round, clients evaluate two candidate rates $(r_k^L,r_k^U)$, return update deltas, and the server extrapolates rewards for all $2^K$ combinations at linear cost. We run FedHOO only in the initial training phase and terminate it once the selected sampling-rates remain within the same per-client search intervals for five consecutive rounds. We then fix the resulting rates and use them for the remaining rounds to avoid long-term oscillation and validation overhead. For $K=20$, a uniform global rate is used. In implementation, we perform three rounds of training without expanding tree to tackle cold start problem, since the initial training rounds are sensitive to sampling-rate choices, and cap the FedHOO search tree at depth 5. This provides a fine enough resolution of sampling-rates while keeping the exponential branching factor computationally manageable. Empirically, we found that deeper trees do not yield meaningful accuracy gains, as the noise inherent in federated training outweighs the benefit of additional granularity, whereas depth 5 provides a stable balance between exploration and efficiency.

In the small-scale regime ($K=5$), all clients participate each round. In the moderate-scale regime ($K=20$), we sample 50 \% of clients per round. Public datasets are split IID/Non-IID across clients; the CDD corpus is partitioned by factory based on real meta data to simulate non-IID conditions. 

\section{Further Discussion on Grouping Mechanism}\label{appx:discussion_grouping}

This appendix provides additional discussion and ablation results that clarify the role of our grouping mechanism in handling class imbalance. The central idea is to balance two competing effects in multi-class optimization under skewed data. Treating each class independently can be overly sensitive to noisy, small-sample gradients and may introduce unstable update scaling, whereas grouping multiple classes can reduce variance by aggregating signals but may increase bias when heterogeneous classes are merged. The following subsections examine these trade-offs from complementary angles. We first show the instability of per-class normalization in practice. We then validate our variance-based threshold selection and study how the number of groups affects the bias--variance and overfitting behavior.

\begin{figure}[h]
    \centering
    \begin{subfigure}[Training Loss]
        {\includegraphics[width=0.38\linewidth]{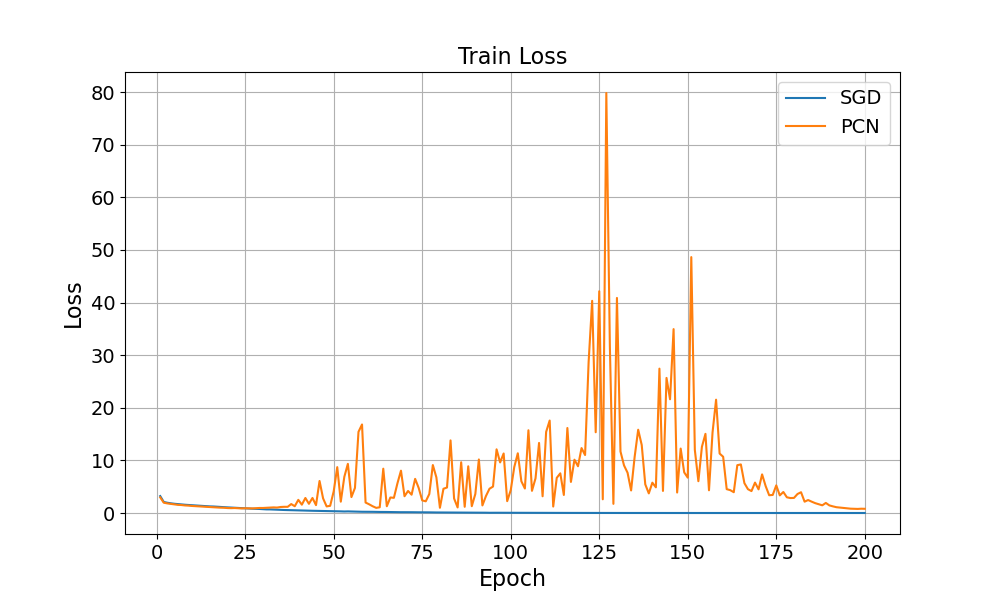}}
    \end{subfigure}
    \hspace{1em}
    \begin{subfigure}[Validation Accuracy]
        {\includegraphics[width=0.38\linewidth]{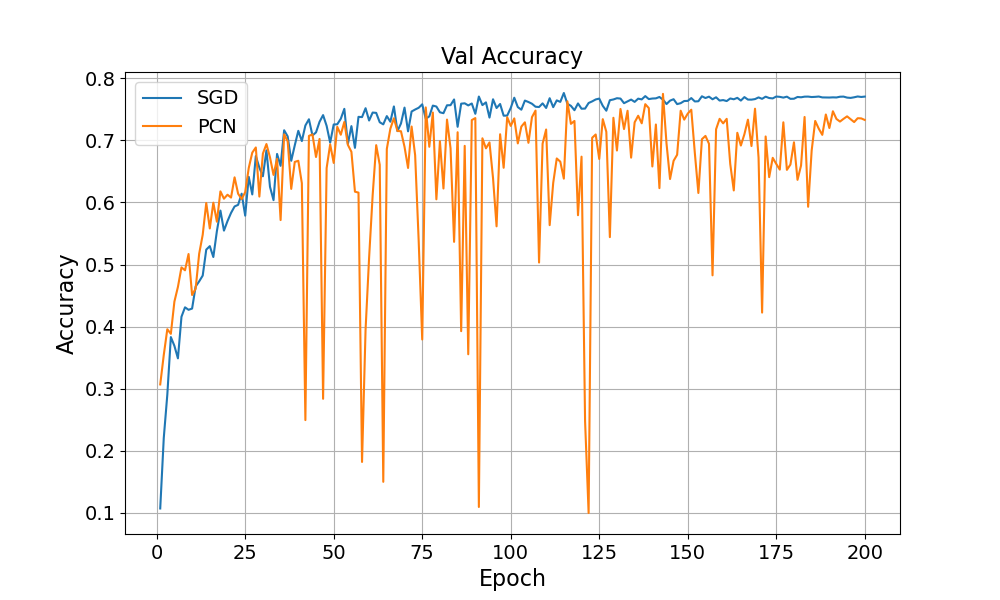}}
    \end{subfigure}
    \caption{Instability of PCN in a multi-class setting in centralized learning setting. PCN shows large loss spikes and intermittent drops in validation accuracy, supporting that per-class normalization can yield unstable optimization and poor generalization when class-wise directions are noisy or poorly estimated.}
    \label{fig:pcn_instability}
\end{figure}

\subsection{Limitations of Per-Class Normalization (PCN)}\label{app:pcn_limitations}

Per-Class Normalization (PCN) rescales each class-specific gradient to unit norm. While this equalizes per-class magnitudes, it can induce unstable optimization dynamics and degrade generalization in multi-class regimes. First, aggregating $C$ unit-norm class gradients yields an update whose norm can vary widely from near $0$ to as large as $C$, depending on the alignment among class directions. This introduces substantial scaling variability across iterations, which can impede stable convergence. Second, normalization does not reduce directional noise. When class-wise gradients fluctuate or are estimated from limited samples, PCN can amplify noisy directions and may overfit minority classes. Third, when $C$ is large, many classes are often absent from a mini-batch, and maintaining class-wise gradients becomes computationally and memory intensive.

Figure~\ref{fig:pcn_instability} provides empirical evidence of these issues in a centralized learning setting on CIFAR-10 with imbalance rate $20$, using the same backbone (ResNet18) for both methods. Compared with SGD, PCN exhibits pronounced training instability with large loss spikes, while validation accuracy fluctuates and intermittently collapses, consistent with high directional noise and unstable effective step scaling.

\subsection{Validation of the Variance‐Based Grouping Threshold}\label{appx:grouping_validation}

\begin{figure}[!h]
    \centering
    \begin{subfigure}[CIFAR-10 - Imbalance rate=20]
        {\includegraphics[width=0.33\textwidth]{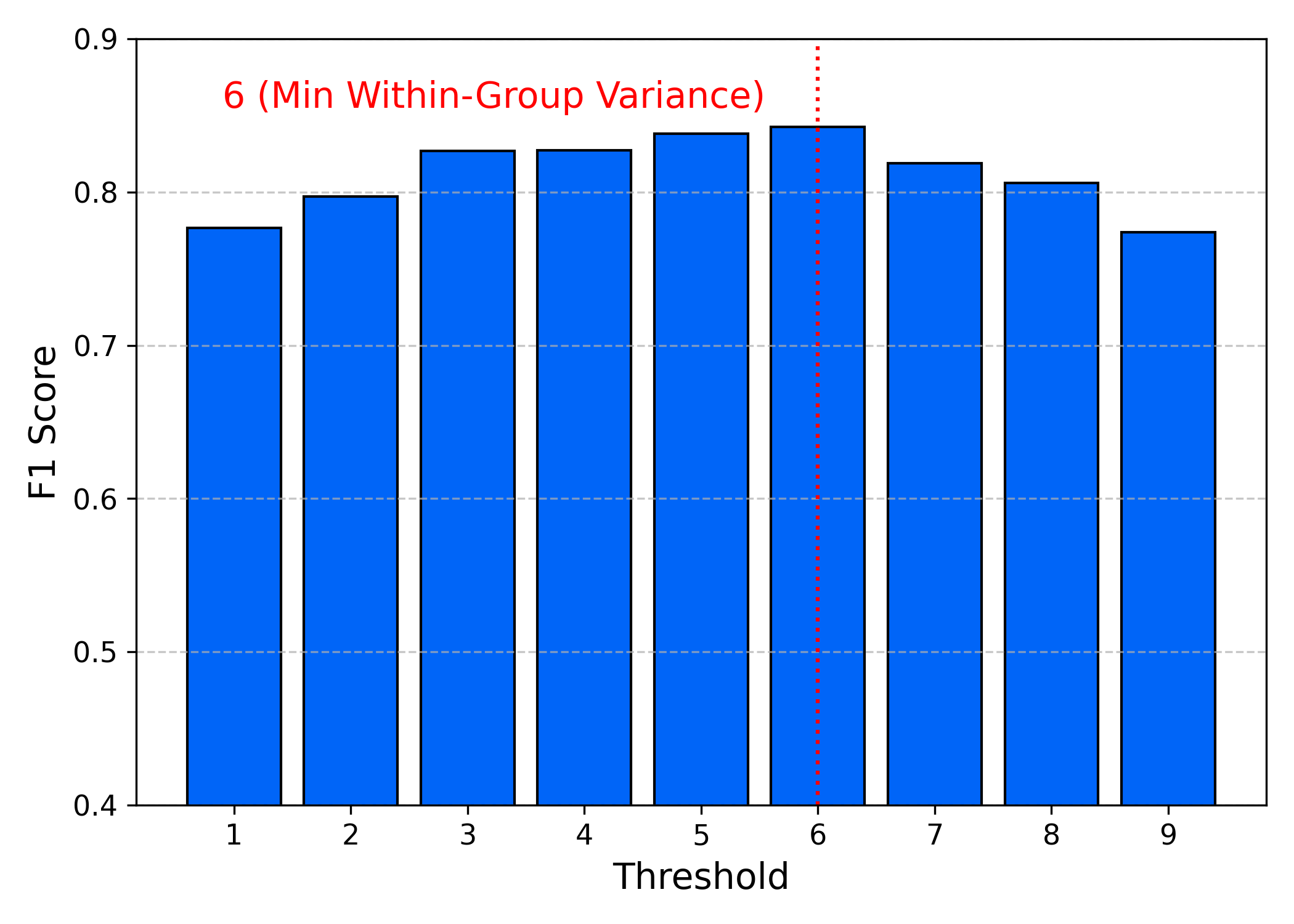} }
    \end{subfigure}
    \hspace{1em}
    \begin{subfigure}[CIFAR-10 - Imbalance rate=100]
        {\includegraphics[width=.33\textwidth]{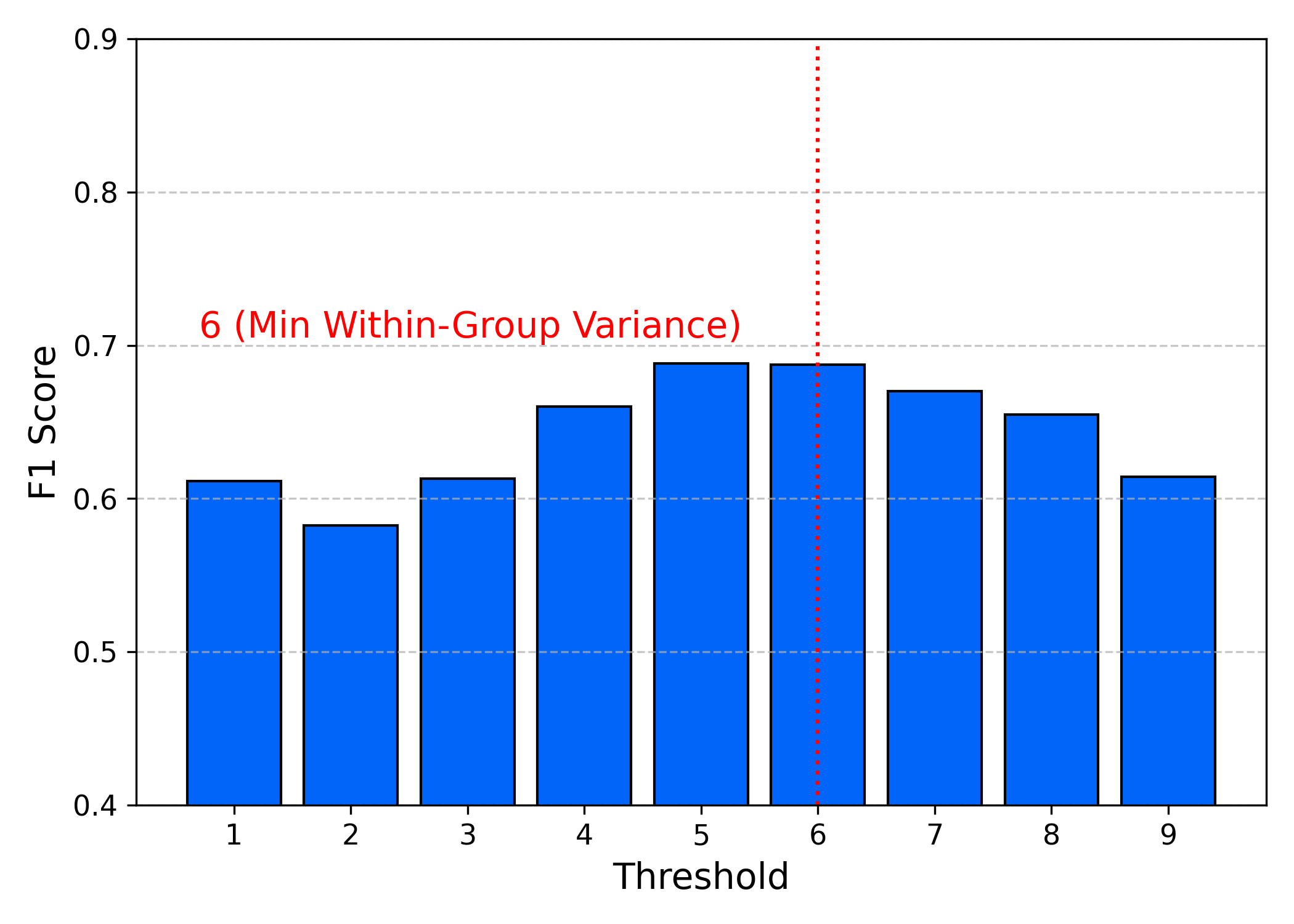}}
    \end{subfigure}
    
    \caption{Test accuracy versus the number of rarest classes assigned to the minority group on CIFAR-10-LT. The dashed red line at \(t=6\) marks the threshold chosen by our variance‐based grouping rule, which aligns with the peak test accuracy in both imbalance scenarios.}
    \label{fig:threshold_sweep_appendix}
\end{figure}

To confirm that our grouping rule reliably identifies the optimal split, we sweep the threshold \(t\in\{1,\dots,9\}\), i.e. the number of rarest classes assigned to the minority group, and record test accuracy on CIFAR-10-LT under imbalance rates 20 and 100. As shown in Figure~\ref{fig:threshold_sweep_appendix}, the threshold \(t=6\) selected by minimizing within‐group variance (vertical dashed red line) coincides with the highest performance in both settings, demonstrating that our data‐driven rule matches the empirical optimum without exhaustive search.

\begin{figure}[!h]
  \centering
  \subfigure[CIFAR-10-LT, imbalance 20, \(K=5\)]{
    \includegraphics[width=.37\textwidth]{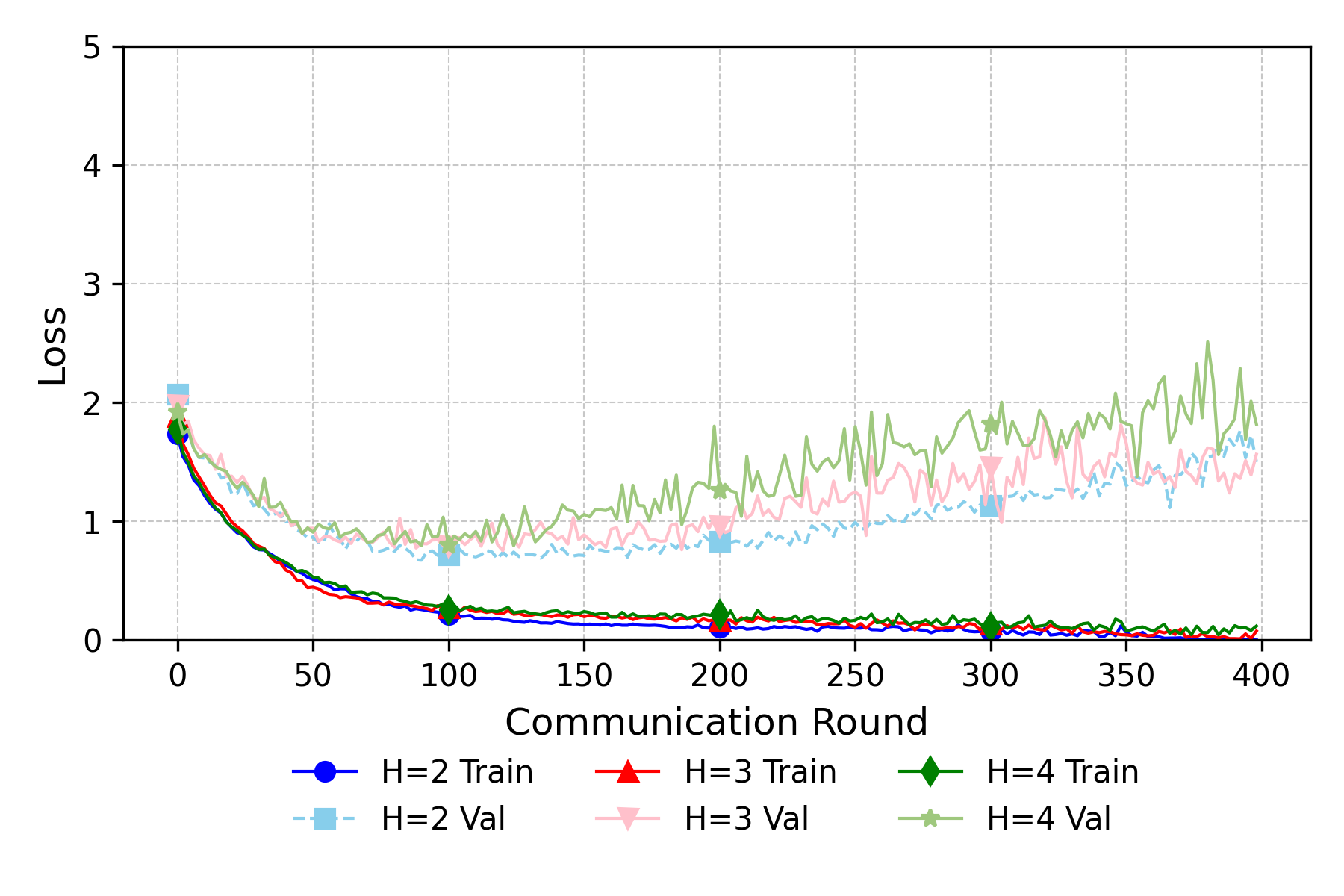}
  }
  \hspace{1em}
  \subfigure[CIFAR-100-LT, imbalance 20, \(K=20\)]{
    \includegraphics[width=.37\textwidth]{figs/CF100LT20_K20.png}
  }
  \caption{Training and validation loss for different numbers of groups \(H\in\{2,3,4\}\).}
  \label{fig:extra_group_plots}
\end{figure}

\subsection{The Effect of the Number of Groups}\label{appx:more_groups}
To confirm that two groups offer the best bias–variance trade-off beyond the setting in Figure~\ref{fig:loss_curve_by_H}, we repeat the analysis on other settings. Figure~\ref{fig:extra_group_plots} shows the training and validation losses when the number of groups \(H\) is set to \(\{2,3,4\}\) for CIFAR-10 and CIFAR-100.

In both benchmarks the pattern mirrors our earlier finding: validation loss stays lowest and most stable for \(H=2\), whereas \(H=3, 4\) starts to drift upward sooner, which is a signal of accelerated over-fitting. The training loss, by contrast, continues to fall for all values of \(H\), which widens the train–validation gap when more than two groups are used.  These results reinforce the conclusion that splitting classes into exactly two groups strikes the right balance between reducing gradient variance and controlling over-fitting for CIFAR-10 and CIFAR-100.

\section{Additional Experiment Results}\label{appx:additional_results}

In this appendix, we provide supplementary analyses to validate our design choices and assess the proposed algorithms under more challenging conditions. 

% Effectiveness of FedHOO
\subsection{Effectiveness of FedHOO}\label{appx:fedhoo_additional}

Table~\ref{tab:cdd_fedhoo} compares the company's metric obtained on the proprietary CDD benchmark when each federated optimizer runs with a fixed uniform resampling-rate versus when per-client rates are tuned by our FedHOO strategy. We include this result to demonstrate that FedHOO remains effective even when integrated with alternative training paradigms. FedHOO consistently boosts performance, confirming that our method is beneficial even when combined with optimizers other than FedCGNM.

\begin{table}[h]
    \centering
    \small
    \caption{Performance improvement on CDD benchmark with a uniform global sampling-rate versus the FedHOO.}
    \label{tab:cdd_fedhoo}
    \begin{tabular}{lccc}
    \toprule
    \textbf{Algorithm} & \textbf{Global Rate} & \textbf{FedHOO} & \textbf{improvement \%}\\
    \midrule
    FedAvg      & 72.83 & 85.37 & 17.22     \\
    Weighted CE & 76.58 & 87.20 & 13.86     \\
    Ratio Loss  & 77.68 & 87.59 & 12.76     \\
    FedCGN      & 85.37 & 88.25 & 3.37      \\
    FedCGNM     & 86.88 & 90.87 & 4.59      \\
    \bottomrule
    \end{tabular}
\end{table}

To further isolate the benefit of FedHOO as a sampling-rate tuner, we compare it against two local rate-selection alternatives in the small-client regime. The first is a FAST-type \citep{wang2023fast} local exploration strategy, where each client independently explores its own rate using a bandit-style reward based on local loss decrease. The second is a locally optimal strategy, where each client independently selects its best local rate according to local validation and the resulting client-wise rates are combined for FL training. These baselines capture natural alternatives that tune resampling-rates locally rather than jointly.

Tables~\ref{tab:fedhoo_fast_cifar10} and~\ref{tab:fedhoo_fast_cifar100} show that FedHOO consistently outperforms both alternatives across FL optimizers. This supports the central design motivation of FedHOO: in FL, the best client-specific rates cannot generally be recovered by optimizing each client independently, because the final global model depends on interactions induced by federated aggregation. FedHOO explicitly searches this joint combinatorial space, while local strategies ignore cross-client interactions.

\begin{table}[h]
\centering
\small
\caption{FedHOO versus local rate-selection alternatives on CIFAR-10 with LT20 under Non-IID partitioning ($\psi=0.5$).}
\label{tab:fedhoo_fast_cifar10}
\begin{tabular}{lcccc}
\toprule
Method & w/o FedHOO & FedHOO & FAST-type & Locally Optimal \\
\midrule
FedAvg      & 0.7845 & \textbf{0.8026} & 0.7363 & 0.7690 \\
FedProx     & 0.7826 & \textbf{0.7966} & 0.7445 & 0.7664 \\
FedGraB     & 0.8166 & \textbf{0.8213} & 0.7552 & 0.8001 \\
Weighted CE & 0.7622 & \textbf{0.7758} & 0.6892 & 0.7595 \\
Ratio Loss  & 0.7904 & \textbf{0.8089} & 0.7057 & 0.7884 \\
FedCGNM     & 0.8316 & \textbf{0.8381} & 0.7788 & 0.8152 \\
\bottomrule
\end{tabular}
\end{table}

\begin{table}[h]
\centering
\small
\caption{FedHOO versus local rate-selection alternatives on CIFAR-100 with LT20 under Non-IID partitioning ($\psi=0.5$).}
\label{tab:fedhoo_fast_cifar100}
\begin{tabular}{lcccc}
\toprule
Method & w/o FedHOO & FedHOO & FAST-type & Locally Optimal \\
\midrule
FedAvg      & 0.4150 & \textbf{0.4278} & 0.4056 & 0.4120 \\
FedProx     & 0.4089 & \textbf{0.4285} & 0.3957 & 0.4054 \\
FedGraB     & 0.3733 & \textbf{0.3852} & 0.3538 & 0.3648 \\
Weighted CE & 0.3480 & \textbf{0.3789} & 0.3229 & 0.3253 \\
Ratio Loss  & 0.4093 & \textbf{0.4216} & 0.3658 & 0.3798 \\
FedCGNM     & 0.4351 & \textbf{0.4427} & 0.4068 & 0.4026 \\
\bottomrule
\end{tabular}
\end{table}

\subsection{Resampling-rates in FedHOO}\label{appx:resampling_rates_analysis}

\begin{figure}[h]

    % --- Row 1 ---
    \centering
    \hfill
    \begin{subfigure}[CIFAR-10-LT20]
        {\includegraphics[width=0.23\textwidth]{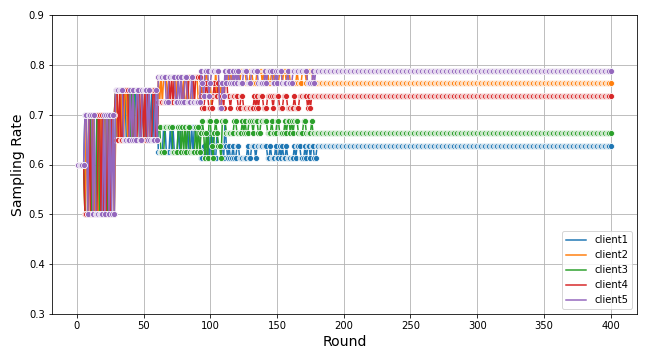} }
    \end{subfigure}
    \hfill
    \begin{subfigure}[CIFAR-10-LT100]
        {\includegraphics[width=0.23\textwidth]{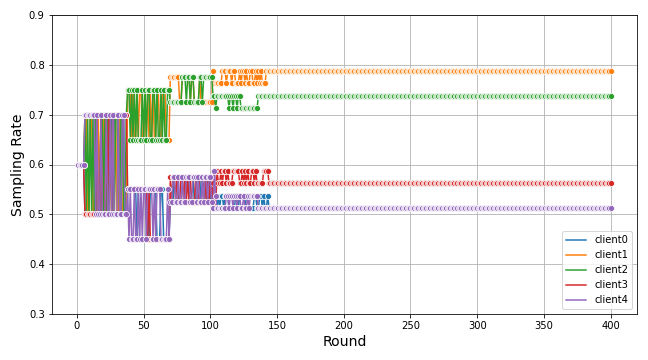} }
    \end{subfigure}
    \hfill
    \begin{subfigure}[CIFAR-100-LT20]
        {\includegraphics[width=0.23\textwidth]{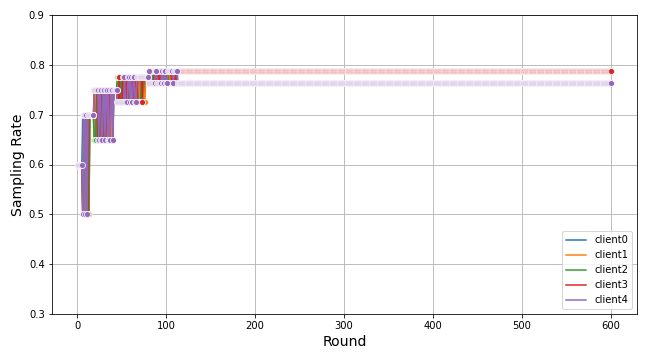} }
    \end{subfigure}
    \hfill
    \begin{subfigure}[CIFAR-100-LT100]
        {\includegraphics[width=0.23\textwidth]{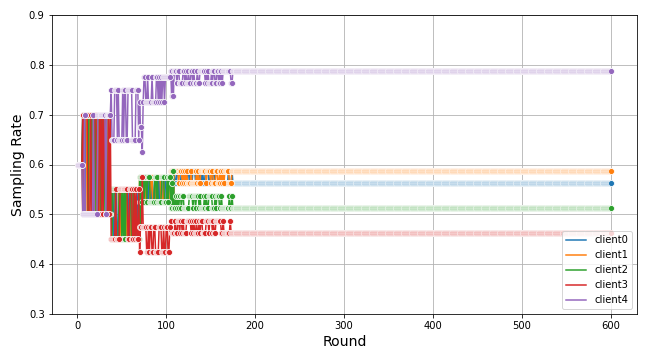} }
    \end{subfigure}
    \hfill
    \vspace{0.8em}
    
    % --- Row 2 ---
    \centering
    \hfill
    \begin{subfigure}[Adult-LT3]
        {\includegraphics[width=0.23\textwidth]{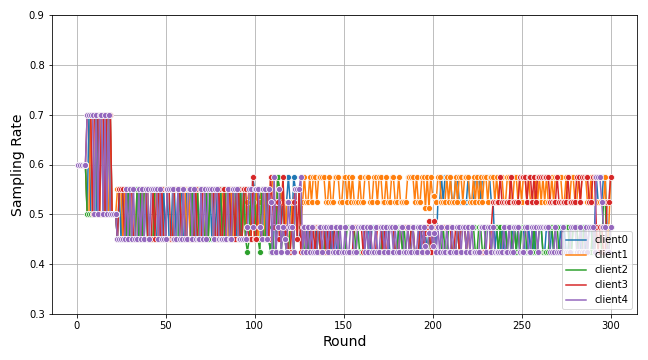} }
    \end{subfigure}
    \hfill
    \begin{subfigure}[Adult-LT10]
        {\includegraphics[width=0.23\textwidth]{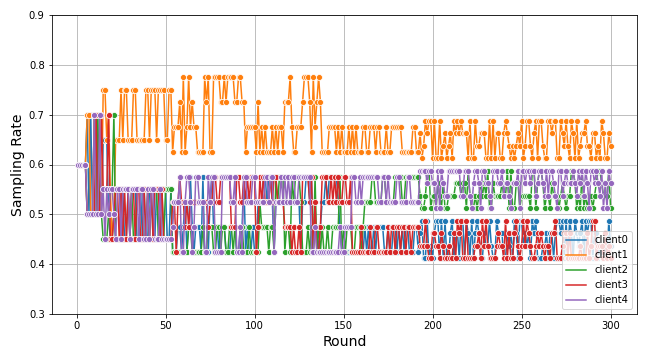} }
    \end{subfigure}
    \hfill
    \begin{subfigure}[Adult-LT20]
        {\includegraphics[width=0.23\textwidth]{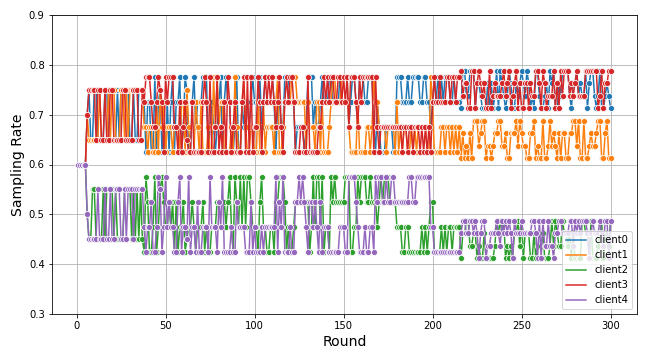} }
    \end{subfigure}
    \hfill
    \begin{subfigure}[UNSW]
        {\includegraphics[width=0.23\textwidth]{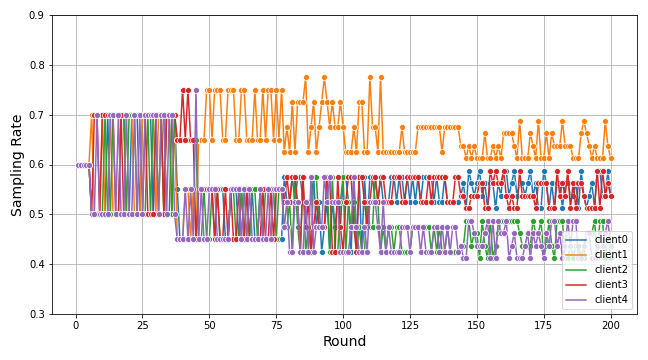} }
    \end{subfigure}
    \hfill
    
    \caption{Per-client resampling-rate trajectories selected by FedHOO across datasets and imbalance settings.}
    \label{fig:resampling_rate_plots}
\end{figure}

To better understand and illustrate the benefit of FedHOO, we plot and compare the trajectories of sampling-rate selection under FedHOO and standard HOO when each search method is run throughout training. In FedHOO, the selected sampling-rates correspond to the candidates yielding the best-performing model in each round, which is then adopted as the subsequent training initialization. As outlined in the implementation details, we restrict the search tree to a maximum depth of five.

Figure~\ref{fig:resampling_rate_plots} illustrates how FedHOO adaptively selects per-client resampling-rates across different datasets and imbalance severities, and Figure~\ref{fig:resampling_rate_hoo} shows how HOO selects per-client resampling-rates on CIFAR-10-LT20 dataset. Each curve corresponds to one client, with the $y$-axis showing its selected sampling-rate, and the $x$-axis showing the communication rounds.

\begin{figure}[ht]
    \centering
    \includegraphics[width=.4\linewidth]{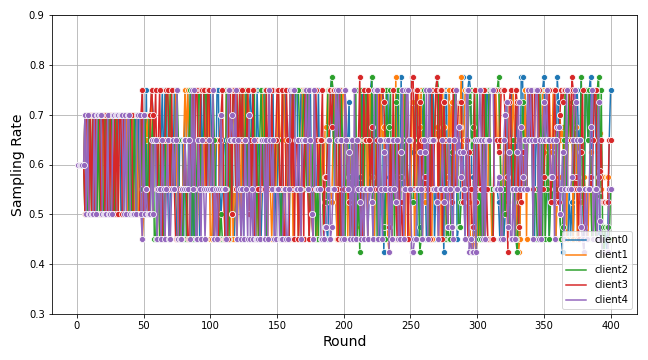}  
    \caption{Per-client resampling-rate trajectories selected by standard HOO on CIFAR-10 iid setting with imbalance rate $\xi=20$.}
    \label{fig:resampling_rate_hoo}
\end{figure}

Under standard HOO, the algorithm explores only one branch of the search tree per round. As a result, the sampling-rate assigned to each client fluctuates heavily throughout training, and fail to converge in reasonable training time. The per-client trajectories remain noisy even after hundreds of rounds, reflecting the limited feedback that HOO gathers in each step.

In contrast, FedHOO leverages federated parallelism: by evaluating two candidate rates per client and linearly combining their updates, it effectively observes all $2^K$ combinations at once. This parallel exploration dramatically accelerates the search. The sampling-rates quickly stabilize after the initial rounds, with each client settling into a distinct but consistent rate. The stabilized patterns observed across datasets in Figure~\ref{fig:resampling_rate_plots} confirm that FedHOO both reduces variance and identifies effective configurations much earlier in training.

Overall, these plots highlight the key difference: while HOO suffers from slow and noisy adaptation due to sequential exploration, FedHOO achieves rapid and stable convergence by exploiting the structure of federated training.

\subsection{Alignment Effect of FedCGNM}
\label{appx:alignment_effect}

Beyond mitigating directional noise, FedCGNM can also improve \emph{client alignment}, which is a primary cause of performance degradation in federated learning \citep{dandi2022implicit}. Under heterogeneous and imbalanced data, clients often produce updates whose class-wise magnitudes differ substantially, which can amplify misalignment after aggregation. FedCGNM addresses this issue by normalizing each group update to a common unit scale, making client contributions more comparable and improving the alignment of aggregated directions.

\begin{figure}[h]
  \centering
  \subfigure[CIFAR-10-LT, imbalance 20, \(K=20\)]{
    \includegraphics[width=.42\textwidth]{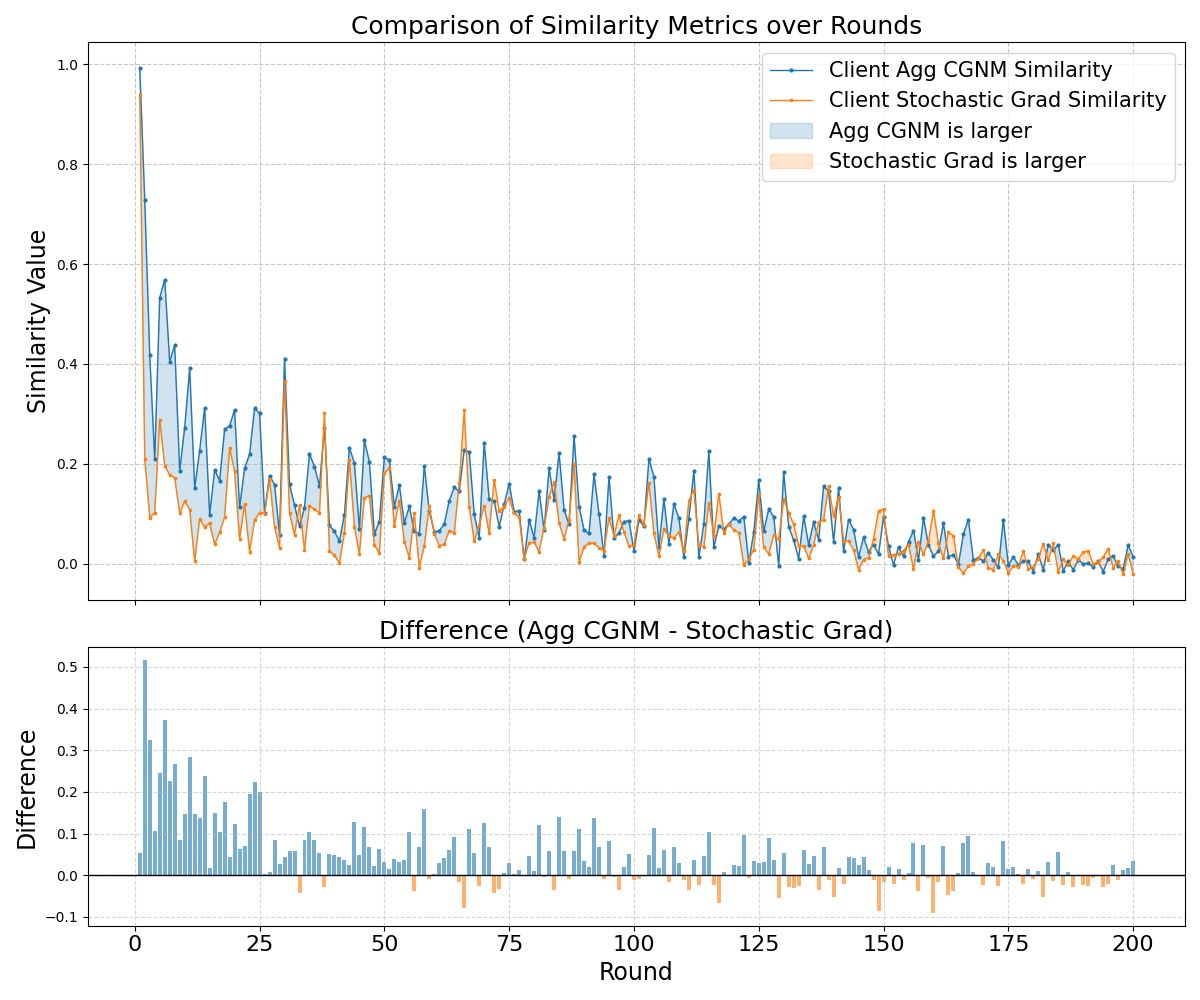}
  }
  \hspace{1em}
  \subfigure[Adult Income, imbalance 3.17, \(K=5\)]{
    \includegraphics[width=.42\textwidth]{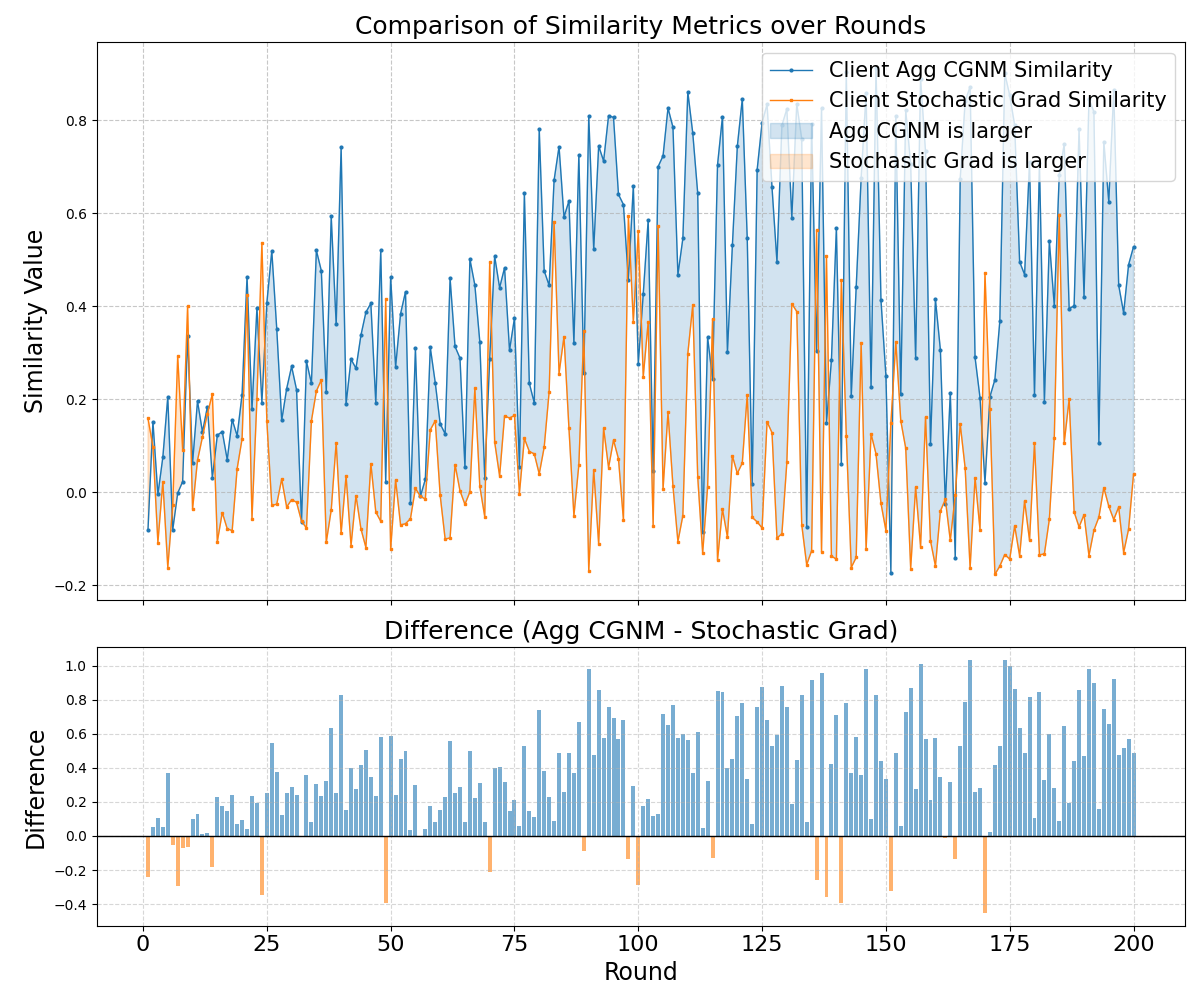}
  }
  \caption{Alignment comparison over rounds. The top panel plots similarity values for FedCGNM-induced updates versus stochastic-gradient updates, and the bottom panel shows their difference (FedCGNM minus stochastic gradient). Positive differences dominate, indicating improved alignment under FedCGNM.}
  \label{fig:alignment_appendix}
\end{figure}

% \begin{figure}[h]
%     \centering
%     \includegraphics[width=0.5\linewidth]{figs/improved_similarity_comparison.png}
%     \caption{Alignment comparison over rounds. The top panel plots similarity values for FedCGNM-induced updates versus stochastic-gradient updates, and the bottom panel shows their difference (FedCGNM minus stochastic gradient). Positive differences dominate, indicating improved alignment under FedCGNM.}
%     \label{fig:alignment_appendix}
% \end{figure}

To empirically assess this effect, we track an alignment metric over communication rounds and compare it against a stochastic-gradient baseline. At each round \(t\), we compute the cosine similarity between a client update direction and a server's update direction (the corresponding aggregated update). We report (i) the similarity induced by FedCGNM updates and (ii) the similarity induced by stochastic-gradient updates under the same training pipeline. Figure~\ref{fig:alignment_appendix} shows that FedCGNM typically achieves higher similarity across rounds, and the per-round difference (bottom plot) is predominantly positive, supporting that FedCGNM improves update alignment across clients.

\subsection{FedCGNM in Large-Federation Regimes}\label{appx:fedcgnm_in_noniid_largeFL}

To further assess robustness, we evaluate FedCGNM under the challenging condition beyond the main experiments: large numbers of clients with partial participation. 

% \paragraph{Non-IID distributions.} 
% We simulate client heterogeneity using Dirichlet partitioning with concentration $\alpha = 0.5$. Table~\ref{tab:non_iid} shows results on CIFAR-10 and CIFAR-100 with imbalance rate $20$. Across both small ($K=5$) and moderate ($K=20$) federations, FedCGNM consistently outperforms FedAvg, FedCGN, and other strong baselines, demonstrating its ability to remain effective even when client data distributions are highly skewed.

% \begin{table}[h]
%     \small
%     \centering
%     \caption{Performance under non-IID distributions (Dirichlet $\alpha=0.5$).}
%     \label{tab:non_iid}
%     \begin{tabular}{lcccccccc}
%     \toprule
%     Dataset & $K$ & Imbalance & FedAvg & FedCGNM & FedCGN & Weighted CE & Ratio Loss \\
%     \midrule
%     CIFAR-10 & 5  & 20 & 0.7845 & \textbf{0.8316} & 0.8223 & 0.8071 & 0.7952 \\
%              & 20 & 20 & 0.6942 & \textbf{0.7437} & 0.7361 & 0.5616 & 0.6518 \\
%     CIFAR-100& 5  & 20 & 0.4150 & \textbf{0.4351} & 0.4210 & 0.3511 & 0.4083\\
%              & 20 & 20 & 0.3347 & \textbf{0.3463} & 0.3390 & 0.2604 & 0.2977 \\
%     \bottomrule
%     \end{tabular}
% \end{table}

\paragraph{Large-client regime with partial participation.}
We evaluate FedCGNM when the number of clients is large ($K=100$) and only small fraction participate (10\%) per round, which is a scenario that amplifies variance and typically harms optimization. We also use $E=1$ for this scenario. Table~\ref{tab:largek} reports results on CIFAR-10-LT and CIFAR-100-LT with imbalance rates $20$ and $100$.
While all methods degrade in this setting, FedCGNM consistently achieves the best performance, surpassing both reweighting-based and per-group gradient-normalization baselines.

\begin{table}[h]
    \small
    \centering
    \caption{Performance in large-client federations with partial participation.}
    \label{tab:largek}
    \begin{tabular}{lcccc}
    \toprule
    Method & C10-LT20 & C10-LT100 & C100-LT20 & C100-LT100 \\
    \midrule
    FedAvg + U  & 0.3903 & 0.2710 & 0.0830 & 0.0471 \\
    FedProx + U & 0.3901 & 0.2727 & 0.0874 & 0.0485 \\
    Weighted CE + U & 0.4423 & 0.3131 & 0.0962 & 0.0533 \\
    Ratio Loss + U & 0.3306 & 0.2266 & 0.0847 & 0.0464 \\
    FedGraB + U & 0.4121 & 0.2971 & 0.0963 & 0.0345 \\
    CLIMB + U   & 0.4376 & 0.3305 & 0.1207 & 0.0655 \\
    FedCGN + U & 0.4441 & 0.2994 & 0.1226 & 0.0696 \\
    FedCGNM + U & \textbf{0.4675} & \textbf{0.3331} & \textbf{0.1227} & \textbf{0.0725} \\
    \bottomrule
    \end{tabular}
\end{table}

Together, these results demonstrate that FedCGNM retains its advantage in more challenging federated environments: it scales to non-IID data and large-client regimes where baseline methods suffer the most.

\subsection{Robustness under Various Data Heterogeneity}
\label{appx:heterogeneity_robustness}

To further evaluate robustness under different degrees of client heterogeneity, we conduct additional experiments on CIFAR-100-LT20 with $K=5$ clients. Table~\ref{tab:cifar100_lt20_heterogeneity} reports the results under two heterogeneous data partitions. FedCGNM consistently outperforms the baselines, and FedHOO further improves performance in both settings. These results suggest that the proposed class-grouped normalized momentum remains effective under different degree of heterogeneity, while FedHOO provides additional gains by selecting client-specific resampling-rates.

\begin{table}[h]
\centering
\small
\caption{Final test accuracy on CIFAR-100-LT20 with $K=5$ clients under different heterogeneity settings.}
\label{tab:cifar100_lt20_heterogeneity}
\begin{tabular}{lcc}
\toprule
Method & $\psi=1.0$ & $\psi=0.1$ \\
\midrule
FedAvg + U            & 0.3865 & 0.3654 \\
FedProx + U           & 0.3921 & 0.3683 \\
Weighted CE + U       & 0.3568 & 0.3444 \\
Ratio Loss + U        & 0.3898 & 0.3625 \\
CLIMB + U             & 0.4010 & 0.3804 \\
FedGraB + U           & 0.3674 & 0.3169 \\
FedCGN + U            & 0.4176 & 0.4121 \\
FedCGNM + U           & 0.4252 & 0.4165 \\
FedCGNM + FedHOO      & \textbf{0.4427} & \textbf{0.4423} \\
\bottomrule
\end{tabular}
\end{table}

\subsection{Per-Class Accuracy Distribution}\label{appx:per_class_acc_dist}

\begin{figure}[h]
    \centering
    \includegraphics[width=.5\linewidth]{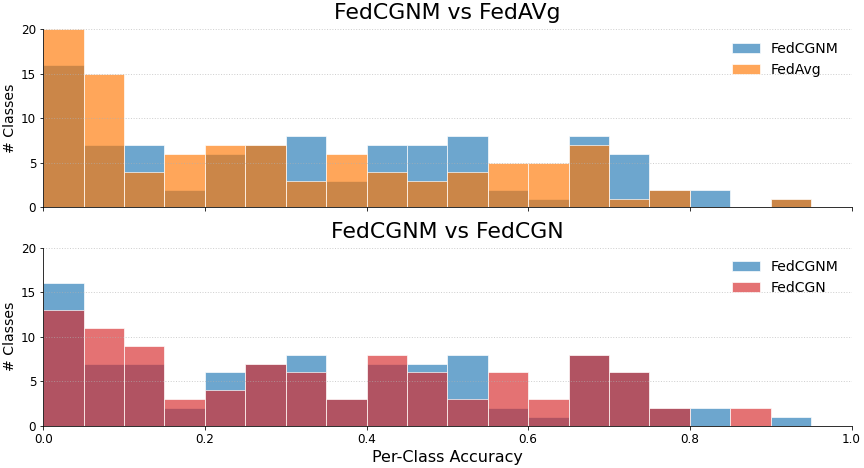}  
    \caption{Distribution of per-class test accuracies on CIFAR-100-LT (imbalance rate=100, K=5).}
    \label{fig:class_acc_dist}
\end{figure}

From Figure~\ref{fig:class_acc_dist}, FedCGNM (blue) shifts the per-class accuracy distribution markedly to the right compared to FedAvg (orange), substantially reducing the number of near-zero classes. Whereas FedAvg produces a heavy tail of classes below 0.1 accuracy, FedCGNM elevates most classes into a moderate accuracy range. FedCGN also achieves a more balanced distribution, confirming the effectiveness of the grouping strategy in balancing class-wise performance.

\subsection{Empirical Diagnostics for Assumption~\ref{assumption:alignment}}
\label{appx:assumption_empirical}

Assumption~\ref{assumption:alignment} excludes the degenerate case where the group-wise momentum is both nearly perfectly aligned with the corresponding gradient and nearly identical in magnitude to the scaled gradient. We empirically examine this condition by logging the two quantities appearing in the assumption: the gradient--momentum cosine similarity and the magnitude gap.

Figure~\ref{fig:assumption35_empirical} shows the results on AdultIncome and CIFAR-100-LT. On AdultIncome, the cosine similarity is often high, but the magnitude gap remains consistently positive and non-negligible. Thus, the magnitude-gap branch of the assumption is supported. On CIFAR-100-LT, the cosine similarity remains far below perfect alignment throughout training, and the magnitude gap is also positive. Thus, the low-alignment branch is strongly supported. These complementary behaviors suggest that Assumption~\ref{assumption:alignment} is mild in practice, indicating that the gradient and momentum are not simultaneously perfectly aligned and equal in magnitude along the observed training trajectories.

\begin{figure}[h]
    \centering
    \subfigure[AdultIncome.]{
        \includegraphics[width=.6\linewidth]{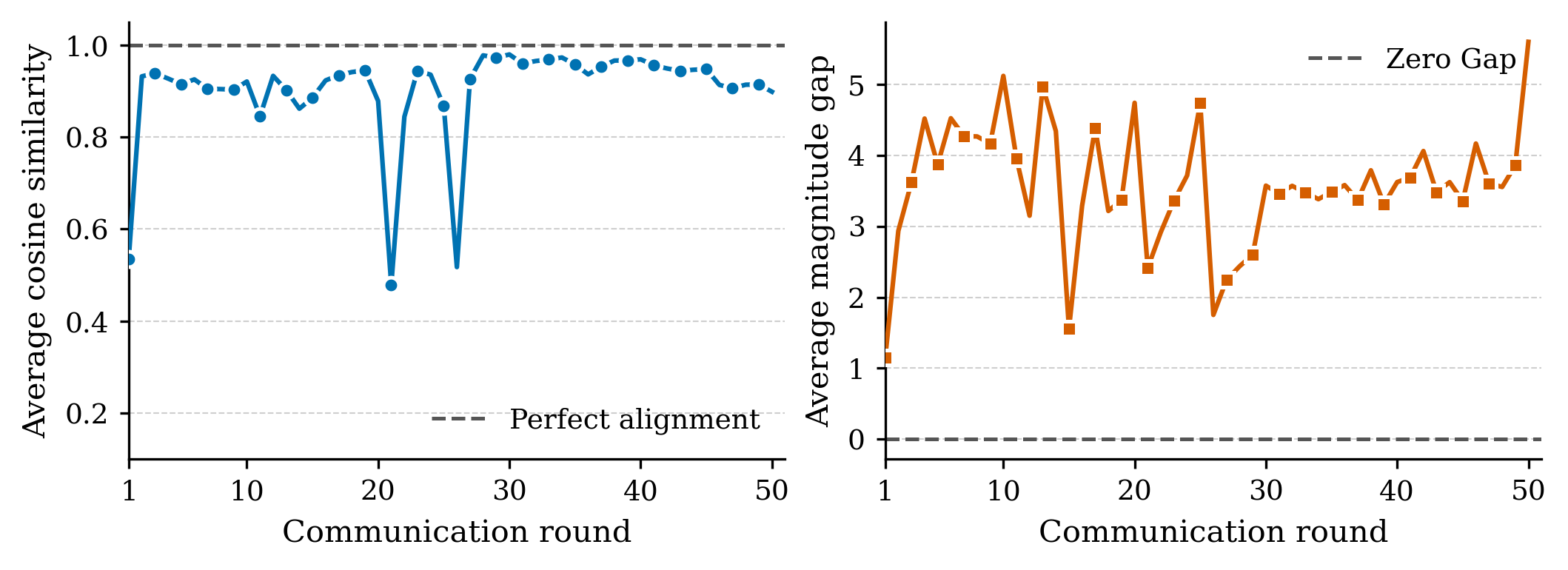}
    }
    \vspace{0.5em}
    \subfigure[CIFAR-100-LT20.]{
        \includegraphics[width=.6\linewidth]{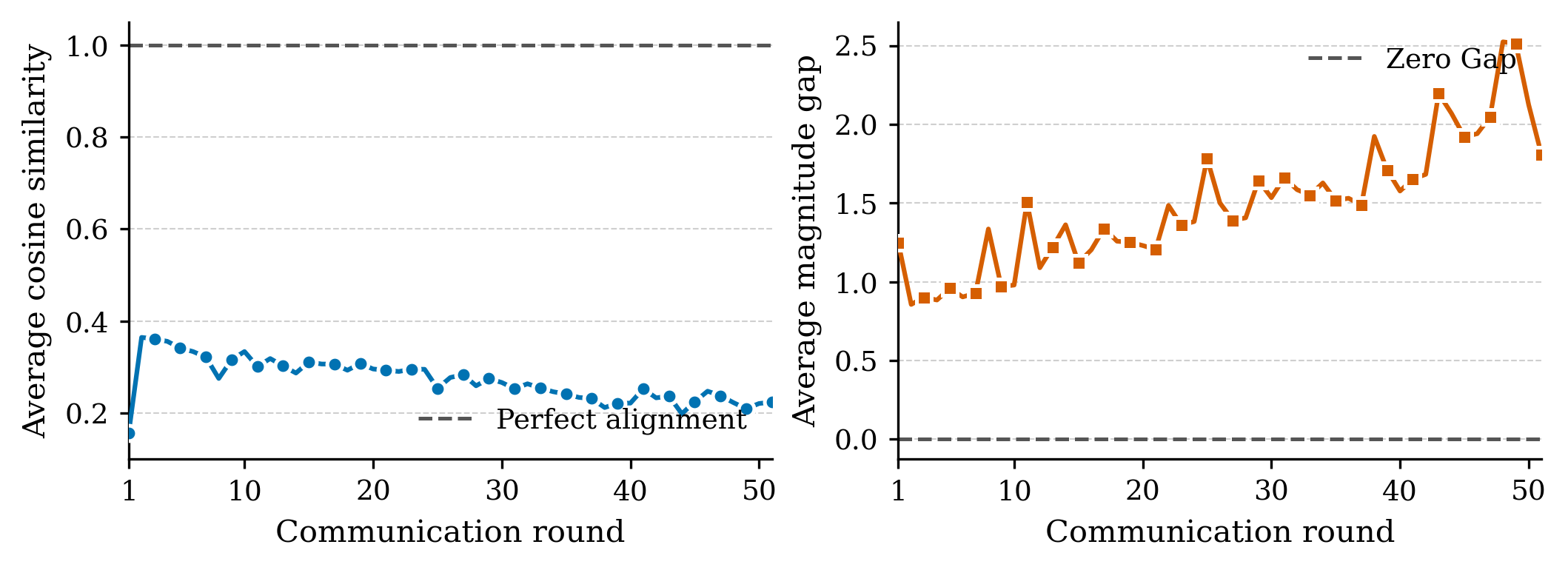}
    }
    \caption{
    Empirical diagnostics for Assumption~\ref{assumption:alignment}. In both cases, the degenerate behavior does not appear in the observed trajectory.
    }
    \label{fig:assumption35_empirical}
\end{figure}

\end{document}